\newif\ifshortver
\newcommand{\ifshort}[2]{\ifshortver#1\else#2\fi}
\title{TaSIL: Taylor Series Imitation Learning}
\newcommand*\samethanks[1][\value{footnote}]{\footnotemark[#1]}
\author[3]{Daniel Pfrommer\thanks{Equal contribution}}
\author[1]{Thomas T.C.K.\ Zhang\samethanks}
\author[2]{Stephen Tu}
\author[1,2]{Nikolai Matni}
\affil[1]{Department of Electrical and Systems Engineering, University of Pennsylvania}
\affil[2]{Google Brain Robotics}
\affil[3]{Massachusetts Institute of Technology\thanks{This work was done while the author was affiliated with the University of Pennsylvania}}
\date{\vspace{-0.3cm}}
\begin{document}

\maketitle

\vspace{-1.2cm}

\begin{abstract}
We propose Taylor Series Imitation Learning (TaSIL), a simple augmentation to standard behavior cloning
losses in the context of continuous control.
TaSIL penalizes deviations in the higher-order Taylor series terms between
the learned and expert policies.  We show that experts satisfying a notion of \emph{incremental input-to-state stability} are easy to learn, in the sense that a small TaSIL-augmented imitation loss over expert trajectories guarantees a small imitation loss over trajectories generated by the learned policy.  We provide generalization
bounds for TaSIL that scale as $\tilde\calO(1/n)$ in the realizable setting, for $n$ the number of expert demonstrations. Finally, we demonstrate experimentally the relationship between the robustness of the expert policy and the order of Taylor expansion required in TaSIL, and compare standard Behavior Cloning, DART, and DAgger with TaSIL-loss-augmented variants.  In all cases, we show significant improvement over baselines across a variety of MuJoCo tasks.
\end{abstract}

\section{Introduction}\label{sec: intro}

Imitation learning (IL), wherein expert demonstrations are used to train a policy \citep{hussein2017imitation,osa2018algorithmic}, has been successfully applied to a wide range of tasks, including self-driving cars \citep{pomerleau1989alvinn,codevilla2018end}, robotics \citep{schaal1999imitation}, and video game playing~\citep{ross2011reduction}.  While IL is typically more sample-efficient than reinforcement learning-based alternatives, it is also known to be sensitive to distribution shift: small errors in the learned policy can lead to compounding errors and ultimately, system failure \citep{pomerleau1989alvinn,ross2011reduction}.  In order to mitigate the effects of distribution shift caused by policy error, two broad approaches have been taken by the community.  On-policy approaches such as DAgger~\citep{ross2011reduction} augment the data-set with expert-labeled or corrected trajectories generated by learned policies. In contrast, off-policy approaches such as DART~\citep{laskey2017dart} and GAIL~\citep{ho2016generative} augment the data-set by perturbing the expert controlled system.  In both cases, the goal is to provide examples to the learned policy of how the expert recovers from errors.  While effective, these methods either require an interactive expert (on-policy), or access to a simulator for policy rollouts during training (off-policy), which may not always be practically feasible.  

In this work, we take a more direct approach towards mitigating distribution shift.  Rather than providing examples of how the expert policy recovers from error, we seek to endow a learned policy with the robustness properties of the expert directly.  In particular, we make explicit the underlying assumption in previous work that a good expert is able to recover from perturbations through the notion of \emph{incremental input-to-state stability}, a well-studied control theoretic notion of robust nonlinear stability. Under this assumption, we show that if the $p$-th order Taylor series approximation of the learned and expert policies approximately match on expert-generated trajectories, then the learned policy induces closed-loop behavior similar to that of the expert closed-loop system. Here the order $p$ is determined by the robustness properties of the expert, and makes quantitative the informal observation that more robust experts are easier to learn. Fundamentally, we seek to characterize settings where offline imitation learning is Probably Approximately Correctly (PAC)-learnable~\citep{shalev2014understanding}; that is, defining notions of expert data and designing algorithms such that low training time error on offline expert demonstrations implies low test time error along the learned policy's trajectories, in spite of distribution shift.

\ifshort{\vspace{-1em}}{}
\paragraph{Contributions}
We propose and analyze Taylor Series Imitation Learning (TaSIL), which augments the standard behavior cloning imitation loss to capture errors between the higher-order terms of the Taylor series expansion of the learned and expert policies.  We reduce the analysis of TaSIL to the analysis of a supervised learning problem over expert data: in particular, we identify a robustness criterion for the expert such that a small TaSIL-augmented imitation loss over expert trajectories guarantees that the difference between trajectories generated by the learned and expert policies is also small.  We also provide a finite-difference based approximation that is applicable to experts that cannot directly query their higher-order derivatives, expanding the practical applicability of TaSIL.  We show in the realizable setting that our algorithm achieves generalization bounds scaling as $\tilde\calO(1/n)$, for $n$ the number of expert demonstrations. 
Finally, we empirically demonstrate (i) that the relationship
between the robustness of the expert policy and the order of Taylor expansion
required in TaSIL predicted by our theory is observed in practice, and (ii) the benefits of using the TaSIL-augmented imitation loss by comparing the sample-efficiency of standard and TaSIL-loss-augmented behavior cloning, DART, and DAgger on a variety of MuJoCo tasks: on hard instances where behavior cloning fails, the TaSIL-augmented variants show significant performance gains. 

\ifshort{\subsection*{Related work}}{\subsection{Related work}}

\label{sec: related work}

\paragraph{Imitation learning} Behavior cloning is known to be sensitive to compounding errors induced by small mismatches between the learned and expert policies~\citep{pomerleau1989alvinn,ross2011reduction}. On-policy \citep{ross2011reduction} and off-policy \citep{laskey2017dart,ho2016generative} approaches exist that seek to prevent this distribution shift by augmenting the data-set created by the expert.  While DAgger is known to enjoy $\tilde\calO(T)$ sample-complexity in the task horizon $T$ for loss functions that are strongly convex in the policy parameters,\footnote{This bound degrades to $\tilde\calO(T^2)$ when loss function is only convex, and does not hold for the nonconvex loss functions we consider.} we are not aware of finite-data guarantees for DART or GAIL. 
In addition to these seminal papers, there is a body of work that seeks to leverage control theoretic techniques~\citep{hertneck2018learning,yin2020imitation} to ensure (robust) stability of the learned policy.  More closely related to our work are the results by~\citet{ren2021generalization} and~\citet{tu2021sample}.  In \citet{ren2021generalization}, a two-stage pipeline of imitation learning
followed by policy optimization via a PAC-Bayes regularization term is used to provide generalization bounds of the learned policy across randomly drawn environments. This work is mostly complementary to ours, as
TaSIL could in principle be used to augment the imitation losses used
in the first stage of their pipeline (leaving the second stage unmodified).

In~\citet{tu2021sample}, sample-complexity bounds for IL are provided under the assumption that the learned policy can be explicitly constrained to match the incremental stability properties of the expert.  While conceptually appealing, practically enforcing such stability constraints is difficult, and as such~\citet{tu2021sample} resort to heuristics in their implementation.  In contrast, we provide sample-complexity guarantees for a practically implementable algorithm, and under much milder stability assumptions.

\ifshort{\vspace{-1em}}{}
\paragraph{Robust stability and learning for continuous control} 
There is a rich body of work applying Lyapunov stability or contraction theory~\citep{lohmiller1998contraction} to learning for continuous control.   
For example~\citep{singh2020learning,lemme2014neural,ravichandar2017learning,sindhwani2018learning} use stability-based regularizers to trim the hypothesis space, and empirically show that this leads to more sample-efficient and robust learning methods.
Lyapunov stability and contraction theory have also been used to provide finite sample-complexity guarantees for adaptive nonlinear control~\citep{boffi2020regret} and learning stability certificates~\citep{boffi2020learning}.  
\ifshort{}{
\paragraph{Derivative matching in knowledge distillation}
When the expert and learned controller classes are neural networks, TaSIL can be viewed as a knowledge transfer/distillation algorithm~\citep{gou2021knowledge}. In this context, matching the learner's $\pder{\text{ output}}{\text{ input}}$ derivatives to the teacher's has been introduced under the labels of ``Jacobian Matching'' \citep{srinivas2018knowledge} and ``Sobolev Training'' \citep{czarnecki2017sobolev}. In fact, \cite{czarnecki2017sobolev} demonstrate experiments distilling RL agents, applied to Atari games. The idea of matching derivatives to improve sample-efficiency goes at least as far back as ``Explanation-Based Neural Network Learning'' (EBNN) introduced by \cite{mitchell1992explanation}. However, these works lack finite-sample guarantees and do not address the distribution shift issue, which are the key points of our work.

}

\newcommand{\state}{x}

\section{Problem formulation}
\label{sec: problem formulation}



We consider the nonlinear discrete-time dynamical system
\begin{equation}\label{eq: DT system}
    x_{t+1} = f(x_t, u_t), \, x_0 = \xi,
\end{equation}
where $x_t \in \R^d$ is the system state,  $u_t \in \R^m$ is the control input, and $f: \R^d \times \R^m \to \R^d$ defines the system dynamics.  We study system \eqref{eq: DT system} evolving under the control input $u_t = \pi(x_t) + \Delta_t$, for $\pi:\R^d \to \R^m$ a suitable control policy, and $\Delta_t \in \R^m$ an additive input perturbation (that will be used to capture policy errors).  We define the corresponding \emph{perturbed} closed-loop dynamics by $x_{t+1} = \fcl{\pi}(x,\Delta_t):=f(x,\pi(x) + \Delta_t)$, and use $\state_t^\pi(\xi, \{\Delta_s\}_{s=0}^{t-1})$ to denote the value of the state $x_t$ at time $t$ evolving under control input $u_t = \pi(x) + \Delta_t$ starting from initial condition $x_0 = \xi$.  
To lighten notation, we use $\state^\pi_t(\xi)$ to denote $\state_t^\pi(\xi, \{0\}_{s=0}^{t-1})$, and overload $\norm{\cdot}$ to denote the Euclidean norm for vectors and the operator norm for matrices and tensors.

Initial conditions $\xi$ are assumed to be sampled randomly from
a distribution $\calD$ with support restricted to a compact set $\calX$.  We assume access to $n$ rollouts of length $T$ from an expert $\pi_\star$, generated by drawing initial conditions $\{\xi_i\}_{i=1}^n$ i.i.d.\ from $\calD$. 
The IL task is to learn a policy $\hat{\pi}$ which leads to a closed-loop system with similar behavior to that induced by the expert policy $\pi_\star$ as measured by the \emph{expected imitation gap} $\Ex_\xi \max_{1\leq t \leq T} \norm{\state_t^{\hat{\pi}}(\xi) - \state_t^{\pi_\star}(\xi)}$.

The baseline approach to IL, typically referred to as \emph{behavior cloning} (BC), casts the problem as an instance of supervised learning.  Denote the discrepancy between an evaluation policy $\bar \pi$ and the expert policy $\pi_\star$ on a trajectory generated by a rollout policy $\pi_d$ starting at initial condition $\xi$ by $\Delta^{\pi_d}_t(\xi; \bar \pi) := \bar\pi(\state_t^{\pi_d}(\xi)) - \pi_\star(\state_t^{\pi_d}(\xi))$.  BC directly solves
the supervised empirical risk minimization (ERM) problem
\ifshort{\begin{align*}
    \hat{\pi}_{\mathsf{bc}} \in \argmin_{\pi \in \Pi} \tfrac{1}{n}\textstyle\sum_{i=1}^n h(\{ \Delta^{\pi_\star}_t(\xi_i; \pi) \}_{t=0}^{T-1}),
\end{align*}}
{\begin{align*}
    \hat{\pi}_{\mathsf{bc}} \in \argmin_{\pi \in \Pi} \frac{1}{n}\sum_{i=1}^n h(\{ \Delta^{\pi_\star}_t(\xi_i; \pi) \}_{t=0}^{T-1}),
\end{align*}}
over a suitable policy class $\Pi$.  Here, $h : (\R^m)^{T} \rightarrow \R$ is a loss function which encourages the discrepancy terms
$\Delta^{\pi_\star}_t(\xi_i; \pi)$ to be small
along the expert trajectories.
While behavior cloning is conceptually simple, it 
can perform poorly in practice due to 
distribution shifts triggered by errors in the learned policy $\hat{\pi}_{\mathsf{bc}}$. Specifically, due to the effects of compounding errors, the closed-loop system induced by the behavior cloning policy $\hat{\pi}_{\mathsf{bc}}$ may lead to a dramatically different distribution over system trajectories than that induced by the expert policy $\pi_\star$, even when the population risk on the expert data, $\Ex_\xi h( \{ \Delta^{\pi_\star}_t(\xi; \hat{\pi}_{\mathsf{bc}}) \}_{t=0}^{T-1} )$, is small.

As described in the introduction, existing approaches to mitigating distribution shift seek to augment the data with examples of the expert recovering from errors. These approaches either require an interactive oracle (e.g., DAgger) or access to a simulator for policy rollouts (e.g., DART, GAIL), and may not always be practically applicable.  To address the distribution shift challenge without resorting to data-augmentation, we propose TaSIL, an off-policy IL algorithm which provably leads to 
learned policies that are robust to distribution shift. 

The rest of the paper is organized as follows: in Section \ref{sec: stability and imitation gap}, we focus on ensuring robustness to policy errors for a single initial condition $\xi$.  We show that the imitation gap $\|\state_t^{\pi}(\xi)-\state_t^{\pi_\star}(\xi)\|$ between a test policy $\pi$ and the expert policy $\pi_\star$ can be controlled by the TaSIL-augmented imitation loss evaluated on the expert trajectory $\{\state_t^{\pi_\star}(\xi)\}$, effectively reducing the analysis of the imitation gap to a supervised learning problem over the expert data.  In Section \ref{sec: generalization bounds}, we integrate these results with tools from statistical learning theory to show that $n \gtrsim \varepsilon^{-r}/\delta$ trajectories are sufficient to achieve imitation gap of at most $\varepsilon$ with probability at least $1-\delta$; here $r>0$ is a constant determined by the stability properties of the expert policy $\pi_\star$, with more robust experts corresponding to smaller values of $r$. Finally, in Section \ref{sec: experiments}, we validate our analysis empirically, and show that using the TaSIL-augmented loss function in IL algorithms leads to significant gains in performance and sample efficiency.

\section{Bounding the imitation gap on a single trajectory}\label{sec: stability and imitation gap}

In this section, we fix an initial condition $\xi$ and test policy $\pi$, and seek to control the imitation gap $\Gamma_T(\xi;\pi):= \max_{1\leq t\leq T}\|\state_t^{\pi}(\xi, \{0\}) - \state_t^{\pi_\star}(\xi, \{0\})\|$.  A natural way to compare the closed-loop behavior of a test policy $\pi$ to that of the expert policy $\pi_\star$ is to view the discrepancy $\Delta^{\pi}_t(\xi;\pi):=\pi(\state^{\pi}_t(\xi)) - \pi_\star(\state^{\pi}_t(\xi))$ as an \emph{input perturbation} to the expert closed-loop system.  By writing $x_{t+1}=\fcl{\pi}(x_t,0)=\fcl{\pi_\star}(x_t, \Delta^{\pi}_t(\xi;\pi))$, the imitation gap can be written as $\Gamma_T(\xi; \pi) =  \max_t\|\state_t^{\pi_\star}(\xi, \{\Delta^{\pi}_s(\xi;\pi)\}_{s=0}^{t-1}) - \state_t^{\pi_\star}(\xi, \{0\})\|$, suggesting that closed-loop expert systems that are robust to input perturbations, as measured by the difference between nominal and perturbed trajectories, will lead to learned policies that enjoy smaller imitation gaps.

Stability conditions defined in terms of differences between nominal and perturbed trajectories have been extensively studied in robust nonlinear control theory via the notion of \emph{incremental input-to-state stability} ($\delta$-ISS) (see e.g.,~\citet{angeli2002lyapunov} and references therein).  Before proceeding, we recall definitions of standard comparison functions~\citep{khalil2002nonlinear}: a function $\gamma(x)$ is class $\mathcal{K}$ if it is continuous, strictly increasing, and satisfies $\gamma(0) = 0$, and a function $\beta(x,t)$ is class $\mathcal{KL}$ if it is continuous, $\beta(\cdot, t)$ is class $\mathcal{K}$ for each fixed $t$, and $\beta(x, \cdot)$ is decreasing for each fixed $x$.

\begin{definition}[$\delta$-ISS system]\label{def: delta ISS}
    Consider the closed-loop system evolving under policy $\pi$ and subject to perturbations $\Delta_t$ given by $x_{t+1} = \fcl{\pi}(x_t, \Delta_t)$. The closed-loop system $\fcl{\pi}(x_t, \Delta_t)$ is \emph{incremental input-to-state stable} ($\delta$-ISS) if there exists a class $\calK\calL$ function $\beta$ and a class $\calK$ function $\gamma$ such that for all initial conditions $\xi_1, \xi_2 \in \calX$, perturbation sequences $\curly{\Delta_t}_{t \geq 0}$, and $t\in\mathbb N$:
    \begin{align} \label{eq: delta ISS}
        \norm{\state_t^{\pi}(\xi_1; \{\Delta_s\}_{s=0}^{t-1}) - \state_t^{\pi}(\xi_2; \{0\}_{s=0}^{t-1})} &\leq \beta\paren{\norm{\xi_1 - \xi_2}, t} + \gamma\paren{\max_{0 \leq k \leq t - 1} \norm{\Delta_k}}.
    \end{align}
\end{definition}
Definition \ref{def: delta ISS} says that: (i) trajectories generated by $\delta$-ISS systems converge towards each other if they begin from different initial conditions, and (ii) the effect of bounded perturbations $\{\Delta_t\}$ on trajectories is bounded.  Our results will only require the stability conditions of Definition~\ref{def: delta ISS} to hold for a class of norm bounded perturbations.  In light of this, we say that a system is $\eta$-locally $\delta$-ISS if equation~\eqref{eq: delta ISS} holds for all input perturbations satisfying $\sup_{t \in \N}\|\Delta_t\|\leq \eta$.

By writing $x_{t+1}=\fcl{\pi}(x_t,0)=\fcl{\pi_\star}(x_t, \Delta^{\pi}_t(\xi;\pi))$, we conclude that if $x_{t+1} = \fcl{\pi_*}(x_t, \Delta_t)$ is $\eta$-locally $\delta$-ISS , and that $\sup_{t \in \N}\norm{\Delta^{\pi}_t(\xi;\hat\pi)} \leq \eta$, then
equation~\eqref{eq: delta ISS} yields
\begin{align}\label{eq: dISS and imitation gap}
    \norm{\state_t^\pi(\xi) - \state_t^{\pi_\star}(\xi)} &\leq \gamma\paren{ \max_{0 \leq k \leq t-1} \norm{\Delta^\pi_k(\xi; \pi)} } \Longrightarrow \Gamma_T(\xi;\pi) \leq \gamma\paren{\max_{0\leq t \leq T-1} \norm{\Delta_t^\pi(\xi;\pi)}}.
\end{align}
Equation~\eqref{eq: dISS and imitation gap} shows that the imitation gap $\norm{\state_t^\pi(\xi) - \state_t^{\pi_\star}(\xi)}$ is controlled by the maximum discrepancy $\|\Delta^\pi_t(\xi; \pi)\|$ incurred on trajectories generated by the policy $\pi$.  
A natural way of bounding the test discrepancy $\|\Delta^\pi_t(\xi; \pi)\|$, defined over trajectories generated by $\pi$, by the training discrepancy $\|\Delta^{\pi_\star}_t(\xi; \pi)\|$, defined over trajectories generated by $\pi_\star$, is to write out a Taylor series expansion of the former around the latter, i.e., to write:\footnote{We only take a first order expansion here for illustrative purposes.}
 \begin{align}
     \norm{\Delta^{\pi}_t(\xi; \pi)} 
 \leq \norm{\Delta^{\pi_\star}_t(\xi; \pi)} + \norm{\partial_x\Delta^{\pi_\star}_t(\xi;  \pi)}\Gamma_T(\xi;\pi) + \calO\left(\Gamma^2_T(\xi,\pi)\right), \label{eq: taylor series}
 \end{align}
 where $\partial_x\Delta^{\pi_\star}_t(\xi;  \pi)$ denotes the partial derivative of the discrepancy with respect to the argument of the policies $\pi$ and $\pi_\star$.
The challenge however, is that the imitation gap $\Gamma_T(\xi; \pi)$ also appears in the Taylor series expansion \eqref{eq: taylor series}, leading to an implicit constraint.  We show that this can be overcome if the order $p$ of the Taylor series expansion \eqref{eq: taylor series} is sufficiently large, as determined by the decay rate of the class $\calK$ function $\gamma(\cdot)$ defining the robustness of the expert policy.
We begin by identifying a condition reminiscent of adversarially robust training objectives that ensures a small imitation gap.
\begin{restatable}{proposition}{pertboundsinvgamma}
\label{prop: pert bounds inv-gamma}
Let the expert closed-loop system $\fcl{\pi_\star}$ be $\eta$-locally $\delta$-ISS for some $\eta > 0$. Fix an imitation gap bound $\varepsilon > 0$, initial condition $\xi$, and policy $\pi$. Then if 
\begin{align}
    \max_{0 \leq t \leq T-1} \sup_{\|\delta\| \leq \varepsilon} \|\pi_\star(\state_t^{\pi_\star}(\xi) + \delta) - \pi(\state_t^{\pi_\star}(\xi) + \delta)\| &\leq \min\{ \eta, \gamma^{-1}(\varepsilon)\}, \label{eqn:cl_ineq}
\end{align}
we have that the imitation gap satisfies  $\Gamma_T(\xi; \pi) \leq \varepsilon$.
\end{restatable}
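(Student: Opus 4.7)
The plan is to prove $\Gamma_T(\xi;\pi) \leq \varepsilon$ by strong induction on the time index $t$, using the $\eta$-local $\delta$-ISS property of $\fcl{\pi_\star}$ applied with the policy discrepancies $\Delta^\pi_s(\xi;\pi)$ playing the role of the input perturbation. The core subtlety is a circularity: hypothesis~\eqref{eqn:cl_ineq} only controls the policy discrepancy in an $\varepsilon$-ball around the \emph{expert} trajectory $\{\state_s^{\pi_\star}(\xi)\}$, while $\Delta^\pi_s(\xi;\pi)$ lives on the \emph{$\pi$}-generated trajectory, so \eqref{eqn:cl_ineq} cannot be invoked until one already knows the imitation gap is small. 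A bootstrapping induction resolves this.

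First I would set $e_t := \|\state_t^\pi(\xi) - \state_t^{\pi_\star}(\xi)\|$ and prove $e_t \leq \varepsilon$ for every $t\in\{0,1,\dots,T\}$. The base case $e_0=0$ is immediate. For the inductive step, the strong-induction hypothesis $e_s \leq \varepsilon$ for all $s\leq t$ lets me write $\state_s^\pi(\xi) = \state_s^{\pi_\star}(\xi)+\delta_s$ with $\|\delta_s\|\leq\varepsilon$, so applying \eqref{eqn:cl_ineq} pointwise yields $\|\Delta^\pi_s(\xi;\pi)\|\leq \min\{\eta,\gamma^{-1}(\varepsilon)\}$ for all $s\leq t$. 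In particular every such discrepancy is below the local threshold $\eta$, which is exactly what is needed to legitimately apply the $\eta$-local $\delta$-ISS bound on the next step.

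Next I would combine this with the identity $\state_{t+1}^\pi(\xi) = \state_{t+1}^{\pi_\star}(\xi,\{\Delta^\pi_s(\xi;\pi)\}_{s=0}^{t})$ noted in the paragraph preceding \eqref{eq: dISS and imitation gap}, and apply Definition~\ref{def: delta ISS} with $\xi_1=\xi_2=\xi$. Using $\beta(0,t+1)=0$ (since $\beta(\cdot,t+1)$ is class $\calK$) and monotonicity of $\gamma$, this gives
\[
    e_{t+1} \;\leq\; \gamma\Bigl(\max_{0\leq s\leq t}\|\Delta^\pi_s(\xi;\pi)\|\Bigr) \;\leq\; \gamma\bigl(\gamma^{-1}(\varepsilon)\bigr) \;=\; \varepsilon,
\]
closing the induction and, by maximizing over $t$, yielding $\Gamma_T(\xi;\pi)\leq\varepsilon$.

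I expect the only real obstacle to be the conceptual one: recognizing that strong induction decouples the two sides of the apparent circular dependence between the imitation gap and the local policy discrepancy bound. Once one commits to carrying $e_s\leq\varepsilon$ inside the inductive hypothesis, both the application of $\delta$-ISS and the invocation of \eqref{eqn:cl_ineq} are routine, and no higher-order Taylor machinery is required for this proposition---the Taylor expansion discussed around \eqref{eq: taylor series} becomes relevant only when one later wants to pass from expert-trajectory losses to the sup-over-$\delta$ quantity that appears on the left-hand side of \eqref{eqn:cl_ineq}.
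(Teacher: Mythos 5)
Your proof is correct and takes essentially the same approach as the paper: both proceed by (strong) induction on $t$, use the inductive hypothesis that the trajectory gap so far is at most $\varepsilon$ to place each state $\state_s^{\pi}(\xi)$ inside the $\varepsilon$-tube where \eqref{eqn:cl_ineq} bounds every discrepancy $\|\Delta^\pi_s(\xi;\pi)\|$ by $\min\{\eta,\gamma^{-1}(\varepsilon)\}$, and then apply the $\eta$-local $\delta$-ISS property with $\xi_1=\xi_2=\xi$ and monotonicity of $\gamma$ to get $e_{t+1}\leq\gamma(\gamma^{-1}(\varepsilon))=\varepsilon$. Your explicit checks that the discrepancies fall below the locality threshold $\eta$ and that $\beta(0,\cdot)=0$ merely spell out steps the paper's proof leaves implicit.
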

Proposition~\ref{prop: pert bounds inv-gamma} states that if a policy $\pi$ is sufficiently close to the expert policy $\pi_\star$ in a tube around the expert trajectory, then the imitation gap remains small. How to ensure that inequality \eqref{eqn:cl_ineq} holds using only offline data from expert trajectories is not immediately obvious. The Taylor series expansion \eqref{eq: taylor series} suggests that a natural approach to satisfying this condition is to match derivatives of the test policy $\pi$ to those of the expert policy $\pi_\star$.  We show next that a sufficient order for such a Taylor series expansion is naturally determined by the decay rate of the class $\calK$ function $\gamma(x)$ towards $0$. Less robust experts have functions that decay to $0$ more slowly and will lead to stricter sufficient conditions. Conversely, more robust experts have functions that decay to $0$ more quickly, and will lead to more relaxed sufficient conditions.
We focus on two disjoint classes of class $\calK$ functions: (i) functions that decay in their argument faster than a linear function, i.e.\ $\gamma(x) < \calO(x)$ as $x \to 0^+$, and (ii) functions that decay in their argument no faster than a linear function, i.e.\ $\gamma(x) \geq \Omega(x)$ as $x \to 0^+$.

\ifshort{\vspace{-1em}}{}
\subsection*{Rapidly decaying class $\calK$ functions}
We show that when the class $\calK$ function $\gamma(x)$ decays to $0$ faster than $\calO(x)$ in some neighborhood of $0$, then matching the \emph{zeroth-order difference} $\max_t\norm{\Delta^{\pi_\star}_{t}(\xi; \pi)}$ on the expert trajectory, as is done in vanilla behavior cloning, suffices to close the imitation gap.  
We make the following assumption on the test policy $\pi$ and expert policy $\pi_\star$.
\begin{assumption}\label{assumption: lipschitz Pi}
    There exists a non-negative constant $L_\pi$ such that
        $\norm{\bar\pi(x) - \bar\pi(y)}_2 \leq L_\pi \norm{x - y}_2$ 
    for all $x$, $y$, and $\bar\pi \in \curly{\pi, \pi_\star}$.
\end{assumption}
Proposition~\ref{prop: pert bounds inv-gamma} then leads to the following guarantee on the imitation gap.

\begin{restatable}{theorem}{sublinearimitationgap}
\label{thm: sub-linear imitation gap}
   Fix a test policy $\pi$ and initial condition $\xi \in \calX$, and let Assumption~\ref{assumption: lipschitz Pi} hold.  Let $\fcl{\pi_\star}$ be $\eta$-locally $\delta$-ISS for some $\eta > 0$, and assume that the class $\calK$ function $\gamma(\cdot)$ in \eqref{eq: delta ISS} satisfies $\gamma(x) \leq \calO(x^{1+r})$ for some $r > 0$. Choose constants $\mu, \alpha > 0$ such that
    \begin{equation}\label{eq: sub-linear neighborhood relation}
    2L_\pi x + \paren{x/\mu}^{\frac{1}{1+r}} \leq \gamma^{-1}(x) \text{ for all $0 \leq x \leq \alpha$.}
    \end{equation}
    Provided that the imitation error on the expert trajectory incurred by $\pi$ satisfies: 
    \begin{align}
        &\max_{0 \leq t \leq T-1} \mu\norm{\Delta_t^{\pi_\star}(\xi; \pi)}^{1+r} \leq \alpha \label{eq: sub-linear closeness alpha}, \
        \max_{0 \leq t \leq T-1} 2L_\pi \mu \norm{\Delta_t^{\pi_\star}(\xi; \pi)}^{1+r} + \norm{\Delta_t^{\pi_\star}(\xi; \pi)} \leq \eta, 
    \end{align}
    then for all $1 \leq t \leq T$ the instantaneous imitation gap is bounded as 
    \begin{equation}\label{eq: sub-linear imitation gap}
        \|\state_t^{\pi_\star}(\xi) - \state_t^{\pi}(\xi)\| \leq \max_{0 \leq k \leq t-1} \mu\norm{\Delta_k^{\pi_\star}(\xi; \pi)}^{1+r}.
    \end{equation}
\end{restatable}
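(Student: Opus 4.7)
The plan is to reduce the claim to a direct invocation of Proposition~\ref{prop: pert bounds inv-gamma} at each horizon $t$. For each $1 \leq t \leq T$, define the candidate bound
\[
\varepsilon_t \;:=\; \max_{0 \leq k \leq t-1} \mu\,\|\Delta_k^{\pi_\star}(\xi;\pi)\|^{1+r},
\]
and let $M_t := \max_{0 \leq k \leq t-1}\|\Delta_k^{\pi_\star}(\xi;\pi)\|$, so that $\varepsilon_t = \mu M_t^{1+r}$. The goal is to verify the tube-closeness hypothesis \eqref{eqn:cl_ineq} of Proposition~\ref{prop: pert bounds inv-gamma} with horizon $t$ and target imitation gap $\varepsilon_t$, which will immediately yield \eqref{eq: sub-linear imitation gap}.

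To establish \eqref{eqn:cl_ineq}, fix any $s \leq t-1$ and any perturbation $\delta$ with $\|\delta\| \leq \varepsilon_t$. By a standard triangle-inequality/add-and-subtract step combined with Assumption~\ref{assumption: lipschitz Pi} applied to both $\pi$ and $\pi_\star$, I would bound
\[
\|\pi_\star(\state_s^{\pi_\star}(\xi)+\delta) - \pi(\state_s^{\pi_\star}(\xi)+\delta)\| \;\leq\; \|\Delta_s^{\pi_\star}(\xi;\pi)\| + 2L_\pi\|\delta\| \;\leq\; M_t + 2L_\pi \mu M_t^{1+r}.
\]
It then remains to show $M_t + 2L_\pi\mu M_t^{1+r} \leq \min\{\eta,\gamma^{-1}(\varepsilon_t)\}$.

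The $\eta$-side is immediate from the second displayed hypothesis, since each $\|\Delta_k^{\pi_\star}\| \leq M_t$ and the hypothesis is stated uniformly over $k$. The $\gamma^{-1}$-side is the main structural step: substitute $x = \mu M_t^{1+r}$ into condition~\eqref{eq: sub-linear neighborhood relation}, which is permissible precisely because hypothesis~\eqref{eq: sub-linear closeness alpha} guarantees $\mu M_t^{1+r} \leq \alpha$. This substitution yields
\[
2L_\pi \mu M_t^{1+r} + M_t \;=\; 2L_\pi x + (x/\mu)^{1/(1+r)} \;\leq\; \gamma^{-1}(x) \;=\; \gamma^{-1}(\varepsilon_t),
\]
completing the verification. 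Invoking Proposition~\ref{prop: pert bounds inv-gamma} with horizon $t$ and $\varepsilon = \varepsilon_t$ then gives $\|\state_t^{\pi_\star}(\xi) - \state_t^\pi(\xi)\| \leq \varepsilon_t$, which is exactly \eqref{eq: sub-linear imitation gap}.

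The only delicate point is the algebraic matching of the two-term bound $M_t + 2L_\pi\mu M_t^{1+r}$ with the functional inequality \eqref{eq: sub-linear neighborhood relation}; the power $1/(1+r)$ appearing in that inequality is engineered precisely so that the substitution $x = \mu M_t^{1+r}$ recovers the linear-in-$M_t$ and $(1+r)$-power-in-$M_t$ terms simultaneously. Everything else is routine triangle inequalities, Lipschitzness, and a direct appeal to the already-proven Proposition~\ref{prop: pert bounds inv-gamma}; no induction on $t$ is actually needed since Proposition~\ref{prop: pert bounds inv-gamma} itself absorbs the recursive structure of the closed-loop dynamics.
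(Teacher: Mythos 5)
Your proof is correct and follows essentially the same route as the paper's: bound the tube discrepancy by $2L_\pi\varepsilon + \max_k\|\Delta_k^{\pi_\star}(\xi;\pi)\|$ via Assumption~\ref{assumption: lipschitz Pi}, substitute $x = \mu M_t^{1+r}$ into the engineered inequality~\eqref{eq: sub-linear neighborhood relation}, and invoke Proposition~\ref{prop: pert bounds inv-gamma}. Your only (benign) refinement is applying the proposition separately at each horizon $t$ with $\varepsilon_t = \mu M_t^{1+r}$, which delivers the per-time bound~\eqref{eq: sub-linear imitation gap} literally as stated, whereas the paper makes a single application with the global choice $\varepsilon = \max_{0\leq t\leq T-1}\mu\|\Delta_t^{\pi_\star}(\xi;\pi)\|^{1+r}$.
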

Theorem~\ref{thm: sub-linear imitation gap} shows that if a policy $\pi$ is a sufficiently good approximation of the expert policy $\pi_\star$ on an expert trajectory $\{\state_t^{\pi_\star}(\xi)\}$, then the imitation gap $\|\state_t^{\pi_\star}(\xi) - \state_t^{\pi}(\xi)\|$ can be upper bounded in terms of the discrepancy term $\max_{0\leq k \leq t-1}\|\Delta_t^{\pi_\star}(\xi; \pi)\|$ \emph{evaluated on the expert trajectory}.  To help illustrate the effect of the decay parameter $r$ on the choices of $\mu$ and $\alpha$, we make condition \eqref{eq: sub-linear neighborhood relation} more explicit by assuming that $\gamma(x)\leq Cx^{1+r}$ for all $x\in [0,1]$.  Then one can choose $\mu=2^{1+r}C$ and $\alpha=\calO(1)(L_{\pi}^{1+r} C)^{-1/r}.$  This makes clear that a larger $r$, i.e., a more robust expert, leads to less restrictive conditions \eqref{eq: sub-linear closeness alpha} 
on the policy $\pi$ and a tighter upper bound on the imitation gap \eqref{eq: sub-linear imitation gap} (assuming $\max_{0\leq k \leq t-1}\|\Delta_t^{\pi_\star}(\xi; \pi)\|<1$). In particular, for such systems, vanilla behavior cloning is sufficient to ensure bounded imitation gap. This also makes clear how the result breaks down when $r\approx0$, i.e.,\ when the decay rate is nearly linear, as the neighborhood $\alpha$ can become arbitrarily small, such that it may be impossible to learn a policy $\pi$ that satisfies the bound~\eqref{eq: sub-linear closeness alpha} with a practical number of samples $n$. 
The interplay of the bounds \eqref{eq: sub-linear closeness alpha} in Theorem~\ref{thm: sub-linear imitation gap} (and the subsequent theorems of its like) and the sample-complexity of imitation learning will be discussed in further detail in Section \ref{sec: generalization bounds}.

\ifshort{\vspace{-1em}}{}
\subsection*{Slowly decaying class $\calK$ functions}
When the class $\calK$ function $\gamma(x)$ decays to $0$ slowly, for reasons discussed above, controlling the zeroth-order difference $\Delta^{\pi_\star}_t(\xi; \pi)$ may not be sufficient to bound the imitation gap. In particular, we consider class $\calK$ functions satisfying $\gamma(x) \leq \calO(x^{1/r})$ for some $r \geq 1$. Setting $p = \floor{r}$, we now show that matching up to the $p$-th total derivative of $\pi_\star$ is sufficient to control the imitation gap. Analogously to Assumption~\ref{assumption: lipschitz Pi}, we make the following regularity assumption on the test policy $\pi$ and expert policy $\pi_\star$.
\begin{assumption}\label{assumption: k-differentiable Pi}
    For a given non-negative $p \in \N$, assume that the test policy $\pi$ and expert policy $\pi_\star$ are $p$-times continuously differentiable, and there exists a constant $L_{\partial^{p} \pi} \geq 0$ such that
    \begin{align}
        \norm{\bar\pi(x) - \paren{J^{p}_{x_0} \bar\pi}(x)} \leq \frac{L_{\partial^p \pi}}{(p+1)!}\norm{x - x_0}^{p+1},
    \end{align}
    for all $x, x_0$ and $\bar\pi\in\curly{\pi, \pi_\star}$, where $\paren{J^{p}_{x_0}\bar\pi}(x) := \textstyle\sum_{j=0}^{p}\frac{1}{j!} \paren{\partial^j_x \bar\pi(x_0)}\paren{x - x_0}^{\otimes j}$ is the $p$-th order Taylor polynomial of $\bar\pi$ evaluated at $x_0$, and $\otimes$ denotes the tensor product.
\end{assumption}
With this assumption in hand, we provide the following guarantee on the imitation gap.

\begin{restatable}{theorem}{suplinearimitationgap}
    \label{thm: sup-linear imitation gap}
    Let $\fcl{\pi_\star}$ be $\eta$-locally $\delta$-ISS for some $\eta >0$, and assume that the class $\calK$ function $\gamma(\cdot)$ in \eqref{eq: delta ISS} satisfies $\gamma(x) \leq \calO(x^{1/r})$ for some $r\geq 1$.  Fix a test policy $\pi$ and initial condition $\xi \in \calX$, and let Assumption~\ref{assumption: k-differentiable Pi} hold for $p\in\mathbb{N}$ satisfying $p+1-r>0$. 
    Choose $\mu, \alpha > 0$ such that
    \begin{equation}\label{eq: sup-linear neighborhood relation}
    2\frac{L_{\partial^p \pi}}{(p+1)!} x^{p+1} + (x/\mu)^r \leq \gamma^{-1}(x), \text{   for all $0 \leq x \leq \alpha \leq \frac{1}{2}$}.
    \end{equation}
    Provided the $j$th total derivatives, $j=0,\dots,p$, of the imitation error on the expert trajectory incurred by $\pi$ satisfy:
    \begin{align}
        &\max_{0 \leq t \leq T-1} \max_{0 \leq j \leq p} \mu\paren{\frac{2}{j!}  \norm{\partial^j_x \Delta_t^{\pi_\star}(\xi; \pi)}}^{1/r} \leq \alpha  \label{eq: sup-linear closeness alpha}, \\
        &\max_{0 \leq t \leq T-1} \max_{0 \leq j \leq p} \frac{2 L_{\partial^p \pi}\mu^{p + 1}}{(p+1)!}\paren{\frac{2}{j!} \norm{\partial^j_x\Delta_t^{\pi_\star}(\xi; \pi)}}^{\frac{p+1}{r}} + \frac{2}{j!}  \norm{\partial^j_x \Delta_t^{\pi_\star}(\xi; \pi)} \leq \eta, \label{eq: sup-linear closeness eta}
    \end{align}
   then for all $1 \leq t \leq T$ the instantaneous imitation gap is bounded by 
    \begin{equation}\label{eq: super linear imitation gap}
        \|\state_t^{\pi_\star}(\xi) - \state_t^{\pi}(\xi)\| \leq \max_{0 \leq k \leq t-1} \max_{0 \leq j \leq p} \mu\paren{\frac{2}{j!}  \norm{\partial^j_x \Delta_t^{\pi_\star}(\xi; \pi)}}^{1/r}.
    \end{equation}
\end{restatable}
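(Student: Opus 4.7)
The plan is to prove the instantaneous bound \eqref{eq: super linear imitation gap} by induction on $t$, closely mirroring the $\delta$-ISS argument behind Proposition~\ref{prop: pert bounds inv-gamma} but using a $p$-th order Taylor expansion of $\pi$ and $\pi_\star$ around the expert state in place of zeroth-order Lipschitz control. Introduce the shorthand
\begin{equation*}
\varepsilon^\star_t \;:=\; \max_{0\le k \le t-1}\,\max_{0\le j\le p}\,\mu\Bigl(\tfrac{2}{j!}\,\bigl\|\partial_x^{\,j}\Delta_k^{\pi_\star}(\xi;\pi)\bigr\|\Bigr)^{1/r},
\end{equation*}
so that the claim becomes $\|x_t^{\pi_\star}(\xi)-x_t^\pi(\xi)\|\le \varepsilon^\star_t$. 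The base case $t=0$ is trivial, and in the inductive step we invoke $\eta$-local $\delta$-ISS applied to the input perturbation $\Delta_k^\pi(\xi;\pi)$ for $k<t$; it therefore suffices to show that $\max_{k<t}\|\Delta_k^\pi(\xi;\pi)\|\le \min\{\eta,\gamma^{-1}(\varepsilon^\star_t)\}$.

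For the key inequality, I would write $x_k^\pi = x_k^{\pi_\star}+\delta_k$ where $\|\delta_k\|\le \varepsilon^\star_k \le \varepsilon^\star_t$ by the inductive hypothesis, and expand both $\pi$ and $\pi_\star$ to order $p$ around $x_k^{\pi_\star}$. Applying Assumption~\ref{assumption: k-differentiable Pi} twice and assembling the degree-$p$ Taylor polynomials of $\pi_\star - \pi$ yields
\begin{equation*}
\|\Delta_k^\pi(\xi;\pi)\| \;\le\; 2\,\tfrac{L_{\partial^p\pi}}{(p+1)!}\,(\varepsilon^\star_t)^{p+1} \;+\; \sum_{j=0}^{p}\tfrac{1}{j!}\,\|\partial_x^{\,j}\Delta_k^{\pi_\star}(\xi;\pi)\|\,(\varepsilon^\star_t)^{j}.
\end{equation*}
By definition of $\varepsilon^\star_t$, each derivative norm satisfies $\tfrac{1}{j!}\|\partial_x^{\,j}\Delta_k^{\pi_\star}\|\le \tfrac{1}{2}(\varepsilon^\star_t/\mu)^r$; combined with the geometric bound $\sum_{j=0}^{p}(\varepsilon^\star_t)^j \le 2$ that follows from \eqref{eq: sup-linear closeness alpha} and the constraint $\alpha\le 1/2$ in \eqref{eq: sup-linear neighborhood relation}, the polynomial sum collapses to at most $(\varepsilon^\star_t/\mu)^r$. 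Hence
\begin{equation*}
\|\Delta_k^\pi(\xi;\pi)\| \;\le\; 2\,\tfrac{L_{\partial^p\pi}}{(p+1)!}\,(\varepsilon^\star_t)^{p+1}+(\varepsilon^\star_t/\mu)^{r}\;\le\;\gamma^{-1}(\varepsilon^\star_t),
\end{equation*}
using \eqref{eq: sup-linear neighborhood relation} in the last step. The same bound also verifies $\|\Delta_k^\pi\|\le\eta$: if $(k^\star,j^\star)$ is the pair attaining the maximum in the definition of $\varepsilon^\star_t$ then $(\varepsilon^\star_t/\mu)^r=\tfrac{2}{j^\star!}\|\partial_x^{j^\star}\Delta_{k^\star}^{\pi_\star}\|$ and $(\varepsilon^\star_t)^{p+1}=\mu^{p+1}(\tfrac{2}{j^\star!}\|\partial_x^{j^\star}\Delta_{k^\star}^{\pi_\star}\|)^{(p+1)/r}$, so the previous display matches the left-hand side of \eqref{eq: sup-linear closeness eta} at $(t,j)=(k^\star,j^\star)$ and is therefore $\le\eta$. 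Local $\delta$-ISS then gives $\|x_t^\pi-x_t^{\pi_\star}\|\le\gamma\bigl(\max_{k<t}\|\Delta_k^\pi\|\bigr)\le\gamma(\gamma^{-1}(\varepsilon^\star_t))=\varepsilon^\star_t$, closing the induction.

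The principal obstacle is taming the full sum $\sum_{j=0}^{p}\tfrac{1}{j!}\|\partial_x^{j}\Delta_k^{\pi_\star}\|(\varepsilon^\star_t)^{j}$ so that it fits inside the single $(x/\mu)^r$ slot of \eqref{eq: sup-linear neighborhood relation}. This is exactly where the somewhat unusual $\tfrac{2}{j!}$ factor and the requirement $\alpha\le \tfrac{1}{2}$ come into play: the factor of $2$ in the definition of $\varepsilon^\star_t$ absorbs precisely the geometric series $\sum_{j\ge 0}(\varepsilon^\star_t)^j \le 2$ that arises from bounding all $p+1$ Taylor terms by the largest one. A secondary care point is verifying that the exponents $(p+1)/r$ and $1/r$ in \eqref{eq: sup-linear closeness eta} match those produced by the Taylor remainder when $\varepsilon^\star_t$ is written back in terms of the maximizing derivative norm; this is a direct arithmetic check once the inductive framework is in place.
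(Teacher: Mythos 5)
Your argument is correct and is essentially the paper's own proof: the paper verifies the tube condition of Proposition~\ref{prop: pert bounds inv-gamma} with the fixed choice $\varepsilon = \max_{t,j}\,\mu\bigl(\tfrac{2}{j!}\|\partial_x^j\Delta_t^{\pi_\star}(\xi;\pi)\|\bigr)^{1/r}$ via the same $p$-th order Taylor expansion under Assumption~\ref{assumption: k-differentiable Pi}, the same factor-$2$ absorption of the geometric series $\sum_{j=0}^{p}\varepsilon^j \le 2$ (using $\alpha \le 1/2$), and the same identification of conditions~\eqref{eq: sup-linear closeness alpha} and~\eqref{eq: sup-linear closeness eta} as exactly the requirements $\varepsilon \le \alpha$ and $2\tfrac{L_{\partial^p\pi}}{(p+1)!}\varepsilon^{p+1} + (\varepsilon/\mu)^r \le \eta$. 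Your only deviation is inlining the induction with the time-varying threshold $\varepsilon^\star_t$ and expanding at the realized perturbation $\delta_k = x_k^\pi - x_k^{\pi_\star}$ rather than invoking Proposition~\ref{prop: pert bounds inv-gamma} with a single horizon-wide $\varepsilon$ and a supremum over the tube; this is a cosmetic repackaging of the same mechanism (and if anything delivers the time-indexed form of~\eqref{eq: super linear imitation gap} slightly more directly).
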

Theorem~\ref{thm: sup-linear imitation gap} shows that if the $p$-th order Taylor series of the policy $\pi$ approximately matches that of the expert policy $\pi_\star$ when evaluated on an expert trajectory $\{\state_t^{\pi_\star}(\xi)\}$, then the imitation gap $\norm{\state^{\pi_\star}_t(\xi) - \state^\pi_t(\xi)}$ can be upper bounded in terms of the derivatives of the discrepancy term, i.e., by $\max_{0\leq k \leq t-1} \max_{0\leq j \leq p} \norm{\partial^j_x\Delta_k^{\pi_\star}(\xi; \pi)}$, \emph{evaluated on the expert trajectory}. To help illustrate the effect of the choice of the order $p$ on the constants $\mu$ and $\alpha$, we make condition \eqref{eq: sup-linear neighborhood relation} more explicit by assuming that $\gamma(x)\leq Cx^{1/r}$ for all $x\in [0,1]$.  Then one can choose $\mu=2^{1/r}C$ and $\alpha=\calO(1)(L_{\partial^p\pi} C^r)^{-1/(p+1-r)}.$ This expression highlights a tradeoff: by picking larger order $p$, the right hand side $\alpha$ of bound \eqref{eq: sup-linear closeness alpha} increases, but at the expense of having to match higher-order derivatives.
This also highlights that both the order $p$ and closeness required by Equation~\eqref{eq: sup-linear closeness alpha} get increasingly restrictive as $r$ increases, matching our intuition that less robust experts lead to harder imitation learning problems. 

\paragraph{Using estimated derivatives} We show in Appendix \ref{appendix: finite differencing} that the results of Theorems~\ref{thm: sub-linear imitation gap} and~\ref{thm: sup-linear imitation gap} extend gracefully to when only approximate derivatives $\widehat{\partial^j_x\pi_\star}(x)$ can be obtained, e.g., through finite-difference methods.  In particular, if $\|\widehat{\partial^j_x\pi_\star}(x)-\partial^j_x\pi_\star(x)\|\leq \varepsilon$, then it suffices to appropriately tighten the constraints \eqref{eq: sup-linear closeness alpha} and \eqref{eq: sup-linear closeness eta} by $\calO(\varepsilon^{1/r})$ and $\calO(\varepsilon)$, respectively.  Please refer to Appendix \ref{appendix: finite differencing} for more details.
\section{Algorithms and generalization bounds for TaSIL}\label{sec: generalization bounds}

The analysis of Section \ref{sec: stability and imitation gap} focused on a single test policy $\pi$ and initial condition $\xi$. Theorems \ref{thm: sub-linear imitation gap} and \ref{thm: sup-linear imitation gap} motivate defining the $p$-TaSIL loss function:
\ifshort{\begin{equation}\label{eq: TaSIL loss}
    \ell^{\pi_\star}_{p}(\xi; \pi):= \tfrac{1}{p+1}\textstyle\sum_{j=0}^{p} \max_{0\leq t \leq T-1}  \left\|\partial^j_x \Delta^{\pi_\star}_{t}(\xi;\pi) \right\|.
\end{equation}}
{\begin{equation}\label{eq: TaSIL loss}
    \ell^{\pi_\star}_{p}(\xi; \pi):= \frac{1}{p+1}\sum_{j=0}^{p} \max_{0\leq t \leq T-1}  \left\|\partial^j_x \Delta^{\pi_\star}_{t}(\xi;\pi) \right\|.
\end{equation}}
The corresponding policy $\hat{\pi}_{\mathsf{TaSIL},p}$ is the solution to the empirical risk minimization (ERM) problem:
\ifshort{\begin{align}\label{eq: TaSIL ERM}
    \hat{\pi}_{\mathsf{TaSIL},p} \in \argmin_{\pi \in \Pi} \tfrac{1}{n}\textstyle\sum_{i=1}^n\ell^{\pi_\star}_{p}(\xi_i; \pi),
\end{align}}
{\begin{align}\label{eq: TaSIL ERM}
    \hat{\pi}_{\mathsf{TaSIL},p} \in \argmin_{\pi \in \Pi} \frac{1}{n}\sum_{i=1}^n\ell^{\pi_\star}_{p}(\xi_i; \pi),
\end{align}}
which explicitly seeks to learn a policy $\pi\in \Pi$, for $\Pi$ a suitable policy class, that matches the $p$-th order Taylor series expansion of the expert policy.\footnote{Although we focus on the supremum loss $\max_{0\leq t \leq T-1} \left\|\partial^j_x \Delta^{\pi_\star}_{t}(\xi;\pi) \right\|$ in our analysis, we note that any surrogate loss that upper bounds the supremum loss, e.g., $\textstyle\sum_{t=0}^{T-1}\left\|\partial^j_x \Delta^{\pi_\star}_{t}(\xi;\pi) \right\|$, can be used.}  In this section, we analyze the generalization and sample-complexity properties of the $p$-TaSIL ERM problem \eqref{eq: TaSIL ERM}.
%
%

Our analysis in this section focuses on the \emph{realizable} setting: we assume that for every dataset of expert trajectories $\{\{\state^{\pi_\star}_t(\xi_i)\}_{t=0}^{T-1}\}_{i=1}^n$, there exists a policy $\pi \in \Pi$ that achieves (near) zero empirical risk.  This is true if, for example, $\pi_\star \in \Pi$.
In this setting, we demonstrate that we can attain generalization bounds that decay as $\tilde\calO(n^{-1})$, where $\tilde\calO(\cdot)$ hides poly-log dependencies on $n$. These rates are referred to as \emph{fast} rates in statistical learning,
since they decay faster than the $n^{-1/2}$ rate prescribed
by the central limit theorem.
 We present analysis for the non-realizable setting in Appendix \ref{appendix: generalization bounds proofs}: this analysis is standard and yields generalization bounds scaling as $\calO(n^{-1/2})$.  

Let $\calG \subset [0,1]^\calX$ be a set of functions mapping some domain $\calX$ to $[0, 1]$.\footnote{This is without loss of generality for $[0, B]$-bounded functions by considering the normalized function class $B^{-1}\calG := \curly{B^{-1}g \, | \, g \in \calG} \subset [0, 1]^\calX$.} Let $\calD$ be a distribution with support restricted to $\calX$, and denote the mean of $g \in \calG$ with respect to $x \sim \calD$ by $\Ex_{x}[g]$. Similarly, fixing data points $x_1,\dots,x_n \in \calX$, we denote the empirical mean of $g$ by $\Ex_n[g] := n^{-1} \textstyle\sum_{i=1}^n g(x_i)$. We focus our analysis on the following class of parametric Lipschitz function classes.

\begin{definition}[Lipschitz parametric function class]
    A parametric function class $\calG \subset [0, 1]^\calX$
    is called $(B_\theta, L_\theta, q)$-\emph{Lipschitz}
    if $\calG = \{ g_\theta(\cdot) \mid \theta \in \Theta \}$ with $\Theta \subset \R^q$, 
    and it satisfies the following boundedness and 
    uniform Lipschitz conditions:
    \begin{align}\label{def: Lipschitz loss class}
        \sup_{\theta \in \Theta} \norm{\theta} \leq B_\theta, \:\:
        \sup_{x \in \calX} \sup_{\theta_1, \theta_2 \in \Theta, \theta_1 \neq \theta_2} \frac{\abs{g_{\theta_1}(x) - g_{\theta_2}(x)}}{\norm{\theta_1-\theta_2}} \leq L_\theta.
    \end{align}
    We assume without loss of generality that $B_\theta L_\theta \geq 1$.
\end{definition}
The description~\eqref{def: Lipschitz loss class} is very general, and as we show next, is compatible with feed-forward neural networks with differentiable activation functions. We then have the following generalization bound, which adapts \citep[Corollary 3.7]{bartlett2005local} to Lipschitz parametric function classes using the machinery of local Rademacher complexities \citep{bartlett2005local}. Alternatively, the result can
also be derived from \citep[Theorem 3]{haussler1992}.
\begin{restatable}{theorem}{localrademachergenbound}
\label{thm: local rademacher gen bound}
Let $\calG \subset [0, 1]^\calX$ be a $(B_\theta, L_\theta, q)$-Lipschitz parametric function class.
There exists a universal positive constant $K < 10^6$ such that the following holds. Given $\delta \in (0, 1)$,
with probability at least $1-\delta$ over the i.i.d.\ draws $x_1, \dots, x_n \sim \calD$, for all $g \in \calG$, the following bound holds:
\begin{align}\label{eq: local rademacher gen bound}
    \Ex_x [g] \leq 2 \Ex_n [g] + K\left(\frac{q \log\paren{B_\theta L_\theta n} + \log(1/\delta)}{n} \right).
\end{align}
\end{restatable}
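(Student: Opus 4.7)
The plan is to combine the local Rademacher complexity fast-rate machinery of \citet{bartlett2005local} with a covering-number bound inherited from the Lipschitz parametric structure of $\calG$. Because every $g \in \calG$ takes values in $[0,1]$, we have the pointwise bound $g^2 \leq g$, hence $\Ex_x[g^2] \leq \Ex_x[g]$. This is precisely the Bernstein-type variance condition (with slope $B=1$) needed to invoke \citep[Corollary 3.7]{bartlett2005local}, which delivers, with probability at least $1-\delta$,
\[
\Ex_x[g] \leq 2 \Ex_n[g] + c_1 r_n^* + c_2 \frac{\log(1/\delta)}{n}
\]
uniformly over $g \in \calG$, where $r_n^*$ is the fixed point of any sub-root upper bound on $r \mapsto c \, \Ex \mathcal{R}_n(\{g \in \calG : \Ex[g^2] \leq r\})$. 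The proof then reduces to computing $r_n^*$.

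To bound the localized Rademacher complexity, I would pass through metric entropy. A standard volume argument produces an $\varepsilon$-net of the parameter ball $\{\theta \in \R^q : \norm{\theta} \leq B_\theta\}$ of cardinality at most $(3 B_\theta/\varepsilon)^q$. Composing with the parametrization $\theta \mapsto g_\theta$ and invoking the Lipschitz hypothesis promotes this net to an $L_\theta \varepsilon$-cover of $\calG$ in sup norm, hence for any empirical measure $P_n$,
\[
\log \calN(\varepsilon, \calG, L_2(P_n)) \leq q \log\paren{\tfrac{3 B_\theta L_\theta}{\varepsilon}}.
\]
Feeding this into Dudley's entropy integral on the slice $\calG_r := \{g \in \calG : \Ex[g^2] \leq r\}$ yields
\[
\Ex \mathcal{R}_n(\calG_r) \lesssim \sqrt{\frac{q r}{n} \log(B_\theta L_\theta n)},
\]
where the single logarithmic factor appears after truncating Dudley's integral at scale $1/n$ and using $B_\theta L_\theta \geq 1$. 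Solving the sub-root fixed-point equation $r = c_3 \sqrt{(qr/n)\log(B_\theta L_\theta n)}$ gives $r_n^* \asymp q \log(B_\theta L_\theta n)/n$, which substituted back into the fast-rate inequality produces the stated bound, with all constants collapsed into the single universal constant $K$.

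The main obstacle is the chaining step: one must check that a polynomial $L_2$-covering bound of the form $(B_\theta L_\theta/\varepsilon)^q$ produces only a single $\log(B_\theta L_\theta n)$ factor rather than a $\log^2$ or polynomial-in-$B_\theta L_\theta$ factor when integrated and passed through the fixed-point calculation. Tracking the universal constant carefully (to land below $10^6$) also requires some bookkeeping. A cleaner alternative, as the excerpt remarks, is to bypass localization entirely and invoke \citep[Theorem 3]{haussler1992}: our polynomial covering bound is exactly the hypothesis of that theorem, and its conclusion is the desired realizable-style $\tilde\calO(q/n)$ generalization bound for $[0,1]$-bounded function classes. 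This second route is essentially a black-box citation and avoids the Dudley chaining analysis entirely.
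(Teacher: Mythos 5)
Your outline is correct and rests on the same machinery as the paper---localization, a Dudley entropy integral against the $(3B_\theta L_\theta/\varepsilon)^q$ parametric cover, and a sub-root fixed-point computation giving $r_n^* \asymp q\log(B_\theta L_\theta n)/n$---but the two proofs differ in \emph{where} they localize, and this difference lands exactly on the chaining step you flag as the main obstacle. You localize by the true second moment, taking slices $\calG_r = \{g \in \calG : \Ex_x[g^2] \leq r\}$ as required by \citep[Corollary 3.7]{bartlett2005local}. Dudley's inequality, however, controls the conditional Rademacher complexity $\calR_n$ in terms of the \emph{empirical} $L^2$ radius of the class, and for your slice that radius, $\sup_{g \in \calG_r} (\Ex_n[g^2])^{1/2}$, is a random quantity not deterministically bounded by $\sqrt{r}$; closing this gap requires either the star-hull and expected-radius machinery surrounding Corollary 3.7 (a self-bounding symmetrization argument relating $\Ex \sup_{g \in \calG_r} \Ex_n[g^2]$ to $r$ plus the complexity itself) or a switch to empirical localization. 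The paper does the latter: it invokes \citet[Theorem 6.1]{bousquet2002concentration}, which requires only a sub-root bound on $\calR_n(\calH_n(r))$ for the \emph{empirically} localized class $\calH_n(r) = \{g \in \calG : \Ex_n[g] \leq r\}$. Since $g \in [0,1]$ gives $\Ex_n[g^2] \leq \Ex_n[g] \leq r$ pathwise, the empirical radius of $\calH_n(r)$ is deterministically at most $\min\{\sqrt{r},1\}$, so Dudley applies directly without any true-to-empirical conversion; the regime $r \leq 1/n^2$ is handled by the trivial bound $\calR_n(\calH_n(r)) \leq \sqrt{r}$, which is what makes a single $\log(B_\theta L_\theta n)$ factor survive the fixed-point calculation, exactly as you anticipated. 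So your route is viable but incomplete as written---the step from the entropy bound to $\Ex\calR_n(\calG_r) \lesssim \sqrt{qr\log(B_\theta L_\theta n)/n}$ needs the additional radius argument spelled out---whereas the paper's choice of empirical-mean localization buys a purely deterministic chaining argument at the cost of the slightly less standard Bousquet citation. Your fallback of invoking \citep[Theorem 3]{haussler1992} as a black box is legitimate and is endorsed by the paper itself as an alternative derivation.
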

We now use Theorem~\ref{thm: local rademacher gen bound} to analyze the generalization properties of the $p$-TaSIL ERM problem \eqref{eq: TaSIL ERM}.
In what follows, we assume that the expert-closed loop system is stable in the sense of Lyapunov, i.e., that there exists $B_X>0$ such that $\sup_{t\in\N, \xi\in\calX}\norm{\state^{\pi_\star}_t(\xi)}\leq B_X$, and consider the following parametric class of $p+2$ continuously differentiable policies:
\begin{equation}\label{eq: Pi}
    \Pi_{\theta, p} := \curly{\pi(x, \theta) \mid \theta \in \R^q, \, \|\theta\|\leq B_\theta, \, \pi(0,\theta)=0 \; \forall \theta, \,  \text{$\pi$ is $p+2$ continuously differentiable}}.
\end{equation}
\ifshort{
Define $B_{j} := \sup_{\norm{x}\leq B_X, \norm{\theta}\leq B_\theta}\norm{\partial^j_x\pi(x, \theta)},$ $L_{j} := \sup_{\norm{x}\leq B_X, \norm{\theta}\leq B_\theta}\norm{\partial^{j+1}_x\partial_\theta\pi(x, \theta)}$ for $j=0,\dots,p$, and note that they are guaranteed to be finite under our regularity assumptions.  Finally, define the loss function class:
\begin{equation}\label{def: p-diff param loss class}
    \ell^{\pi_\star}_{p}\circ\Pi_{\theta, p} := \curly{\ell^{\pi_\star}_{p}(\cdot; \pi) \text{ defined in~\eqref{eq: TaSIL loss}} \mid  \pi \in \Pi_{\theta,p}}.
\end{equation}}
{
Define the constants
\begin{align*}
    B_{j} := \sup_{\norm{x}\leq B_X, \norm{\theta}\leq B_\theta}\norm{\partial^j_x\pi(x, \theta)}, \:\: L_{j} := \sup_{\norm{x}\leq B_X, \norm{\theta}\leq B_\theta}\norm{\partial^{j+1}_x\partial_\theta\pi(x, \theta)},
\end{align*} 
for $j=0,\dots,p$, and note that they are guaranteed to be finite under our regularity assumptions.  Finally, define the loss function class:
\begin{equation}\label{def: p-diff param loss class}
    \ell^{\pi_\star}_{p}\circ\Pi_{\theta, p} := \curly{\ell^{\pi_\star}_{p}(\cdot; \pi) \text{ defined in~\eqref{eq: TaSIL loss}} \mid  \pi \in \Pi_{\theta,p}}.
\end{equation}}
From a repeated application of Taylor's theorem, we show in Lemma \ref{lem: loss params} that $B_{\ell,p}^{-1}(\ell^{\pi_\star}_{p}\circ\Pi_{\theta, p})$ is a $(B_\theta, B_{\ell,p}^{-1}L_{\ell,p}, q)$-Lipschitz parametric function class for $B_{\ell,p}:= \frac{2}{p+1}\textstyle\sum_{j=0}^{p}B_{j}$ and $L_{\ell,p}:=\frac{B_X}{p+1}\textstyle\sum_{j=0}^{p}L_{j}$. We now combine this with Theorem \ref{thm: local rademacher gen bound} to bound the population risk achieved by the solution to the TaSIL ERM problem \eqref{eq: TaSIL ERM}.

\begin{restatable}{corollary}{tasilgeneralizationbound}
\label{cor: TaSIL generalization bound}
    Let the policy class $\Pi_{\theta,p}$ be defined as in \eqref{eq: Pi}, and assume that $\pi_\star \in \Pi_{\theta,p}$.  Let the function class $\ell_{p}^{\pi_\star} \circ \Pi_{\theta, p}$ be defined as in~\eqref{def: p-diff param loss class}, and constants $B_{\ell, p}$, $L_{\ell, p}$ be defined as above.
    Let $\hat{\pi}_{\mathsf{TaSIL},p}$ be any empirical risk minimizer~\eqref{eq: TaSIL ERM}.
    Then with probability at least $1 - \delta$ over the initial conditions $\curly{\xi_i}_{i=1}^n \overset{\mathrm{i.i.d.}}{\sim} \calD^n$,
    \begin{align}\label{eq: TaSIL generalization bound}
        \Ex_{\xi} \brac{\ell^{\pi_\star}_{p}(\xi; \hat{\pi}_{\mathsf{TaSIL},p})} \leq \calO(1) B_{\ell, p}\frac{ q \log\paren{B_\theta B_{\ell, p}^{-1} L_{\ell, p} n} + \log(1/\delta)}{n}.
    \end{align}
\end{restatable}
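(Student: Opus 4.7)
The plan is to reduce Corollary~\ref{cor: TaSIL generalization bound} to a direct invocation of Theorem~\ref{thm: local rademacher gen bound} applied to a properly normalized version of the TaSIL loss class, and then close the argument with the realizability hypothesis. Concretely, first I would rescale the loss class to satisfy the $[0,1]$-valued requirement of Theorem~\ref{thm: local rademacher gen bound}, defining $\tilde{\calG} := B_{\ell,p}^{-1} (\ell^{\pi_\star}_p \circ \Pi_{\theta,p})$. The fact that each element takes values in $[0,1]$ follows from the uniform boundedness of $\norm{\partial^j_x \bar\pi(x)}\leq B_j$ for $\bar\pi\in\{\pi,\pi_\star\}$ on $\{\norm{x}\leq B_X\}$, the triangle inequality, and the definition of $B_{\ell,p} = \tfrac{2}{p+1}\sum_{j=0}^p B_j$.

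Next, I would invoke Lemma~\ref{lem: loss params} (asserted in the text just before the corollary) to conclude that $\tilde{\calG}$ is a $(B_\theta, B_{\ell,p}^{-1} L_{\ell,p}, q)$-Lipschitz parametric function class in the sense of Definition~\eqref{def: Lipschitz loss class}. The parameter norm bound is inherited from $\Pi_{\theta,p}$, while the uniform Lipschitz constant in $\theta$ is verified by writing
\[
\ell^{\pi_\star}_p(\xi; \pi_\theta) = \tfrac{1}{p+1}\sum_{j=0}^p \max_{0\leq t\leq T-1} \bigl\|\partial^j_x\bigl(\pi(\state_t^{\pi_\star}(\xi);\theta) - \pi_\star(\state_t^{\pi_\star}(\xi))\bigr)\bigr\|,
\]
observing that the $\max$ over $t$ preserves Lipschitz continuity in $\theta$ since $\state_t^{\pi_\star}(\xi)$ is $\theta$-independent, and applying the mean value theorem in $\theta$ together with the bound $\norm{\partial^{j+1}_x\partial_\theta \pi(x,\theta)}\leq L_j$ over the compact set $\{\norm{x}\leq B_X, \norm{\theta}\leq B_\theta\}$. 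Summing the per-$j$ contributions yields a Lipschitz constant $\tfrac{B_X}{p+1}\sum_j L_j = L_{\ell,p}$ before the outer rescaling by $B_{\ell,p}^{-1}$.

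With the Lipschitz parametric property in hand, I would apply Theorem~\ref{thm: local rademacher gen bound} to $\tilde{\calG}$ with the i.i.d.\ samples $\{\xi_i\}_{i=1}^n$, obtaining with probability at least $1-\delta$ that, for every $g\in\tilde{\calG}$,
\[
\Ex_\xi[g] \;\leq\; 2\Ex_n[g] + K\cdot \frac{q\log(B_\theta \cdot B_{\ell,p}^{-1}L_{\ell,p}\cdot n) + \log(1/\delta)}{n}.
\]
Specializing this bound to $g = B_{\ell,p}^{-1}\ell^{\pi_\star}_p(\cdot;\hat\pi_{\mathsf{TaSIL},p})$ and using realizability --- namely, that $\pi_\star\in\Pi_{\theta,p}$ achieves $\ell^{\pi_\star}_p(\xi_i;\pi_\star)=0$ for every $i$, so that the empirical minimum (and therefore $\Ex_n[\ell^{\pi_\star}_p(\cdot;\hat\pi_{\mathsf{TaSIL},p})]$) is zero --- the $2\Ex_n[g]$ term vanishes. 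Multiplying through by $B_{\ell,p}$ to undo the normalization then produces the claimed bound~\eqref{eq: TaSIL generalization bound}.

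The main obstacle is the careful accounting inside Lemma~\ref{lem: loss params}: one must verify that differentiating the sum-over-$j$ of tensor norms of $\partial^j_x \Delta_t^{\pi_\star}$ with respect to $\theta$ is licit and produces the claimed constants, which is where the $p{+}2$ continuous differentiability in the definition of $\Pi_{\theta,p}$ (rather than merely $p{+}1$) is used to ensure $\partial^{j+1}_x\partial_\theta \pi$ is continuous, hence bounded on the relevant compact set. Everything else is a direct application of the machinery already assembled in the paper.
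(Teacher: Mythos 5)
Your proposal is correct and takes essentially the same route as the paper's proof: normalize the loss class to $B_{\ell,p}^{-1}(\ell^{\pi_\star}_p\circ\Pi_{\theta,p})$, invoke Lemma~\ref{lem: loss params} for the $(B_\theta, B_{\ell,p}^{-1}L_{\ell,p}, q)$-Lipschitz parametric structure, apply Theorem~\ref{thm: local rademacher gen bound}, use realizability ($\pi_\star\in\Pi_{\theta,p}$ plus nonnegativity of the loss forces the ERM's empirical risk to zero) to kill the $2\Ex_n[g]$ term, and rescale by $B_{\ell,p}$. The only cosmetic deviation is your mean-value-theorem sketch of the Lipschitz-in-$\theta$ verification, where the paper's Lemma~\ref{lem: loss params} instead iterates the Fundamental Theorem of Line Integrals (which is where the factor $B_X$ genuinely arises, via the anchor $\pi(0,\theta)=0$); since you cite the lemma as given, this does not affect the correctness of your argument for the corollary.
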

We note that since these generalization bounds solely concern the supervised learning problem of matching the expert on the expert trajectories, the constants do not depend on the stability properties of the trajectories generated by the learned policy $\hat{\pi}_{\mathsf{TaSIL},p}$. 
To convert the generalization bound \eqref{eq: TaSIL generalization bound} to a probabilistic bound on the 
imitation gap $\Gamma_T(\xi; \hat{\pi}_{\mathsf{TaSIL},p})$, we first apply Markov's inequality to bound the probability that the conditions of Theorem \ref{thm: sub-linear imitation gap} or \ref{thm: sup-linear imitation gap} hold by the expected TaSIL loss \eqref{eq: TaSIL generalization bound}, and then apply Corollary~\ref{cor: TaSIL generalization bound} together with Markov's inequality.  
\ifshort{
\begin{theorem}[Rapidly decaying class $\calK$ functions]\label{thm: final fast}
    Assume that $\pi_\star \in \Pi_{\theta,0}$ and let the assumptions of Theorem \ref{thm: sub-linear imitation gap} hold for all $\pi\in\Pi_{\theta,0}$. Let Equation~\eqref{eq: sub-linear neighborhood relation} hold with constants $\mu, \alpha > 0$, and assume without loss of generality that $\alpha/\mu \leq 1$, $L_\pi \mu \geq 1/2$. %
    Let $\hat{\pi}_{\mathsf{TaSIL},0}$ be an empirical risk minimizer of $\ell^{\pi_\star}_{0}$ over the policy class $\Pi_{\theta,0}$ for initial conditions $\curly{\xi_i} \overset{\mathrm{i.i.d.}}{\sim} \calD^n$. 
    Fix a failure probability $\delta \in (0,1)$, and assume that 
    $n \geq \calO(1) \max\curly{B_{\ell, 0}  \tfrac{\kappa_\alpha}{\delta}\log\paren{\tfrac{\kappa_{\alpha}B_\theta L_{\ell, 0}}{\delta} }, \; B_{\ell, 0}  \tfrac{\kappa_\eta}{\delta} \log\paren{\tfrac{\kappa_\eta B_\theta  L_{\ell, 0}}{\delta} }},$
    where $\kappa_\alpha :=  q(\mu/\alpha)^{1/(1+r)}$, $\kappa_{\eta} := q L_\pi \mu/\eta$. Then with probability at least $1-\delta$, the imitation gap evaluated on $\xi \sim \calD$ (drawn independently from $\curly{\xi_i}_{i=1}^n$) satisfies
   %
    %
       \[
       \Gamma_T(\xi; \hat{\pi}_{\mathsf{TaSIL},0}) \leq \calO(1) \mu \Bigg(\frac{1}{\delta} \frac{B_{\ell, 0} q \log(B_\theta B_{\ell, 0}^{-1} L_{\ell, 0} n )}{n}\Bigg)^{1+r}.
       \]

\end{theorem}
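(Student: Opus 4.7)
The plan is to derive the bound in three stages: first, apply Corollary~\ref{cor: TaSIL generalization bound} (with failure probability $\delta/2$) to control the expected $0$-TaSIL loss on the training sample; second, use Markov's inequality at level $\delta/2$ to lift this into a high-probability bound on the per-trajectory loss for a fresh $\xi\sim\calD$; and third, verify that the resulting per-trajectory bound is small enough to trigger the preconditions of Theorem~\ref{thm: sub-linear imitation gap}, which then immediately yields the claimed bound on $\Gamma_T$.

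For the first two stages, applying Corollary~\ref{cor: TaSIL generalization bound} at $p=0$ with failure probability $\delta/2$ shows that, with probability at least $1-\delta/2$ over the training initial conditions,
\begin{align*}
    \Ex_\xi[\ell^{\pi_\star}_0(\xi; \hat{\pi}_{\mathsf{TaSIL},0})] \leq R_n := C_1 B_{\ell,0}\frac{q\log(B_\theta B_{\ell,0}^{-1} L_{\ell,0} n) + \log(2/\delta)}{n},
\end{align*}
for a universal constant $C_1$. Since $\ell^{\pi_\star}_0 \geq 0$, Markov's inequality applied to an independent $\xi\sim\calD$ gives $\ell^{\pi_\star}_0(\xi; \hat{\pi}_{\mathsf{TaSIL},0}) \leq 2R_n/\delta$ with conditional probability at least $1-\delta/2$. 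A union bound yields both events jointly with probability at least $1-\delta$.

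For the third stage, recall that $\ell^{\pi_\star}_0(\xi;\pi) = \max_{0\leq t\leq T-1}\|\Delta^{\pi_\star}_t(\xi;\pi)\|$, so the Markov bound directly controls the maximum per-step expert-trajectory discrepancy. I then verify the two inequalities in~\eqref{eq: sub-linear closeness alpha}: (a)~$\|\Delta^{\pi_\star}_t\|\leq (\alpha/\mu)^{1/(1+r)}$, and (b)~$2L_\pi\mu\|\Delta^{\pi_\star}_t\|^{1+r} + \|\Delta^{\pi_\star}_t\|\leq \eta$, where under the normalizations $\alpha/\mu\leq 1$ and $L_\pi\mu\geq 1/2$ the second is implied (up to absorbed constants) by $\|\Delta^{\pi_\star}_t\|\leq \eta/(4L_\pi\mu)$. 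The two thresholds correspond precisely to the two quantities $\kappa_\alpha = q(\mu/\alpha)^{1/(1+r)}$ and $\kappa_\eta = q L_\pi \mu / \eta$ in the sample complexity hypothesis, so the assumed lower bound on $n$ ensures $2R_n/\delta$ meets both. Invoking Theorem~\ref{thm: sub-linear imitation gap} then gives
\begin{align*}
    \Gamma_T(\xi;\hat{\pi}_{\mathsf{TaSIL},0}) \leq \max_{0\leq k\leq T-1}\mu\|\Delta^{\pi_\star}_k(\xi;\hat{\pi}_{\mathsf{TaSIL},0})\|^{1+r} \leq \mu(2R_n/\delta)^{1+r},
\end{align*}
which matches the claimed bound after absorbing $\log(1/\delta)$ into the polylog.

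The main obstacle is the bookkeeping in the third stage: the $\log$ factor in $R_n$ itself depends on $n$, so converting each of $2R_n/\delta \leq (\alpha/\mu)^{1/(1+r)}$ and $2R_n/\delta \leq \eta/(4L_\pi\mu)$ into an explicit lower bound on $n$ produces a self-referential inequality of the form $n \gtrsim (\mathrm{const})\cdot\log n$. The standard resolution is to replace the $\log n$ term by $\log(\kappa_\alpha B_\theta L_{\ell,0}/\delta)$ (respectively $\log(\kappa_\eta B_\theta L_{\ell,0}/\delta)$) via a monotonicity/iteration argument, which is exactly why the stated sample complexity takes the form $n\geq \calO(1)\max\{\ldots\}$. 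Once this substitution is justified, the remaining manipulations are routine arithmetic.
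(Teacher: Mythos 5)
Your proposal is correct and follows essentially the same route as the paper's proof: apply Corollary~\ref{cor: TaSIL generalization bound} at level $\delta/2$, lift to a per-trajectory bound via Markov's inequality and a union bound, verify the two closeness conditions of Theorem~\ref{thm: sub-linear imitation gap} using exactly the thresholds $\kappa_\alpha$ and $\kappa_\eta$ (with $\alpha/\mu \leq 1$ and $L_\pi \mu \geq 1/2$ used to reduce the $\eta$-condition to $\|\Delta_t^{\pi_\star}\| \leq \eta/(4L_\pi\mu)$), and resolve the self-referential $n \gtrsim b\log(cn)$ requirement. The ``monotonicity/iteration argument'' you invoke for that last step is precisely the paper's Lemma~\ref{lem: inverting gen bounds}, so there is no substantive difference between the two arguments.
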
}
{
\begin{theorem}[Rapidly decaying class $\calK$ functions]\label{thm: final fast}
    Assume that $\pi_\star \in \Pi_{\theta,0}$ and let the assumptions of Theorem \ref{thm: sub-linear imitation gap} hold for all $\pi\in\Pi_{\theta,0}$. Let Equation~\eqref{eq: sub-linear neighborhood relation} hold with constants $\mu, \alpha > 0$, and assume without loss of generality that $\alpha/\mu \leq 1$, $L_\pi \mu \geq 1/2$.
    Let $\hat{\pi}_{\mathsf{TaSIL},0}$ be an empirical risk minimizer of $\ell^{\pi_\star}_{0}$ over the policy class $\Pi_{\theta,0}$ for initial conditions $\curly{\xi_i} \overset{\mathrm{i.i.d.}}{\sim} \calD^n$. 
    Fix a failure probability $\delta \in (0,1)$, and assume that 
    \begin{align*}
    n \geq \calO(1) \max\curly{B_{\ell, 0}  \tfrac{\kappa_\alpha}{\delta}\log\paren{\tfrac{\kappa_{\alpha}B_\theta L_{\ell, 0}}{\delta} }, \; B_{\ell, 0}  \tfrac{\kappa_\eta}{\delta} \log\paren{\tfrac{\kappa_\eta B_\theta  L_{\ell, 0}}{\delta} }},
    \end{align*}
    where $\kappa_\alpha :=  q(\mu/\alpha)^{1/(1+r)}$, $\kappa_{\eta} := q L_\pi \mu/\eta$. Then with probability at least $1-\delta$, the imitation gap evaluated on $\xi \sim \calD$ (drawn independently from $\curly{\xi_i}_{i=1}^n$) satisfies
     \begin{align*}
     \Gamma_T(\xi; \hat{\pi}_{\mathsf{TaSIL},0}) \leq \calO(1) \; \mu \left(\frac{1}{\delta} \frac{B_{\ell, 0} q \log\paren{B_\theta B_{\ell, 0}^{-1} L_{\ell, 0} n }}{n}\right)^{1+r}.
     \end{align*}
    
\end{theorem}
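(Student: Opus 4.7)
The plan is to chain together the generalization bound for the TaSIL loss (Corollary~\ref{cor: TaSIL generalization bound}) with the deterministic pointwise-in-$\xi$ imitation gap bound of Theorem~\ref{thm: sub-linear imitation gap}, using Markov's inequality as the bridge between an expected-loss statement and a statement that holds with high probability on a fresh draw $\xi\sim\calD$. Concretely, I would work with the $0$-TaSIL loss
\[
\ell^{\pi_\star}_0(\xi;\pi) = \max_{0\leq t\leq T-1}\norm{\Delta^{\pi_\star}_t(\xi;\pi)},
\]
and aim to show: on a $1-\delta$ event (over training data and the test $\xi$), the single scalar $\ell^{\pi_\star}_0(\xi;\hat\pi_{\mathsf{TaSIL},0})$ is simultaneously small enough to trigger both hypotheses \eqref{eq: sub-linear closeness alpha} of Theorem~\ref{thm: sub-linear imitation gap}, and then the imitation-gap conclusion \eqref{eq: sub-linear imitation gap} gives the claimed rate.

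First, I invoke Corollary~\ref{cor: TaSIL generalization bound} at confidence $1-\delta/2$. Since $\pi_\star\in\Pi_{\theta,0}$ the realizable bound applies, yielding a training-data event $E_1$ of probability $\geq 1-\delta/2$ on which
\[
\Ex_\xi\brac{\ell^{\pi_\star}_0(\xi;\hat\pi_{\mathsf{TaSIL},0})} \leq M := \calO(1)\,\frac{B_{\ell,0}\bigl(q\log(B_\theta B_{\ell,0}^{-1}L_{\ell,0}n) + \log(2/\delta)\bigr)}{n}.
\]
Second, conditional on $E_1$, I apply Markov's inequality with threshold $2M/\delta$ to get a $\xi$-event $E_2$ of probability $\geq 1-\delta/2$ on which $\ell^{\pi_\star}_0(\xi;\hat\pi_{\mathsf{TaSIL},0})\leq 2M/\delta$. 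A union bound over $E_1\cap E_2$ then gives a pointwise loss bound with probability $\geq 1-\delta$.

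The remaining step is to verify that the sample-complexity lower bound on $n$ is exactly what is needed so that $2M/\delta$ satisfies both inequalities in \eqref{eq: sub-linear closeness alpha}. Writing $\ell := 2M/\delta$, the first condition $\mu\ell^{1+r}\leq\alpha$ is equivalent to $\ell\leq (\alpha/\mu)^{1/(1+r)} = q/\kappa_\alpha$, and solving for $n$ one sees this is implied (up to absolute constants, and swallowing $\log(2/\delta)$ into the $\log$ factor) by the first term in the $\max$ defining the sample-complexity assumption. For the second condition $2L_\pi\mu\ell^{1+r} + \ell \leq \eta$, using $\alpha/\mu\leq 1$ to bound $\ell^{1+r}\leq \ell$ and $L_\pi\mu\geq 1/2$, it suffices to have $\ell\lesssim \eta/(L_\pi\mu)$, i.e.\ $\ell\leq q/\kappa_\eta$ up to constants, which is delivered by the second term in the $\max$. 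Once both hypotheses hold, Theorem~\ref{thm: sub-linear imitation gap} gives
\[
\Gamma_T(\xi;\hat\pi_{\mathsf{TaSIL},0}) \leq \mu\,\ell^{1+r} \leq \mu\,(2M/\delta)^{1+r} \leq \calO(1)\,\mu\!\left(\frac{1}{\delta}\,\frac{B_{\ell,0}\,q\log(B_\theta B_{\ell,0}^{-1}L_{\ell,0}n)}{n}\right)^{1+r},
\]
which is exactly the stated bound.

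The only delicate bookkeeping — and hence the main obstacle — is the bookkeeping of constants in the sample-complexity step: one must check that the two log-linear conditions on $n$ each self-consistently absorb the $\log n$ factor appearing inside $M$ (so that $n\gtrsim \kappa \log(\kappa\cdots)$ is indeed enough to imply $n/\log n \gtrsim \kappa$), and that the simplifying hypotheses $\alpha/\mu\leq 1$ and $L_\pi\mu\geq 1/2$ suffice to collapse the two summands in the $\eta$ condition to a single linear condition on $\ell$. Everything else is a routine union-bound plus substitution.
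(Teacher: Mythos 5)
Your proposal is correct and follows essentially the same route as the paper's own proof: Corollary~\ref{cor: TaSIL generalization bound} at confidence $1-\delta/2$, Markov's inequality at level $\delta/2$ on a fresh $\xi$, a union bound, and then verification that the two terms in the sample-complexity assumption enforce the $\alpha$- and $\eta$-conditions of Theorem~\ref{thm: sub-linear imitation gap}, using $\alpha/\mu \leq 1$ to get $\ell^{1+r}\leq\ell$ and $L_\pi\mu\geq 1/2$ to collapse the $\eta$-condition to a single linear constraint, exactly as the paper does. The ``delicate bookkeeping'' step you flag --- self-consistently inverting the $\log n / n$ conditions --- is precisely what the paper dispatches with Lemma~\ref{lem: inverting gen bounds} (i.e., $n \geq 2b\log(2bc)$ implies $n \geq b\log(cn)$), so your plan is complete.
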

}

\ifshort{
\begin{theorem}[Slowly decaying class $\calK$ functions]\label{thm: final slow}
    Assume that $\pi_\star \in \Pi_{\theta,p}$, and let the assumptions of Theorem~\ref{thm: sup-linear imitation gap} hold for all $\pi \in \Pi_{\theta,p}$. Let Equation~\eqref{eq: sup-linear neighborhood relation} hold with constants $\mu, \alpha > 0$, and without loss of generality let $\big(\frac{\alpha}{2\mu})^r p! \leq 1$. Let $\hat{\pi}_{\mathsf{TaSIL},p}$ be an empirical risk minimizer of $\ell^{\pi_\star}_{p}$ over the policy class $\Pi_{\theta,p}$ for initial conditions $\curly{\xi_i} \overset{\mathrm{i.i.d.}}{\sim} \calD^n$.
    Fix a failure probability $\delta \in (0,1)$, and assume
    $n \geq \calO(1) \max_{j \leq p} \max\curly{ B_j\frac{\kappa_{\alpha, j}}{\delta} \log\paren{\frac{\kappa_{\alpha, j} B_\theta B_X L_j }{\delta} }, \;
         B_j\frac{\kappa_{\eta, j}}{\delta } \log\paren{\frac{\kappa_{\eta, j} B_\theta B_X L_j}{\delta} }}$, where $\kappa_{\alpha, j} :=  \big(\frac{2\mu}{\alpha}\big)^r\frac{p q }{j!}$ and $\kappa_{\eta, j} := \big(\frac{L_{\partial^p \pi}}{p!}\big(\frac{\mu}{\paren{j!}^{1/r}}\big)^{p+1} + \frac{p}{j!}\big) \frac{q}{\eta \delta}$. Then with probability at least $1 - \delta$, the imitation gap evaluated on $\xi \sim \calD$ (drawn independently from $\curly{\xi_i}_{i=1}^n$) satisfies
        \[
        \Gamma_T(\xi; \hat{\pi}_{\mathsf{TaSIL},p}) \leq \calO(1)\mu \max_{j \leq p} \left(\frac{p}{j! \delta} \frac{B_{j} q \log\paren{B_\theta B_j^{-1} B_X L_j n }}{n}\right)^{1/r}.
        \]
\end{theorem}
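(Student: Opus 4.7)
The plan is to combine three ingredients: (i) the generalization bound of Corollary \ref{cor: TaSIL generalization bound}, (ii) Markov's inequality to convert a bound in expectation into a high-probability bound on a fresh draw of $\xi$, and (iii) the deterministic imitation-gap guarantee of Theorem \ref{thm: sup-linear imitation gap}. Because the final bound treats each derivative order $j$ separately (note the per-$j$ constants $B_j$, $L_j$, and the $j!$ factors appearing in $\kappa_{\alpha,j}$, $\kappa_{\eta,j}$), I would not apply Corollary \ref{cor: TaSIL generalization bound} to the averaged loss $\ell^{\pi_\star}_p$ directly, but instead apply Theorem \ref{thm: local rademacher gen bound} to each per-order loss
\[
g_j(\xi;\pi) := \max_{0\leq t\leq T-1}\|\partial^j_x \Delta^{\pi_\star}_t(\xi;\pi)\|, \qquad j=0,\dots,p,
\]
and then union-bound over $j$ at the end.

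Step 1 (generalization). The same Taylor-expansion argument underlying Lemma \ref{lem: loss params} shows that $(2B_j)^{-1}\{g_j(\cdot;\pi):\pi\in\Pi_{\theta,p}\}$ is a $(B_\theta,\,(2B_j)^{-1}B_X L_j,\,q)$-Lipschitz parametric class, and realizability at zero empirical risk is preserved because $\pi_\star\in\Pi_{\theta,p}$ forces $g_j(\cdot;\pi_\star)\equiv 0$. Applying Theorem \ref{thm: local rademacher gen bound} to each class with failure level $\delta/(2(p+1))$ and union-bounding over $j$ yields, with probability $\geq 1-\delta/2$ over $\{\xi_i\}_{i=1}^n$, the simultaneous bounds
\[
\Ex_\xi\bigl[g_j(\xi;\hat{\pi}_{\mathsf{TaSIL},p})\bigr] \;\leq\; \calO(1)\, B_j\,\frac{q\log\!\paren{B_\theta B_j^{-1} B_X L_j n}+\log(p/\delta)}{n}.
\]

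Step 2 (Markov + independence). Applying Markov's inequality to each nonnegative $g_j$ at level $\delta/(2(p+1))$ on an independent fresh draw $\xi\sim\calD$, and union-bounding again over $j$, gives with total probability $\geq 1-\delta$,
\[
g_j(\xi;\hat{\pi}_{\mathsf{TaSIL},p}) \;\leq\; \tilde E_j \;:=\; \calO(1)\,\frac{p}{\delta}\, B_j\,\frac{q\log\!\paren{B_\theta B_j^{-1} B_X L_j n}}{n}.
\]

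Step 3 (verify Theorem \ref{thm: sup-linear imitation gap} and conclude). Substituting $\|\partial^j_x\Delta^{\pi_\star}_t(\xi;\hat{\pi}_{\mathsf{TaSIL},p})\|\leq \tilde E_j$, condition \eqref{eq: sup-linear closeness alpha} becomes $\mu(2\tilde E_j/j!)^{1/r}\leq \alpha$, which, after raising to the $r$-th power and solving for $n$, is exactly the lower bound $n\gtrsim B_j (\kappa_{\alpha,j}/\delta)\log(\kappa_{\alpha,j}B_\theta B_X L_j/\delta)$ with $\kappa_{\alpha,j}=(2\mu/\alpha)^r p q/j!$; the normalization $(\alpha/(2\mu))^r p!\leq 1$ is what lets one clean the $j!$ factor inside the logarithm. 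The mixed inequality \eqref{eq: sup-linear closeness eta} reduces analogously to the $\kappa_{\eta,j}$ bound, by controlling its linear-in-$\|\partial^j_x\Delta\|$ and $(p+1)/r$-power terms separately and adding. Once both admissibility conditions hold on the $1-\delta$ event, Theorem \ref{thm: sup-linear imitation gap} applies pointwise and yields $\Gamma_T(\xi;\hat{\pi}_{\mathsf{TaSIL},p})\leq \max_j \mu(2\tilde E_j/j!)^{1/r}$, which is exactly the claimed expression after absorbing $2^{1/r}$ into the $\calO(1)$ constant.

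The hard part is not any single step but the bookkeeping: three independent sources of randomness (training draw, Markov failure, and the $p+1$ derivative orders) must be glued with disciplined union bounds, and then the admissibility inequalities \eqref{eq: sup-linear closeness alpha}--\eqref{eq: sup-linear closeness eta} must be algebraically inverted into clean sample-size lower bounds involving $\kappa_{\alpha,j}$ and $\kappa_{\eta,j}$. Condition \eqref{eq: sup-linear closeness eta} is the most delicate because it couples a linear term in $\|\partial^j_x\Delta\|$ with a $(p+1)/r$-power term—this is precisely why $\kappa_{\eta,j}$ is defined as the sum of two contributions, and why the technical assumption $(\alpha/(2\mu))^r p!\leq 1$ suffices to tame the resulting logs.
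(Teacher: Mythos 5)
Your proposal matches the paper's proof essentially step for step: the paper likewise applies Theorem~\ref{thm: local rademacher gen bound} to each per-derivative loss $h_j^{\pi_\star}(\xi;\pi):=\max_{0\leq t\leq T-1}\|\partial^j_x\Delta^{\pi_\star}_t(\xi;\pi)\|$ (a $(B_\theta,\,0.5B_j^{-1}B_X L_j,\,q)$-Lipschitz class after normalizing by $2B_j$, with zero empirical risk by realizability) at level $\delta/(2(p+1))$, applies Markov's inequality at the same level on a fresh draw, union-bounds over $j\leq p$, and inverts conditions~\eqref{eq: sup-linear closeness alpha}--\eqref{eq: sup-linear closeness eta} via Lemma~\ref{lem: inverting gen bounds} before invoking Theorem~\ref{thm: sup-linear imitation gap}. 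One small clarification: the normalization $(\alpha/(2\mu))^r p!\leq 1$ serves to guarantee $\max_t\|\partial^j_x\Delta^{\pi_\star}_t\|\leq 1$ so that the $(p+1)/r$-power term in~\eqref{eq: sup-linear closeness eta} is dominated by the linear term, rather than (as you suggest) to clean the $j!$ factor inside the logarithm.
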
}
{
\begin{theorem}[Slowly decaying class $\calK$ functions]\label{thm: final slow}
    Assume that $\pi_\star \in \Pi_{\theta,p}$, and let the assumptions of Theorem~\ref{thm: sup-linear imitation gap} hold for all $\pi \in \Pi_{\theta,p}$. Let Equation~\eqref{eq: sup-linear neighborhood relation} hold with constants $\mu, \alpha > 0$. Let $\hat{\pi}_{\mathsf{TaSIL},p}$ be an empirical risk minimizer of $\ell^{\pi_\star}_{p}$ over the policy class $\Pi_{\theta,p}$ for initial conditions $\curly{\xi_i} \overset{\mathrm{i.i.d.}}{\sim} \calD^n$.
    Fix a failure probability $\delta \in (0,1)$, and assume
    \begin{align*}
        n \geq \calO(1) \max_{j \leq p} \max\curly{ B_j\frac{\kappa_{\alpha, j}}{\delta} \log\paren{\frac{\kappa_{\alpha, j} B_\theta B_j^{-1} B_X L_j }{\delta} }, \;
         B_j\frac{\kappa_{\eta, j}}{\delta } \log\paren{\frac{\kappa_{\eta, j} B_\theta B_j^{-1} B_X L_j}{\delta} }},
    \end{align*}
    where $\kappa_{\alpha, j} := \paren{\frac{\mu}{\alpha}}^{r}\frac{p q }{j!}$ and $\kappa_{\eta, j} := \big(\frac{L_{\partial^p \pi}}{(p+1)!}\frac{\mu^{p + 1}}{\paren{j!}^{(p+1)/r}} + \frac{1}{j!}\big) \frac{pq}{\eta \delta}$. Then with probability at least $1 - \delta$, the imitation gap evaluated on $\xi \sim \calD$ (drawn independently from $\curly{\xi_i}_{i=1}^n$) satisfies
    \begin{align*}
        \Gamma_T(\xi; \hat{\pi}_{\mathsf{TaSIL},p}) \leq \calO(1)\;\mu \max_{j \leq p} \left(\frac{p}{j! \delta} \frac{B_{j} q \log\paren{B_\theta B_j^{-1} B_X L_j n }}{n}\right)^{1/r}.
    \end{align*}
\end{theorem}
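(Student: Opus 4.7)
The plan is to compose the generalization bound of Corollary~\ref{cor: TaSIL generalization bound} with Markov's inequality and a pointwise application of Theorem~\ref{thm: sup-linear imitation gap}. Since the TaSIL loss $\ell^{\pi_\star}_p$ is an average of the $p+1$ component quantities $\ell_j(\xi;\pi) := \max_{0 \le t \le T-1} \norm{\partial^j_x \Delta^{\pi_\star}_t(\xi;\pi)}$, and the conditions of Theorem~\ref{thm: sup-linear imitation gap} are separate upper bounds on each $\norm{\partial^j_x \Delta^{\pi_\star}_t}$, I would apply the generalization argument to each $j$ individually rather than to $\ell^{\pi_\star}_p$ as a whole; this is what makes the constants $B_j, L_j$ (rather than $B_{\ell,p}, L_{\ell,p}$) appear in the final bound.

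First I would verify, by an argument parallel to Lemma~\ref{lem: loss params}, that for each $j \in \{0,\dots,p\}$ the normalized class $(2B_j)^{-1}\{\ell_j(\cdot;\pi) : \pi \in \Pi_{\theta,p}\}$ is a $(B_\theta, (2B_j)^{-1}B_X L_j, q)$-Lipschitz parametric class mapping into $[0,1]$; realizability ($\pi_\star \in \Pi_{\theta,p}$) makes the empirical mean for $\hat{\pi}_{\mathsf{TaSIL},p}$ vanish, so Theorem~\ref{thm: local rademacher gen bound} together with a union bound over $j$ gives, with probability at least $1-\delta/2$,
\[
\Ex_\xi[\ell_j(\xi; \hat{\pi}_{\mathsf{TaSIL},p})] \leq \calO(1) B_j \frac{q \log(B_\theta B_j^{-1} B_X L_j n) + \log(p/\delta)}{n}, \quad \forall\, j \leq p.
\]

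Next I would convert these expectation bounds into pointwise bounds on a fresh $\xi \sim \calD$ via Markov's inequality. The two preconditions (\ref{eq: sup-linear closeness alpha}) and (\ref{eq: sup-linear closeness eta}) of Theorem~\ref{thm: sup-linear imitation gap} are, for each $j$, simple inequalities of the form $\ell_j(\xi;\hat\pi) \leq \tau_{\alpha,j}$ and $\ell_j(\xi;\hat\pi) \leq \tau_{\eta,j}$, where $\tau_{\alpha,j} = \tfrac{j!}{2}(\alpha/\mu)^r$ and $\tau_{\eta,j}$ is the unique positive root (in $\ell_j$) of the corresponding mixed polynomial inequality. Taking a union bound of $2(p+1)$ Markov applications each at failure level $\delta/(2(p+1))$ shows that, provided the sample size $n$ satisfies $\Ex_\xi[\ell_j] \leq \tau_{\cdot,j}\,\delta/(2(p+1))$, all conditions hold simultaneously with probability at least $1 - \delta/2$. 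Unpacking these threshold inequalities against the generalization bound reproduces exactly the two branches in the stated sample-complexity hypothesis, with $\kappa_{\alpha,j}$ coming from the $\alpha$-condition and $\kappa_{\eta,j}$ from the $\eta$-condition. Conditioning on the joint event (probability $\ge 1-\delta$), Theorem~\ref{thm: sup-linear imitation gap} then delivers
\[
\Gamma_T(\xi; \hat{\pi}_{\mathsf{TaSIL},p}) \leq \max_{j \leq p} \mu\paren{\tfrac{2}{j!} \ell_j(\xi; \hat{\pi}_{\mathsf{TaSIL},p})}^{1/r},
\]
and substituting the Markov-bounded values of $\ell_j$ yields the stated rate.

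\textbf{Main obstacle.} The bookkeeping in translating the two preconditions of Theorem~\ref{thm: sup-linear imitation gap} into clean sample-complexity thresholds is the delicate step. The $\alpha$-condition (\ref{eq: sup-linear closeness alpha}) is linear in $\ell_j$ after raising to the $r$-th power, so it cleanly gives the $(\mu/\alpha)^r$ factor in $\kappa_{\alpha,j}$. The $\eta$-condition (\ref{eq: sup-linear closeness eta}), however, mixes a linear term $\tfrac{2}{j!}\norm{\partial^j_x\Delta}$ with a higher-order term of exponent $(p+1)/r$ in $\norm{\partial^j_x\Delta}$. Since the sample-complexity regime we operate in forces $\ell_j$ to be small, the linear term dominates and one can replace the full inequality by a linear one at the cost of an absolute constant — this is the step where the form of $\kappa_{\eta,j}$ (an additive combination of $L_{\partial^p\pi}\mu^{p+1}/(p+1)!\,(j!)^{-(p+1)/r}$ and $1/j!$) arises and must be verified carefully.
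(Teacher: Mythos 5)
Your proposal is correct and follows essentially the same route as the paper's proof: decompose the TaSIL loss into the per-derivative components $h_j^{\pi_\star}(\xi;\pi) = \max_t \|\partial^j_x \Delta^{\pi_\star}_t(\xi;\pi)\|$, show each normalized class $(2B_j)^{-1}(h_j^{\pi_\star}\circ\Pi_{\theta,p})$ is $(B_\theta, (2B_j)^{-1}B_X L_j, q)$-Lipschitz, invoke realizability plus Theorem~\ref{thm: local rademacher gen bound}, apply Markov's inequality and union bounds over $j$, and invert the resulting thresholds via Lemma~\ref{lem: inverting gen bounds}. Your handling of the mixed $\eta$-condition also matches the paper, which uses the normalization $(\alpha/\mu)^r p! \leq 2$ to force $\|\partial^j_x\Delta^{\pi_\star}_t\| \leq 1$ and thereby bound the $(p+1)/r$-power term by the linear one, yielding the additive form of $\kappa_{\eta,j}$; your failure-probability bookkeeping differs from the paper's by constant factors that are absorbed into the $\calO(1)$.
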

}

 In the rapidly decaying setting, corresponding to more robust experts, Theorem~\ref{thm: final fast} states that $n\gtrsim  \varepsilon^{-\frac{1}{1+r}}/\delta$ expert trajectories are sufficient to ensure that the imitation gap $\Gamma_T(\xi; \hat\pi_{\mathsf{TaSIL},0})\lesssim \varepsilon$ with probability at least $1-\delta$.  Recall that more robust experts have larger values of $r>0$, leading to smaller sample-complexity bounds.  In contrast, to achieve the same guarantees on the imitation gap in the slowly decaying setting, Theorem~\ref{thm: final slow} states $n\gtrsim \varepsilon^{-r}/\delta$ expert trajectories are sufficient, where we recall that less robust experts have larger values of $r\geq 1$. These theorems quantitatively show how the robustness of an underlying expert affects the sample-complexity of IL, with more robust experts enjoying better dependence on $\varepsilon$ than less robust experts. We note that analogous dependencies on expert stability are reflected in the burn-in requirements, i.e., the number of expert trajectories required to ensure with high probability no catastrophic distribution shift occurs, of each theorem.
\section{Experiments}\label{sec: experiments}
We compare three standard imitation learning algorithms, Behavior Cloning, DAgger, and DART, to TaSIL-augmented loss versions.  In TaSIL-augmented algorithms, we replace the standard imitation loss function
$$\ell^{\pi_\star}_{\mathsf{IL}}(\{\xi_i\}_{i=1}^n; \pi) := \tfrac{1}{n}\textstyle\sum_{i=1}^n \textstyle\sum_{t=0}^{T-1} \|\Delta_t^{\pi_\star}(\xi_i; \pi)\|$$
with the $p$-TaSIL-augmented loss
$$\ell^{\pi_\star}_{\mathsf{TaSIL}, p}(\{\xi_i\}_{i=1}^n; \pi) := \tfrac{1}{n}\textstyle\sum_{i=1}^n \textstyle\sum_{t=1}^{T-1}\textstyle\sum_{j=0}^p\lambda_j\|\mathrm{vec}(\partial^{j}_x\Delta_t^{\pi_{*}}(\xi_i; \pi))\|,$$
where $\{\lambda_j\}_{j=0}^p$ are positive tunable regularization parameters. We use the Euclidean norm of the vectorized error in the derivative tensors as more optimizer-amenable surrogate to the operator norm.

\ifshort{
\begin{wrapfigure}{r}{.55\columnwidth}
    \centering
\includegraphics[width=0.49\linewidth]{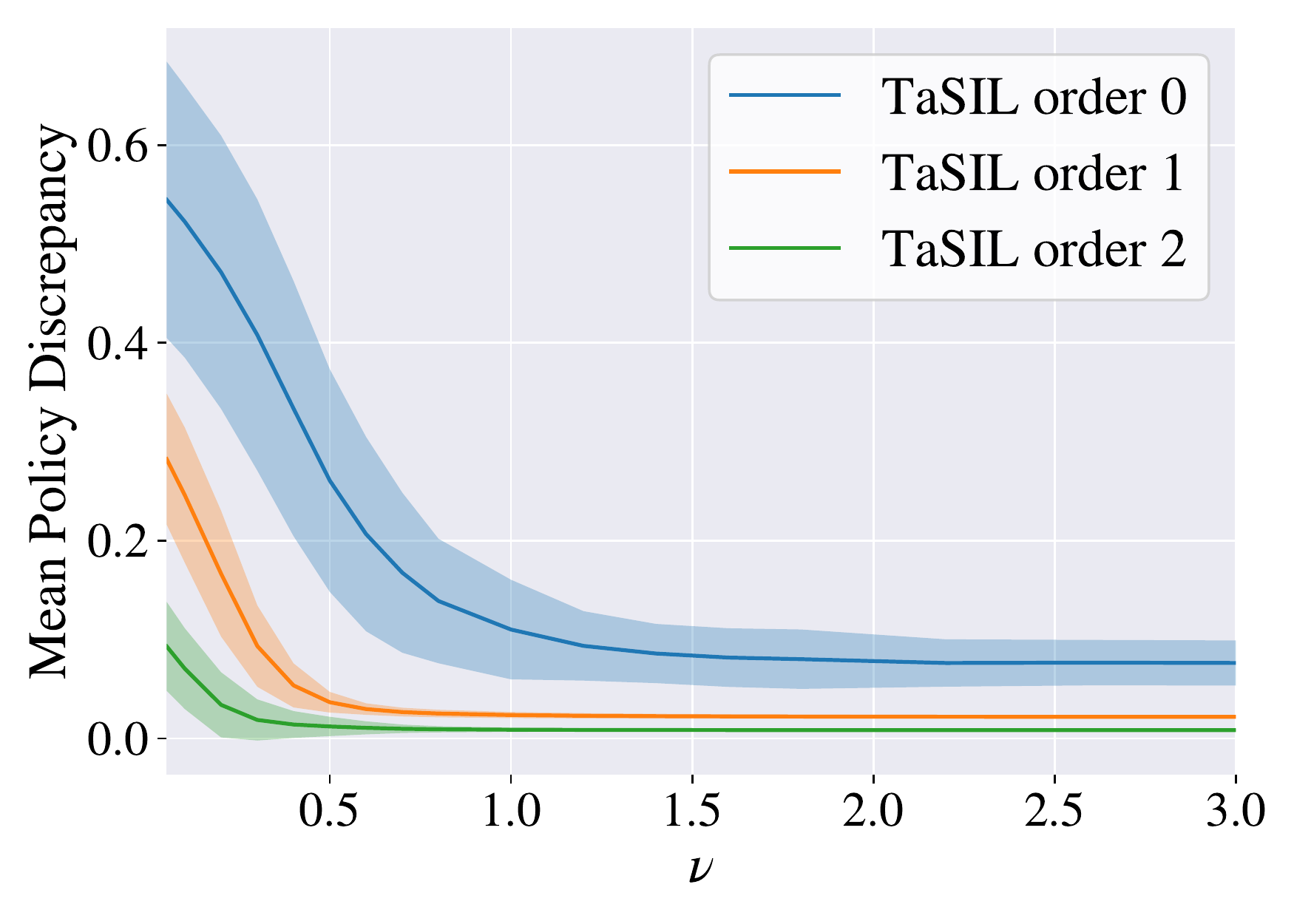}~\includegraphics[width=0.49\linewidth]{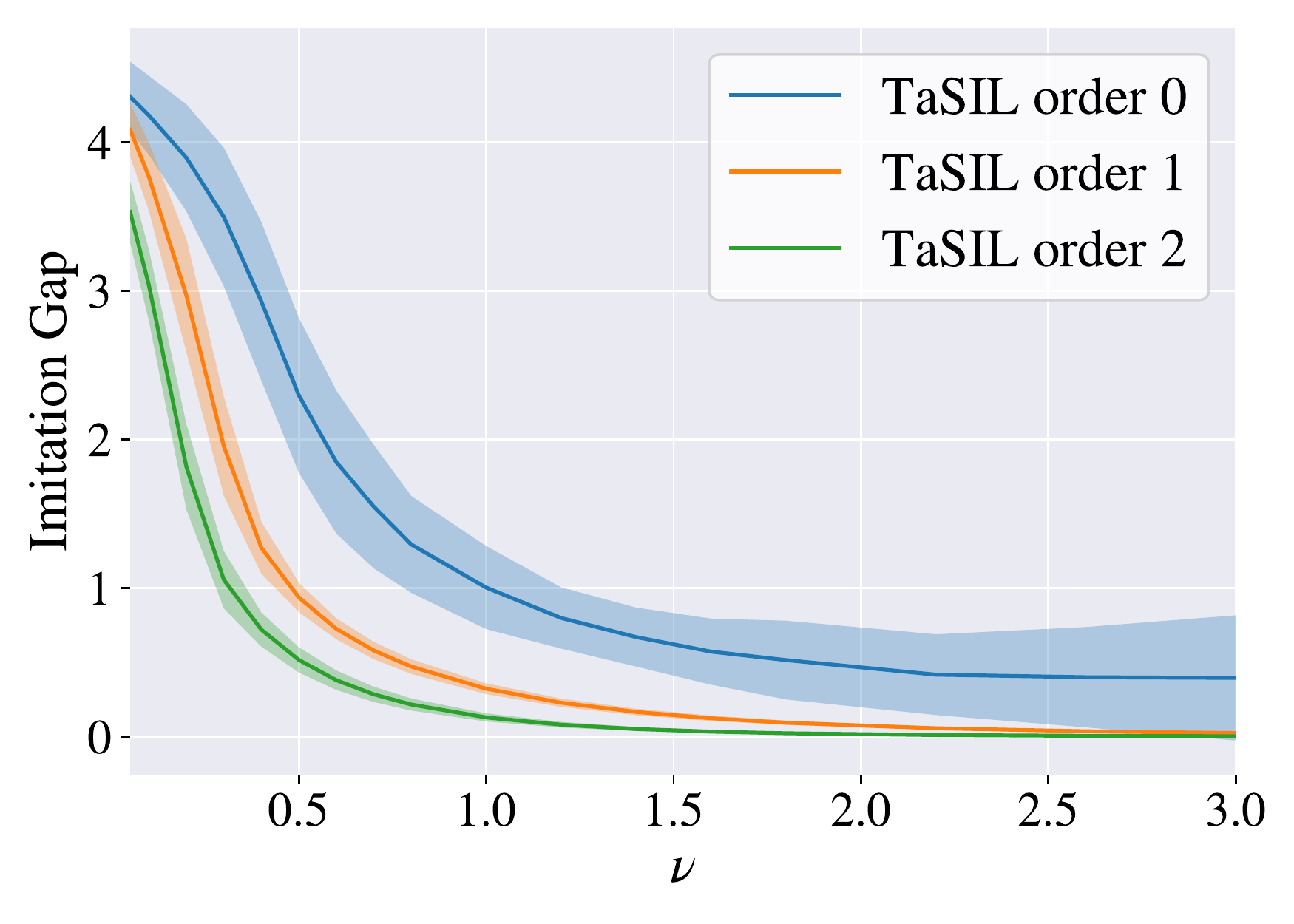}
    
    \caption{Left: The average Euclidean norm difference between the expert and learned policies on trajectories rolled out under the learned policy. Right: The maximum deviation between an expert and learned policy trajectory starting from identical initial conditions. All statistics are averaged across 50 test trajectories and we plot the mean and standard deviation for 10 random seeds.}
    \label{fig: gamma_sweep}
\end{wrapfigure}
}
{
\begin{figure}[t]
    \centering
\includegraphics[width=0.49\linewidth]{figs/gamma_exp/gamma_sweep_error.pdf}~\includegraphics[width=0.49\linewidth]{figs/gamma_exp/gamma_sweep_delta.pdf}
    
    \caption{\textbf{Left:} The average Euclidean norm difference between the expert and learned policies on trajectories rolled out under the learned policy. \textbf{Right:} The maximum deviation between an expert and learned policy trajectory starting from identical initial conditions. All statistics are averaged across 50 test trajectories and we plot the mean and standard deviation for 10 random seeds.}
    \label{fig: gamma_sweep}
\end{figure}
}

We experimentally demonstrate (i) the effect of the expert stability properties and order of the TaSIL loss, and (ii) that TaSIL-loss-based imitation learning algorithms are more sample efficient than their standard counterparts.  All experiments\footnote{The code used for these experiments can be found at \url{https://github.com/unstable-zeros/TaSIL}} are carried out using Jax \citep{jax2018github} GPU acceleration and automatic differentiation capabilities to compute the higher-order derivatives, and the Flax \citep{flax2020github} neural network and Optax \citep{optax2020github} optimization toolkits. 

\ifshort{\vspace{-1em}}{}\paragraph{Stability Experiments} To illustrate the effect of the expert closed-loop system stability on sample-complexity, 
we consider a simple $\delta$-ISS stable dynamical system
with a tunable $\gamma$ input-to-state gain function. 
%
For state and input $x_t, u_t \in \R^{10}$, the dynamics are:
$$x_{t +1} = \eta x_t + (1 - \eta)\frac{\gamma(\|h(x_t) + u_t\|)}{\|h(x_t) + u_t\|} (h(x_t) + u_t).$$
 The perturbation function $h: \R^{10} \to \R^{10}$ is set to a randomly initialized MLP with two hidden layers of width 32 and GELU \citep{hendrycks2016gaussian} activations such that the expert $\pi_\star(x) = -h(x)$ yield a closed loop system $\fcl{\pi_\star}(x, \Delta)$ which is $\delta$-ISS stable with the specified class $\calK$ function $\gamma$ (see Appendix \ref{appendix: stability experiment details}). We use $\eta = 0.95$ for all experiments presented here.

We investigate the performance of $p$-TaSIL loss functions for $\delta$-ISS system with different class $\calK$ stability. We sweep $\calK$ functions $\gamma(x) = Cx^\nu$ for $\nu\in[0.05,3]$, $C = 5$ and $p$-TaSIL loss functions for $p \in  \{0, 1, 2\}$ (additional details can be found in Appendix \ref{appendix: stability experiment details}). The results of this sweep are shown in Figure \ref{fig: gamma_sweep}. Higher-order $p$-TaSIL losses significantly reduce both the imitation gap and the mean policy discrepancy on test trajectories. Notably, the first and second order TaSIL loss maintain their improved performance for slower decaying class $\calK$ functions. Theorem \ref{thm: sub-linear imitation gap} and Theorem \ref{thm: sup-linear imitation gap} yield lower bounds of $\nu = 1$, $\nu = 2^{-1}$, and $\nu =3^{-1}$ for closing the imitation gap using the $0$-TaSIL, $1$-TaSIL, and $2$-TaSIL losses respectively. Figure \ref{fig: gamma_sweep} demonstrates significant performance degradation in policy discrepancy and decaying imitation gap starting around these threshold values.

\vspace{-1em}\paragraph{MuJoCo Experiments} We evaluate the ability of the TaSIL loss to improve performance on standard imitation learning tasks by modifying Behavior Cloning, DAgger~\citep{ross2011reduction}, and DART~\citep{laskey2017dart} to use the $\ell_{\mathsf{TaSIL},1}$ loss and testing them in simulation on different OpenAI Gym MuJoCo tasks \citep{openai2016gym}. 
The MuJoCo environments we use and their corresponding (state, input) dimensions are: Walker2d-v3 ($17, 6$), HalfCheetah-v3 ($17$, $6$), Humanoid-v3 ($376,17$), and Ant-v3 ($111, 8$).

\ifshort{
\begin{wrapfigure}{r}{.75\linewidth}
    \centering
    \includegraphics[width=1.0\linewidth]{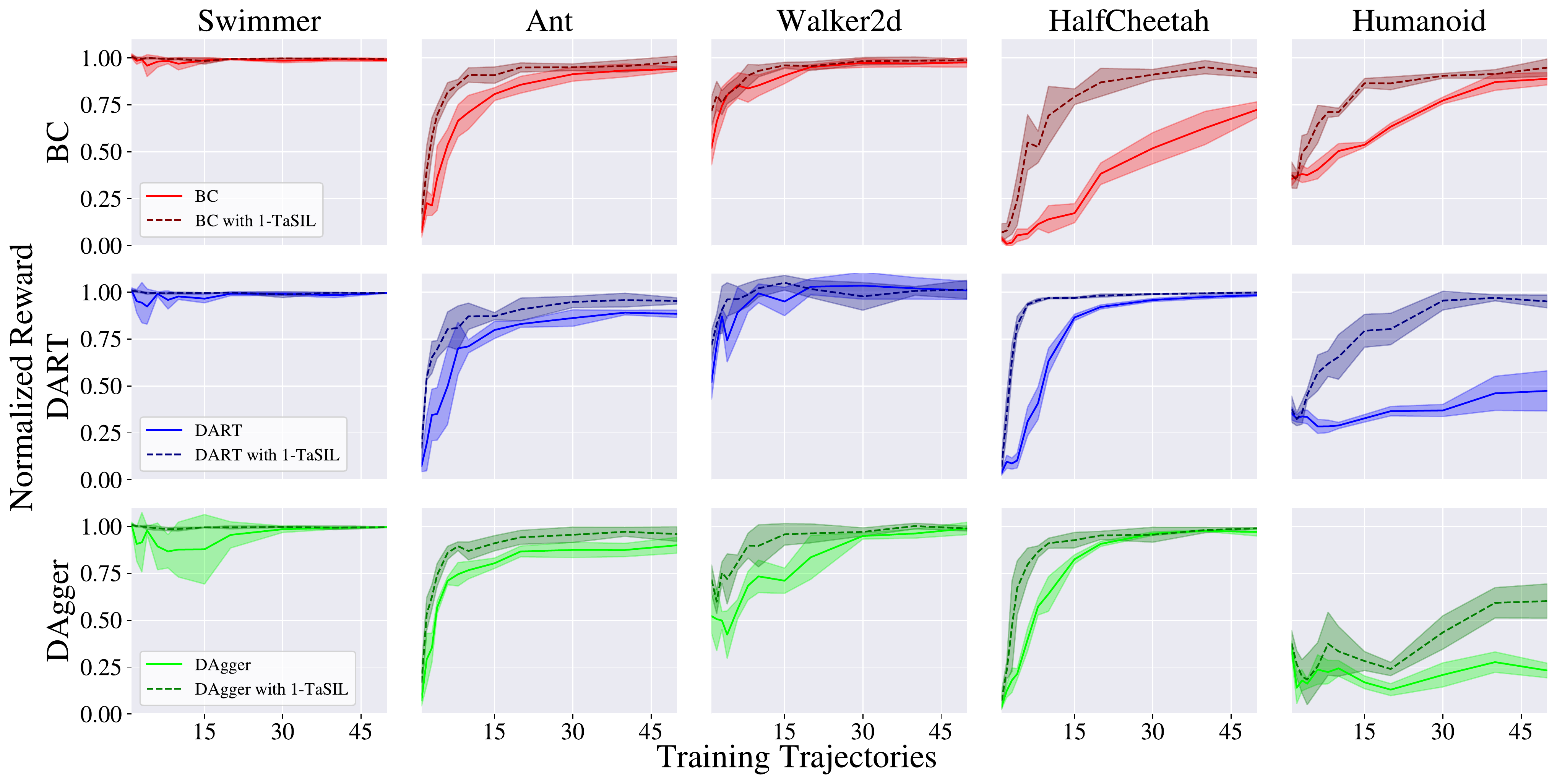}
    \caption{Cumulative expert-normalized rewards as a function of trajectory budget for policies trained using different algorithms with and without 1-TaSIL loss.
    }
    \label{fig: mujoco}
\end{wrapfigure}
}
{
\begin{figure}[t]
    \centering
    \includegraphics[width=1.0\linewidth]{figs/mujoco/visualize_grid.pdf}
    \caption{Cumulative expert-normalized rewards as a function of trajectory budget for policies trained using different algorithms with and without 1-TaSIL loss.
    }
    \label{fig: mujoco}
\end{figure}
}

For all environments we use pretrained expert policies obtained using Soft Actor Critic reinforcement learning by the Stable-Baselines3 \citep{stable-baselines3} project. The experts consist of Multi-Layer Perceptrons with two hidden layers of 256 units each and ReLU activations. For all environments, learned policies have 2 hidden layers with 512 units each and GELU activations in addition to Batch Normalization. The final policy output for both the expert and learned policy are rescaled to the valid action space after applying a tanh nonlinearity. We used trajectories of length $T = 300$ for all experiments. We refer to Appendix \ref{appendix: mujoco experiment details} for additional experiment details.

In Figure \ref{fig: mujoco} we report the mean expert-normalized rewards across $5$ seeds for all algorithms and environments as a function of the trajectory budget provided. Algorithms with 1-TaSIL loss showed significant improvement in sample-complexity across all challenging environments. The expert for the Swimmer environment is very robust due to the simplicity of the task, and so as predicted by Theorem~\ref{thm: sub-linear imitation gap} and~\ref{thm: final fast}, all algorithms (with the exception of the vanilla DAgger algorithm due to it initially selecting poor rollout trajectories) are able to achieve near expert performance across all trajectory budgets. We are also able to nearly match or exceed the performance of standard on-policy methods DAgger and DART with our off-policy 1-TaSIL loss Behavior Cloning in all environments.  Additional experimental results can be found in the appendix.  These include representative videos of the behavior achieved by the expert, BC, and TaSIL-augmented BC policies in the supplementary material, where once again, a striking improvement is observed, especially in low-data regimes for harder environments, as well as a systematic study of finite-difference-based approximations of $1$-TaSIL which achieve comparable performance to Jacobian-based implementations.

\vspace{-.5em}
\section{Conclusion} 
\vspace{-.5em}
We presented Taylor Series Imitation Learning (TaSIL), a simple augmentation to behavior cloning that penalizes deviations in the higher-order Tayler series terms between
the learned and expert policies.  We showed that $\delta$-ISS experts are easier to learn, both in terms of the loss-function that needs to be optimized and sample-complexity guarantees. Finally, we showed the benefit of using TaSIL-augmented losses in BC, DAgger, and DART across a variety of MuJoCo tasks.  This work opens up many exciting future directions, including extending TaSIL to pixel-based IL, and to (offline/inverse) reinforcement learning settings.

\section*{Acknowledgements}
We thank Vikas Sindhwani, Sumeet Singh, Jean-Jacques E.\ Slotine, Jake Varley, and Fengjun Yang for helpful feedback.
Nikolai Matni is supported by NSF awards CPS-2038873, CAREER award ECCS-2045834, and a Google Research Scholar award. 

\bibliographystyle{unsrtnat}
\bibliography{references}

\begin{thebibliography}{42}
\providecommand{\natexlab}[1]{#1}
\providecommand{\url}[1]{\texttt{#1}}
\expandafter\ifx\csname urlstyle\endcsname\relax
  \providecommand{\doi}[1]{doi: #1}\else
  \providecommand{\doi}{doi: \begingroup \urlstyle{rm}\Url}\fi

\bibitem[Hussein et~al.(2017)Hussein, Gaber, Elyan, and
  Jayne]{hussein2017imitation}
Ahmed Hussein, Mohamed~M. Gaber, Eyad Elyan, and Chrisina Jayne.
\newblock Imitation learning: A survey of learning methods.
\newblock \emph{ACM Computing Surveys (CSUR)}, 50\penalty0 (2):\penalty0 1--35,
  2017.

\bibitem[Osa et~al.(2018)Osa, Pajarinen, Neumann, Bagnell, Abbeel, and
  Peters]{osa2018algorithmic}
Takayuki Osa, Joni Pajarinen, Gerhard Neumann, J.~Andrew Bagnell, Pieter
  Abbeel, and Jan Peters.
\newblock An algorithmic perspective on imitation learning.
\newblock \emph{Foundations and Trends® in Robotics}, 7\penalty0
  (1--2):\penalty0 1--179, 2018.

\bibitem[Pomerleau(1988)]{pomerleau1989alvinn}
Dean~A. Pomerleau.
\newblock Alvinn: An autonomous land vehicle in a neural network.
\newblock In \emph{Advances in Neural Information Processing Systems},
  volume~1, 1988.

\bibitem[Codevilla et~al.(2018)Codevilla, Miiller, L{\'o}pez, Koltun, and
  Dosovitskiy]{codevilla2018end}
Felipe Codevilla, Matthias Miiller, Antonio L{\'o}pez, Vladlen Koltun, and
  Alexey Dosovitskiy.
\newblock End-to-end driving via conditional imitation learning.
\newblock In \emph{2018 IEEE International Conference on Robotics and
  Automation (ICRA)}, pages 4693--4700, 2018.

\bibitem[Schaal(1999)]{schaal1999imitation}
Stefan Schaal.
\newblock Is imitation learning the route to humanoid robots?
\newblock \emph{Trends in cognitive sciences}, 3\penalty0 (6):\penalty0
  233--242, 1999.

\bibitem[Ross et~al.(2011)Ross, Gordon, and Bagnell]{ross2011reduction}
St{\'{e}}phane Ross, Geoffrey Gordon, and Drew Bagnell.
\newblock A reduction of imitation learning and structured prediction to
  no-regret online learning.
\newblock In \emph{Proceedings of the Fourteenth International Conference on
  Artificial Intelligence and Statistics}, volume~15, pages 627--635. PMLR,
  2011.

\bibitem[Laskey et~al.(2017)Laskey, Lee, Fox, Dragan, and
  Goldberg]{laskey2017dart}
Michael Laskey, Jonathan Lee, Roy Fox, Anca Dragan, and Ken Goldberg.
\newblock Dart: Noise injection for robust imitation learning.
\newblock In \emph{Proceedings of the 1st Annual Conference on Robot Learning},
  volume~78, pages 143--156. PMLR, 2017.

\bibitem[Ho and Ermon(2016)]{ho2016generative}
Jonathan Ho and Stefano Ermon.
\newblock Generative adversarial imitation learning.
\newblock In \emph{Advances in Neural Information Processing Systems},
  volume~29, 2016.

\bibitem[Shalev-Shwartz and Ben-David(2014)]{shalev2014understanding}
Shai Shalev-Shwartz and Shai Ben-David.
\newblock \emph{Understanding machine learning: From theory to algorithms}.
\newblock Cambridge university press, 2014.

\bibitem[Hertneck et~al.(2018)Hertneck, K{\"o}hler, Trimpe, and
  Allg{\"o}wer]{hertneck2018learning}
Michael Hertneck, Johannes K{\"o}hler, Sebastian Trimpe, and Frank
  Allg{\"o}wer.
\newblock Learning an approximate model predictive controller with guarantees.
\newblock \emph{IEEE Control Systems Letters}, 2\penalty0 (3):\penalty0
  543--548, 2018.

\bibitem[Yin et~al.(2022)Yin, Seiler, Jin, and Arcak]{yin2020imitation}
He~Yin, Peter Seiler, Ming Jin, and Murat Arcak.
\newblock Imitation learning with stability and safety guarantees.
\newblock \emph{IEEE Control Systems Letters}, 6:\penalty0 409--414, 2022.

\bibitem[Ren et~al.(2021)Ren, Veer, and Majumdar]{ren2021generalization}
Allen Ren, Sushant Veer, and Anirudha Majumdar.
\newblock Generalization guarantees for imitation learning.
\newblock In \emph{Proceedings of the 2020 Conference on Robot Learning},
  volume 155, pages 1426--1442. PMLR, 2021.

\bibitem[Tu et~al.(2021)Tu, Robey, Zhang, and Matni]{tu2021sample}
Stephen Tu, Alexander Robey, Tingnan Zhang, and Nikolai Matni.
\newblock On the sample complexity of stability constrained imitation learning.
\newblock \emph{arXiv preprint arXiv:2102.09161}, 2021.

\bibitem[Lohmiller and Slotine(1998)]{lohmiller1998contraction}
Winfried Lohmiller and Jean-Jacques~E. Slotine.
\newblock On contraction analysis for non-linear systems.
\newblock \emph{Automatica}, 34\penalty0 (6):\penalty0 683--696, 1998.

\bibitem[Singh et~al.(2020)Singh, Richards, Sindhwani, Slotine, and
  Pavone]{singh2020learning}
Sumeet Singh, Spencer~M. Richards, Vikas Sindhwani, Jean-Jacques~E. Slotine,
  and Marco Pavone.
\newblock Learning stabilizable nonlinear dynamics with contraction-based
  regularization.
\newblock \emph{The International Journal of Robotics Research}, 40:\penalty0
  1123--1150, 2020.

\bibitem[Lemme et~al.(2014)Lemme, Neumann, Reinhart, and
  Steil]{lemme2014neural}
Andre Lemme, Klaus Neumann, R.~Felix Reinhart, and Jochen~J. Steil.
\newblock Neural learning of vector fields for encoding stable dynamical
  systems.
\newblock \emph{Neurocomputing}, 141:\penalty0 3--14, 2014.

\bibitem[Ravichandar et~al.(2017)Ravichandar, Salehi, and
  Dani]{ravichandar2017learning}
Harish Ravichandar, Iman Salehi, and Ashwin Dani.
\newblock Learning partially contracting dynamical systems from demonstrations.
\newblock In \emph{Proceedings of the 1st Annual Conference on Robot Learning},
  volume~78, pages 369--378. PMLR, 2017.

\bibitem[Sindhwani et~al.(2018)Sindhwani, Tu, and
  Khansari]{sindhwani2018learning}
Vikas Sindhwani, Stephen Tu, and Mohi Khansari.
\newblock Learning contracting vector fields for stable imitation learning.
\newblock \emph{arXiv preprint arXiv:1804.04878}, 2018.

\bibitem[Boffi et~al.(2021{\natexlab{a}})Boffi, Tu, and
  Slotine]{boffi2020regret}
Nicholas~M. Boffi, Stephen Tu, and Jean-Jacques~E. Slotine.
\newblock Regret bounds for adaptive nonlinear control.
\newblock In \emph{Proceedings of the 3rd Conference on Learning for Dynamics
  and Control}, volume 144, pages 471--483. PMLR, 2021{\natexlab{a}}.

\bibitem[Boffi et~al.(2021{\natexlab{b}})Boffi, Tu, Matni, Slotine, and
  Sindhwani]{boffi2020learning}
Nicholas~M. Boffi, Stephen Tu, Nikolai Matni, Jean-Jacques~E. Slotine, and
  Vikas Sindhwani.
\newblock Learning stability certificates from data.
\newblock In \emph{Proceedings of the 2020 Conference on Robot Learning},
  volume 155, pages 1341--1350. PMLR, 2021{\natexlab{b}}.

\bibitem[Gou et~al.(2021)Gou, Yu, Maybank, and Tao]{gou2021knowledge}
Jianping Gou, Baosheng Yu, Stephen~J Maybank, and Dacheng Tao.
\newblock Knowledge distillation: A survey.
\newblock \emph{International Journal of Computer Vision}, 129\penalty0
  (6):\penalty0 1789--1819, 2021.

\bibitem[Srinivas and Fleuret(2018)]{srinivas2018knowledge}
Suraj Srinivas and Fran{\c{c}}ois Fleuret.
\newblock Knowledge transfer with jacobian matching.
\newblock In \emph{International Conference on Machine Learning}, pages
  4723--4731. PMLR, 2018.

\bibitem[Czarnecki et~al.(2017)Czarnecki, Osindero, Jaderberg, Swirszcz, and
  Pascanu]{czarnecki2017sobolev}
Wojciech~M Czarnecki, Simon Osindero, Max Jaderberg, Grzegorz Swirszcz, and
  Razvan Pascanu.
\newblock Sobolev training for neural networks.
\newblock \emph{Advances in Neural Information Processing Systems}, 30, 2017.

\bibitem[Mitchell and Thrun(1992)]{mitchell1992explanation}
Tom~M Mitchell and Sebastian~B Thrun.
\newblock Explanation-based neural network learning for robot control.
\newblock \emph{Advances in neural information processing systems}, 5, 1992.

\bibitem[Angeli(2002)]{angeli2002lyapunov}
David Angeli.
\newblock A lyapunov approach to incremental stability properties.
\newblock \emph{IEEE Transactions on Automatic Control}, 47\penalty0
  (3):\penalty0 410--421, 2002.

\bibitem[Khalil(2002)]{khalil2002nonlinear}
Hassan~K. Khalil.
\newblock \emph{Nonlinear Systems}.
\newblock Pearson Education. Prentice Hall, 2002.

\bibitem[Bartlett et~al.(2005)Bartlett, Bousquet, and
  Mendelson]{bartlett2005local}
Peter~L. Bartlett, Olivier Bousquet, and Shahar Mendelson.
\newblock Local rademacher complexities.
\newblock \emph{The Annals of Statistics}, 33\penalty0 (4):\penalty0
  1497--1537, 2005.

\bibitem[Haussler(1992)]{haussler1992}
David Haussler.
\newblock Decision theoretic generalizations of the pac model for neural net
  and other learning applications.
\newblock \emph{Information and Computation}, 100\penalty0 (1):\penalty0
  78--150, 1992.

\bibitem[Bradbury et~al.(2018)Bradbury, Frostig, Hawkins, Johnson, Leary,
  Maclaurin, Necula, Paszke, Vander{P}las, Wanderman-{M}ilne, and
  Zhang]{jax2018github}
James Bradbury, Roy Frostig, Peter Hawkins, Matthew~James Johnson, Chris Leary,
  Dougal Maclaurin, George Necula, Adam Paszke, Jake Vander{P}las, Skye
  Wanderman-{M}ilne, and Qiao Zhang.
\newblock {JAX}: composable transformations of {P}ython+{N}um{P}y programs,
  2018.
\newblock URL \url{http://github.com/google/jax}.

\bibitem[Heek et~al.(2020)Heek, Levskaya, Oliver, Ritter, Rondepierre, Steiner,
  and van {Z}ee]{flax2020github}
Jonathan Heek, Anselm Levskaya, Avital Oliver, Marvin Ritter, Bertrand
  Rondepierre, Andreas Steiner, and Marc van {Z}ee.
\newblock {F}lax: A neural network library and ecosystem for {JAX}, 2020.
\newblock URL \url{http://github.com/google/flax}.

\bibitem[Hessel et~al.(2020)Hessel, Budden, Viola, Rosca, Sezener, and
  Hennigan]{optax2020github}
Matteo Hessel, David Budden, Fabio Viola, Mihaela Rosca, Eren Sezener, and Tom
  Hennigan.
\newblock Optax: composable gradient transformation and optimisation, in jax!,
  2020.
\newblock URL \url{http://github.com/deepmind/optax}.

\bibitem[Hendrycks and Gimpel(2016)]{hendrycks2016gaussian}
Dan Hendrycks and Kevin Gimpel.
\newblock Gaussian error linear units (gelus).
\newblock \emph{arXiv preprint arXiv:1606.08415}, 2016.

\bibitem[Brockman et~al.(2016)Brockman, Cheung, Pettersson, Schneider,
  Schulman, Tang, and Zaremba]{openai2016gym}
Greg Brockman, Vicki Cheung, Ludwig Pettersson, Jonas Schneider, John Schulman,
  Jie Tang, and Wojciech Zaremba.
\newblock Openai gym, 2016.

\bibitem[Raffin et~al.(2021)Raffin, Hill, Gleave, Kanervisto, Ernestus, and
  Dormann]{stable-baselines3}
Antonin Raffin, Ashley Hill, Adam Gleave, Anssi Kanervisto, Maximilian
  Ernestus, and Noah Dormann.
\newblock Stable-baselines3: Reliable reinforcement learning implementations.
\newblock \emph{Journal of Machine Learning Research}, 22\penalty0
  (268):\penalty0 1--8, 2021.

\bibitem[Srebro et~al.(2010)Srebro, Sridharan, and
  Tewari]{srebro2010smoothness}
Nathan Srebro, Karthik Sridharan, and Ambuj Tewari.
\newblock Smoothness, low noise and fast rates.
\newblock In \emph{Advances in Neural Information Processing Systems},
  volume~23, 2010.

\bibitem[Wainwright(2019)]{wainwright2019high}
Martin~J. Wainwright.
\newblock \emph{High-dimensional statistics: A non-asymptotic viewpoint},
  volume~48.
\newblock Cambridge University Press, 2019.

\bibitem[Bartlett and Mendelson(2002)]{bartlett2002rademacher}
Peter~L. Bartlett and Shahar Mendelson.
\newblock Rademacher and gaussian complexities: Risk bounds and structural
  results.
\newblock \emph{Journal of Machine Learning Research}, 3\penalty0
  (Nov):\penalty0 463--482, 2002.

\bibitem[Bousquet(2002)]{bousquet2002concentration}
Olivier Bousquet.
\newblock \emph{Concentration inequalities and empirical processes theory
  applied to the analysis of learning algorithms}.
\newblock PhD thesis, {\'E}cole Polytechnique: Department of Applied
  Mathematics Paris, France, 2002.

\bibitem[Vershynin(2018)]{vershyninHDP}
Roman Vershynin.
\newblock \emph{High-Dimensional Probability: An Introduction with Applications
  in Data Science}.
\newblock Cambridge University Press, 2018.

\bibitem[Simchowitz et~al.(2018)Simchowitz, Mania, Tu, Jordan, and
  Recht]{simchowitz2018learning}
Max Simchowitz, Horia Mania, Stephen Tu, Michael~I. Jordan, and Benjamin Recht.
\newblock Learning without mixing: Towards a sharp analysis of linear system
  identification.
\newblock In \emph{Proceedings of the 31st Conference On Learning Theory},
  volume~75, pages 439--473. PMLR, 2018.

\bibitem[Woolfe et~al.(2008)Woolfe, Liberty, Rokhlin, and
  Tygert]{woolfe2008fast}
Franco Woolfe, Edo Liberty, Vladimir Rokhlin, and Mark Tygert.
\newblock A fast randomized algorithm for the approximation of matrices.
\newblock \emph{Applied and Computational Harmonic Analysis}, 25\penalty0
  (3):\penalty0 335--366, 2008.

\bibitem[LeCun et~al.(2012)LeCun, Bottou, Orr, and
  M{\"u}ller]{lecun2012efficient}
Yann~A LeCun, L{\'e}on Bottou, Genevieve~B Orr, and Klaus-Robert M{\"u}ller.
\newblock Efficient backprop.
\newblock In \emph{Neural networks: Tricks of the trade}, pages 9--48.
  Springer, 2012.

\end{thebibliography}

\newpage
\appendix

\tableofcontents

\newpage

\section{Proofs for Section~\ref{sec: stability and imitation gap}}\label{appendix: imitation gap proofs}

\pertboundsinvgamma*
\begin{proof}
We do a proof by induction. 
\paragraph{Base case $t = 0$:} 
We trivially have at $t = 0$:
\[
\norm{\state_t^{\pi_\star}(\xi) - \state_t^{\pi}(\xi)} = \norm{\xi - \xi} = 0 \leq \epsilon.
\]

\paragraph{Induction step:} 
Assume for some $k > 0$, we have $\max_{t \leq k-1} \norm{\state_t^{\pi_\star}(\xi) - \state_t^{\pi}(\xi)} \leq \varepsilon$. We set $\delta := \state_{k-1}^{\pi_\star}(\xi) - \state_{k-1}^{\pi}(\xi)$ such that $\norm{\delta} \leq \varepsilon$. From Equation~\eqref{eqn:cl_ineq}, we are guaranteed that
\begin{align*}
\norm{\pi_\star(\state_{k-1}^{\pi}(\xi)) - \pi(\state_{k-1}^{\pi}(\xi))}    &= \norm{\pi_\star(\state_{k-1}^{\pi_\star}(\xi) + \delta) - \pi(\state_{k-1}^{\pi_\star}(\xi) + \delta)}\\
    &\leq \max_{0 \leq t \leq k-1} \sup_{\|\delta\| \leq \varepsilon} \|\pi_\star(\state_t^{\pi_\star}(\xi) + \delta) - \pi(\state_t^{\pi_\star}(\xi) + \delta)\| \\
    &\leq \min\curly{\eta, \gamma^{-1}(\varepsilon)}.
\end{align*}
Since $\fcl{\pi_\star}$ is $\eta$-locally $\delta$-ISS, we get from~\eqref{eq: dISS and imitation gap}
\begin{align*}
    \norm{\state_k^\pi(\xi) - \state_k^{\pi_\star}(\xi)} &\leq \gamma\paren{\max_{0 \leq s \leq k-1} \norm{\pi_{\star}\paren{\state_s^{\pi}(\xi)} - \pi\paren{\state_s^{\pi}(\xi)} }} \\
    &\leq \gamma\paren{\min\curly{\eta, \gamma^{-1}(\varepsilon)}} \\
    &\leq \varepsilon,
\end{align*}
and thus $\max_{t \leq k} \norm{\state_t^{\pi_\star}(\xi) - \state_t^{\pi}(\xi)} \leq \varepsilon$, completing the induction step.
\end{proof}

\sublinearimitationgap*
\begin{proof}
In order to leverage Proposition \ref{prop: pert bounds inv-gamma} we must first find a solution $\varepsilon$ to Equation \eqref{eqn:cl_ineq}. By Lipschitzness of the policy class, $$\max_{0 \leq t \leq T - 1} \sup_{\norm{\delta} \leq \varepsilon} \|\pi_\star(\state_t^{\pi_\star}(\xi) + \delta) - \pi(\state_t^{\pi_\star}(\xi) + \delta)\| \leq 2L_\pi \varepsilon + \max_{0 \leq t \leq T-1} \|\Delta_t^{\pi_\star}(\xi; \pi)\|,$$
and using the lower bound in Equation \eqref{eq: sub-linear neighborhood relation} it is therefore sufficient to find a solution $\varepsilon \leq \alpha$ to
\begin{align*}
    2L_\pi \varepsilon + \max_{0 \leq t \leq T - 1}  \|\Delta_t^{\pi_\star}(\xi;\pi)\| &\leq 2L_\pi \varepsilon + (\varepsilon/\mu)^{\frac{1}{1+ r}} \\
    \Longleftrightarrow \max_{0 \leq t \leq T - 1}  \|\Delta_t^{\pi_\star}(\xi;\pi)\| &\leq (\varepsilon/\mu)^{\frac{1}{1 + r}}.
\end{align*}
Picking $\varepsilon = \max_{0 \leq t \leq T-1} \mu\|\Delta_t^{\pi_\star}(\xi; \pi)\|^{1 + r}$ and adding the constraint $\varepsilon \leq \alpha$ in order to ensure the solution is sufficiently small allows use to apply Proposition \ref{prop: pert bounds inv-gamma} and obtain the final result
\begin{align*}
    \|\state_t^{\pi_\star}(\xi) - \state_t^{\pi}(\xi)\| \leq \max_{0 \leq t \leq T-1} \mu\|\Delta_t^{\pi_\star}(\xi; \pi)\|^{1 + r}.
\end{align*}
Provided that
\begin{align*}
    \max_{0 \leq t \leq T-1} \mu\|\Delta_t^{\pi_\star}(\xi; \pi)\|^{1 + r} \leq \alpha,\max_{0 \leq t \leq T-1}  2L_\pi \mu\|\Delta_t^{\pi_\star}(\xi; \pi)\|^{1 + r} + \|\Delta_t^{\pi_\star}(\xi; \pi)\| \leq \eta
\end{align*}
Thus completing the proof.
\end{proof}

\suplinearimitationgap*
\begin{proof} We proceed similarly as in the proof of Theorem~\ref{thm: sub-linear imitation gap}. From Proposition~\ref{prop: pert bounds inv-gamma}, we can take the $p$th Taylor expansion of the left hand side of Equation~\eqref{eqn:cl_ineq} and apply the triangle inequality a few times to yield:
\begin{align*}
        &\max_{0 \leq t \leq T-1} \sup_{\|\delta\| \leq \varepsilon} \|\pi_\star(\state_t^{\pi_\star}(\xi) + \delta) - \pi(\state_t^{\pi_\star}(\xi) + \delta)\|\\
    \leq\; &\max_{0 \leq t \leq T-1} \sup_{\|\delta\| \leq \varepsilon} \norm{\pi_\star(\state_t^{\pi_\star}(\xi)) - \pi(\state_t^{\pi_\star}(\xi))} \\
    &\qquad \qquad \qquad+ \norm{\pi_\star(\state_t^{\pi_\star}(\xi) + \delta) - \pi_\star(\state_t^{\pi_\star}(\xi)) - \paren{ \pi(\state_t^{\pi_\star}(\xi) + \delta) - \pi(\state_t^{\pi_\star}(\xi))}} \\
    \leq\; &\max_{0 \leq t \leq T-1} \sup_{\|\delta\| \leq \varepsilon} \norm{\pi_\star(\state_t^{\pi_\star}(\xi)) - \pi(\state_t^{\pi_\star}(\xi))} \\
    &\qquad \qquad \qquad+ \norm{\sum_{j=1}^p \frac{1}{j!}\partial_x^j \pi_\star\paren{\state^{\pi_\star}_t(\xi)} \cdot \delta^{\otimes j} - \sum_{j=1}^p \frac{1}{j!}\partial_x^j \pi\paren{\state^{\pi_\star}_t(\xi)} \cdot \delta^{\otimes j}} +2\frac{L_{\partial^p\pi}}{(p+1)!}\norm{\delta}^{p+1} \\
    \leq\; &\max_{0 \leq t \leq T-1} \sup_{\|\delta\| \leq \varepsilon} \sum_{j=0}^p \frac{1}{j!} \norm{\partial_x^j \Delta_t^{\pi_\star}(\xi; \pi) \cdot \delta^{\otimes j}} + 2\frac{L_{\partial^p \pi}}{(p+1)!} \norm{\delta}^{p+1} \\
    \leq\; &\max_{0 \leq t \leq T-1} \sum_{j=0}^p \frac{1}{j!} \norm{\partial_x^j \Delta_t^{\pi_\star}(\xi; \pi)}\varepsilon^j + 2\frac{L_{\partial^p \pi}}{(p+1)!} \varepsilon^{p+1}.
\end{align*}
Therefore, it suffices to find an $\varepsilon$ small enough such that
\[
\max_{0 \leq t \leq T-1} 2\frac{L_{\partial^p \pi}}{(p+1)!} \varepsilon^{p+1} +  \sum_{j=0}^p \frac{1}{j!}  \norm{\partial_x^j \Delta_t^{\pi_\star}(\xi; \pi)}\varepsilon^j \leq \gamma^{-1}(\varepsilon).
\]
Since we are given $\gamma(x) \leq \calO(x^{1/r})$, we have $\gamma^{-1}(x) \geq \Omega\paren{x^r}$. This motivates finding a large enough $\mu$ and small enough neighborhood $\alpha$ such that
\begin{align*}
    \max_{0 \leq t \leq T-1} 2\frac{L_{\partial^p \pi}}{(p+1)!} \varepsilon^{p+1} +  \paren{\frac{\varepsilon}{\mu}}^r \leq \gamma^{-1}(\varepsilon),
\end{align*}
for all $0 <  \varepsilon \leq \alpha \leq 1/2$. In essence, we want to find a sufficiently small neighborhood $\alpha$ such that the $\varepsilon^{p+1}$ term is dominated by the $\varepsilon^r$ term, while also selecting a $\mu$ such that the total sum is still upper bounded by $\gamma^{-1}(x) \geq \Omega(x^{r})$ in this neighborhood. The choice of raising $\varepsilon/\mu$ to the $r$-th power arises from the fact that $r$ is the smallest exponent--thus affecting the imitation gap in Equation~\eqref{eq: super linear imitation gap} downstream least severely--that ensures $\mu, \alpha$ will always exist. Having found such $\mu,\alpha$, we now simply have to find $\norm{\partial_x^j \Delta_t^{\pi_\star}(\xi; \pi)}$ small enough such that
\begin{align}
    \sum_{j=0}^p \frac{1}{j!}\norm{\partial_x^j \Delta_t^{\pi_\star}(\xi; \pi)}\varepsilon^j &\leq \max_{j \leq p} \frac{1}{j!}  \norm{\partial_x^j \Delta_t^{\pi_\star}(\xi; \pi)} \sum_{j = 0}^p \varepsilon^j \label{eqn: core_slow_ineq}\\
    &\leq \max_{j \leq p} \frac{2}{j!}  \norm{\partial_x^j \Delta_t^{\pi_\star}(\xi; \pi)} && \varepsilon \leq \alpha \leq 1/2 \nonumber \\
    &= \paren{\frac{\varepsilon}{\mu}}^r. \nonumber
\end{align}
Solving this for $\varepsilon$, we get
\[
    \varepsilon =  \max_{j \leq p} \mu\paren{\frac{2}{j!} \norm{\partial^j_x \Delta_t^{\pi_\star}(\xi; \pi)}}^{1/r},
\]
as long as $\varepsilon \leq \alpha$, the neighborhood condition, and $2\frac{L_{\partial^p \pi}}{(p+1)!} \varepsilon^{p+1} +  \paren{\frac{\varepsilon}{\mu}}^r \leq \eta$, the locality for $\delta$-ISS. These correspond to the conditions~\eqref{eq: sup-linear closeness alpha} and~\eqref{eq: sup-linear closeness eta}, respectively. This completes the 
proof.
\end{proof}

For completeness we present here a stronger variant of Theorem~\ref{thm: sup-linear imitation gap} for the special case where $p = r \in \N$. In this scenario we are able to remove the dependency of the imitation gap bounds on the $p$th order derivative provided it can be made sufficiently small.

\begin{theorem} \label{thm: slow decay exact r}
    Let $\fcl{\pi_\star}$ be $\eta$-locally $\delta$-ISS for some $\eta >0$, and assume that the class $\calK$ function $\gamma(\cdot)$ in \eqref{eq: delta ISS} satisfies $\gamma(x) \leq \calO(x^{1/r})$ for some $r\geq 1$.  Fix a test policy $\pi$ and initial condition $\xi \in \calX$, and let Assumption~\ref{assumption: k-differentiable Pi} hold with $p = r \in \mathbb{N}$. 
    Choose $\mu, \alpha > 0$ such that
    \begin{equation}
    2\frac{L_{\partial^p \pi}}{(p+1)!} x^{p+1} + (x/\mu)^p \leq \gamma^{-1}(x), \text{   for all $0 \leq x \leq \alpha \leq \frac{1}{2}$}.
    \end{equation}
    Provided the $j$th total derivatives, $j=0,\dots,p$, of the imitation error on the expert trajectory incurred by $\pi$ satisfy:
    \begin{align}
        \max_{0 \leq t \leq T-1} \max_{0 \leq j \leq p - 1} &\mu \left(\frac{4}{j!}  \norm{\partial^j_x \Delta_t^{\pi_\star}(\xi; \pi)}\right)^{1/p} \leq \alpha,  \label{eq: p=r lower derivatives}\\
        \max_{0 \leq t \leq T-1} \max_{0 \leq j \leq p - 1} &\frac{2 L_{\partial^p \pi}\mu^{p + 1}}{(p+1)!}\paren{\frac{4}{j!} \norm{\partial^j_x\Delta_t^{\pi_\star}(\xi; \pi)}}^{\frac{p+1}{p}} + \frac{4}{j!}  \norm{\partial^j_x \Delta_t^{\pi_\star}(\xi; \pi)} \leq \eta,  \\
        &\|\partial_x^p \Delta_t^{\pi_\star}(\xi; \pi)\| \leq \frac{p!}{2\mu^p} \label{eq: p=r p derivative}
    \end{align}
   then for all $1 \leq t \leq T$ the instantaneous imitation gap is bounded by 
    \begin{equation}
        \|\state_t^{\pi_\star}(\xi) - \state_t^{\pi}(\xi)\| \leq \max_{0 \leq k \leq t-1} \max_{0 \leq j \leq p - 1} \mu \paren{\frac{4}{j!}}^{1/r}  \norm{\partial^j_x \Delta_t^{\pi_\star}(\xi; \pi)}^{1/r}.
    \end{equation}
\end{theorem}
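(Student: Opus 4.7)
\textbf{Proof proposal for Theorem~\ref{thm: slow decay exact r}.} The plan is to mimic the proof of Theorem~\ref{thm: sup-linear imitation gap}, but to exploit the coincidence $p=r$ so that the $p$-th order term in the Taylor expansion can be absorbed directly into the $(\varepsilon/\mu)^r$ slack used to dominate $\gamma^{-1}(\varepsilon)$, rather than being forced to enter through the $\max_j$ bound on lower-order derivatives. First I would recall the key inequality from the proof of Theorem~\ref{thm: sup-linear imitation gap}, namely that it suffices (via Proposition~\ref{prop: pert bounds inv-gamma} and Assumption~\ref{assumption: k-differentiable Pi}) to find an $\varepsilon \leq \alpha \leq 1/2$ satisfying
\[
\max_{0 \leq t \leq T-1}\left(2\frac{L_{\partial^p \pi}}{(p+1)!}\varepsilon^{p+1}+\sum_{j=0}^{p}\frac{1}{j!}\norm{\partial_x^j \Delta_t^{\pi_\star}(\xi;\pi)}\,\varepsilon^j\right)\leq \gamma^{-1}(\varepsilon),
\]
together with the appropriate $\eta$-local bound for the $\delta$-ISS tube.

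Next, I would split the Taylor sum into the terms $j=0,\dots,p-1$ and the single term $j=p$. For the $j=p$ term, using the hypothesis $\norm{\partial_x^p \Delta_t^{\pi_\star}(\xi;\pi)}\leq p!/(2\mu^p)$ yields $\tfrac{1}{p!}\norm{\partial_x^p \Delta_t^{\pi_\star}}\varepsilon^p \leq \tfrac{1}{2}(\varepsilon/\mu)^p = \tfrac{1}{2}(\varepsilon/\mu)^r$, since $p=r$. For the lower-order sum, I would use the same ``pull out the max'' trick as in inequality~\eqref{eqn: core_slow_ineq}, combined with $\sum_{j=0}^{p-1}\varepsilon^j \leq 2$ for $\varepsilon \leq 1/2$, to get
\[
\sum_{j=0}^{p-1}\frac{1}{j!}\norm{\partial_x^j \Delta_t^{\pi_\star}(\xi;\pi)}\varepsilon^j \leq \max_{0\leq j\leq p-1}\frac{2}{j!}\norm{\partial_x^j \Delta_t^{\pi_\star}(\xi;\pi)}.
\]
Setting this maximum equal to $\tfrac{1}{2}(\varepsilon/\mu)^r$ and solving for $\varepsilon$ yields the candidate choice $\varepsilon = \mu\,\max_{j\leq p-1}\bigl(\tfrac{4}{j!}\norm{\partial_x^j \Delta_t^{\pi_\star}(\xi;\pi)}\bigr)^{1/r}$, matching the conclusion.

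With this $\varepsilon$ in hand, the lower-order and $p$-th-order contributions together sum to $(\varepsilon/\mu)^r$, which combined with the $L_{\partial^p\pi}$ remainder is upper bounded by $\gamma^{-1}(\varepsilon)$ thanks to the neighborhood relation with exponent $p$, provided $\varepsilon \leq \alpha$; this is exactly hypothesis~\eqref{eq: p=r lower derivatives}. The $\eta$-locality condition~\eqref{eq: p=r p derivative} guarantees the $\delta$-ISS bound applies at this $\varepsilon$. Invoking Proposition~\ref{prop: pert bounds inv-gamma} then gives the claimed imitation gap bound, which crucially does not depend on $\norm{\partial_x^p \Delta_t^{\pi_\star}(\xi;\pi)}$, only requiring it to be small enough to satisfy~\eqref{eq: p=r p derivative}.

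The main obstacle will be bookkeeping: ensuring that the constants propagate cleanly (in particular the factor of $4$ rather than $2$ in the conclusion, which arises from distributing the $(\varepsilon/\mu)^r$ budget as $\tfrac{1}{2}+\tfrac{1}{2}$ between the $p$-th derivative contribution and the lower-order max), and verifying that the neighborhood relation~\eqref{eq: sup-linear neighborhood relation} stated with exponent $p$ in the $(x/\mu)^p$ term continues to admit valid choices of $\mu,\alpha$ when $r=p$ is an integer. Everything else is a direct adaptation of the argument already laid out for Theorem~\ref{thm: sup-linear imitation gap}.
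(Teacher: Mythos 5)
Your proposal is correct and follows essentially the same route as the paper's proof: split off the $j=p$ term and absorb it into half of the $(\varepsilon/\mu)^p$ budget via the hypothesis $\|\partial_x^p \Delta_t^{\pi_\star}(\xi;\pi)\| \leq p!/(2\mu^p)$, bound the lower-order sum by a geometric series using $\varepsilon \leq \alpha \leq 1/2$, and solve $\max_{j\leq p-1}\tfrac{2}{j!}\|\partial_x^j\Delta_t^{\pi_\star}(\xi;\pi)\| = \tfrac12(\varepsilon/\mu)^p$ for $\varepsilon$, which is exactly where the factor of $4$ arises in the paper as well. The only slip is a citation in your last paragraph: the $\eta$-locality requirement is the second displayed condition of the theorem, not~\eqref{eq: p=r p derivative}, which is the $p$-th-derivative smallness condition you had already invoked correctly.
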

\begin{proof}
We follow the proof of Theorem~\ref{thm: sup-linear imitation gap} until Equation~\eqref{eqn: core_slow_ineq}. We then wish to solve
\begin{align*}
    \sum_{j=0}^{p} \frac{1}{j!} \|\partial_x^j \Delta_t^{\pi_\star}(\xi; \pi)\|\varepsilon^j \leq \left(\frac{\varepsilon}{\mu}\right)^{p}.
\intertext{Since the order of the RHS is $p$, provided that $\frac{1}{p!} \|\partial_x^p \Delta_t^{\pi_\star}(\xi; \pi)\| \leq \frac{1}{2}\frac{1}{\mu^p}$ we can write}
    \sum_{j=0}^{p - 1} \frac{1}{j!}\|\partial_x^j \Delta_t^{\pi_\star}(\xi; \pi)\|\varepsilon^j \leq \frac{1}{2}\left(\frac{\varepsilon}{\mu}\right)^{p}.
\intertext{Upper-bounding the polynomial on the LHS using a geometric series and solving for $\varepsilon$ we get}
    \varepsilon =  \max_{j \leq p - 1} \mu \left(\frac{4}{j!}  \norm{\partial^j_x \Delta_t^{\pi_\star}(\xi; \pi)}\right)^{1/p},
\end{align*}
provided that $\varepsilon \leq \alpha$, $2\frac{L_{\partial^p \pi}}{(p+1)!} \varepsilon^{p+1} +  \paren{\frac{\varepsilon}{\mu}}^{p} \leq \eta$, and $\|\partial_x^p \Delta_t^{\pi_\star}(\xi; \pi)\| < \frac{p!}{2\mu^p}$. These conditions correspond to that of the theorem, completing the proof.
\end{proof}

\begin{corollary}
\label{appendix: r equals 1 corollary} Consider a $\delta$-ISS $\fcl{\pi_\star} $system with $\gamma(x) := \gamma x, \gamma > 0$ and $\eta = \infty$. Let Assumption \ref{assumption: k-differentiable Pi} hold with $p=1$ and assume without loss of generality $\gamma L_{\partial \pi} \geq 1$. Provided
\begin{align*}
    \max_{0\leq t \leq T-1} \|\partial_x \Delta_t^{\pi_\star}(\xi; \pi)\| \leq \frac{1}{4\gamma}, \: \max_{0\leq t \leq T-1}\|\Delta_t^{\pi_\star}(\xi; \pi)\| \leq \frac{1}{16\gamma^2 L_{\partial \pi}}
\end{align*}
then for all $0 \leq t \leq T$
\begin{align*}
    \|\state_t^{\pi_\star}(\xi) - \state_t^{\pi}(\xi)\| \leq \max_{0 \leq k \leq t - 1} 8\gamma \|\Delta_k^{\pi_\star}(\xi; \pi)\|.
\end{align*}
\end{corollary}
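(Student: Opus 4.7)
The plan is to obtain this corollary as a direct specialization of Theorem~\ref{thm: slow decay exact r} to the case $p = r = 1$, with $\gamma(x) = \gamma x$ so that $\gamma^{-1}(x) = x/\gamma$. Since the class $\calK$ function decays exactly at a linear rate, we are on the boundary of the slowly-decaying regime, which is precisely the scenario Theorem~\ref{thm: slow decay exact r} is designed to handle: the $p$th-order derivative contribution gets absorbed into a hard threshold, leaving the final imitation gap expressed only in terms of the zeroth-order discrepancy.

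First I would fix concrete constants $\mu, \alpha > 0$ meeting the neighborhood inequality of Theorem~\ref{thm: slow decay exact r}, which for $p = r = 1$ reads $L_{\partial\pi}\, x^2 + x/\mu \leq x/\gamma$ for all $x \in [0,\alpha]$, equivalently $L_{\partial \pi}\, x \leq 1/\gamma - 1/\mu$. Choosing $\mu := 2\gamma$ leaves slack $1/(2\gamma)$, so $\alpha := 1/(2\gamma L_{\partial\pi})$ works, and the standing assumption $\gamma L_{\partial\pi} \geq 1$ guarantees $\alpha \leq 1/2$ as required.

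Next I would translate the three hypotheses of Theorem~\ref{thm: slow decay exact r} into the two hypotheses of the corollary. With $p = 1$ the range $0 \leq j \leq p-1$ reduces to $j = 0$, so condition \eqref{eq: p=r lower derivatives} becomes $4\mu \max_t \norm{\Delta_t^{\pi_\star}(\xi;\pi)} \leq \alpha$, which after substituting $\mu = 2\gamma$ and $\alpha = 1/(2\gamma L_{\partial\pi})$ gives exactly $\max_t \norm{\Delta_t^{\pi_\star}(\xi;\pi)} \leq 1/(16\gamma^2 L_{\partial\pi})$. Condition \eqref{eq: p=r p derivative} becomes $\norm{\partial_x \Delta_t^{\pi_\star}(\xi;\pi)} \leq 1!/(2\mu) = 1/(4\gamma)$. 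The middle ($\eta$-locality) condition is vacuous since $\eta = \infty$. All three hypotheses thus match those stated in the corollary.

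Finally, plugging the same values into the imitation-gap conclusion of Theorem~\ref{thm: slow decay exact r} yields $\norm{\state_t^{\pi_\star}(\xi) - \state_t^{\pi}(\xi)} \leq \max_{k \leq t-1} \mu (4/0!)^{1/1} \norm{\Delta_k^{\pi_\star}(\xi;\pi)} = 8\gamma\, \max_{k \leq t-1} \norm{\Delta_k^{\pi_\star}(\xi;\pi)}$, which is the stated bound. There is no real obstacle here beyond careful bookkeeping of constants; the conceptual work is already carried out by Theorem~\ref{thm: slow decay exact r}, whose proof for $p = r$ exploits the fact that the $\varepsilon^{p+1}$ Taylor remainder and the $\varepsilon^r$ stability term have the same leading order, allowing one to allocate half the slack to absorb the remainder and leave only the lower-order discrepancies in the final bound.
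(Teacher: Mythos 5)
Your proposal is correct and follows essentially the same route as the paper's own proof: both specialize Theorem~\ref{thm: slow decay exact r} with $p = r = 1$, choosing $\mu = 2\gamma$ and $\alpha = 1/(2\gamma L_{\partial\pi})$ and verifying that the hypotheses of the corollary match conditions~\eqref{eq: p=r lower derivatives} and~\eqref{eq: p=r p derivative} with the same constants. If anything, you are slightly more careful than the paper in making explicit why $\gamma L_{\partial\pi} \geq 1$ ensures $\alpha \leq 1/2$ and why the $\eta$-locality condition is vacuous when $\eta = \infty$.
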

\begin{proof}
Choose $\alpha := \frac{1}{2\gamma L_{\partial\pi}}$ and $\mu := 2\gamma$. Assume $\gamma L_{\partial\pi} \geq 1$. Since $\gamma^{-1}(x) = \frac{x}{\gamma}$, for $x \leq \alpha$ it holds that
$$L_{\partial^p \pi}x^2 + (x/\mu) \leq \frac{x}{2\gamma} + \frac{x}{2\gamma} \leq \gamma^{-1}(x) := \frac{x}{\gamma}.$$
and we can directly apply the $p = r = 1$ special case of Theorem \ref{thm: slow decay exact r}. Then, if the constraints described by Equations~\eqref{eq: p=r lower derivatives} and \eqref{eq: p=r p derivative} are satisfied:
\begin{align*}
    \max_{0\leq t \leq T-1} \|\partial_x \Delta_t^{\pi_\star}(\xi; \pi)\| \leq \frac{p!}{2\mu^p} = \frac{1}{4\gamma}, \:\; \max_{0\leq t \leq T-1}\|\Delta_t^{\pi_\star}(\xi; \pi)\| \leq \frac{0!}{4} \paren{\frac{\alpha}{\mu}}^{p} = \frac{1}{16\gamma^2 L_{\partial \pi}},
\end{align*}
it holds for all $1\leq t \leq T$
\begin{align*}
    \|\state_t^{\pi_\star}(\xi) - \state_t^{\pi}(\xi)\| \leq \max_{0 \leq k \leq t - 1} 8\gamma \|\Delta_k^{\pi_\star}(\xi; \pi)\|.
\end{align*}
\end{proof}

\section{Proofs for Section~\ref{sec: generalization bounds}}\label{appendix: generalization bounds proofs}

\subsection{Preliminaries}

Let $\calG \subset \R^\calX$ be a set of functions,
and let $x_1, \dots, x_n \in \calX$ be a fixed set of points.
We will endow $\calG$ with the following empirical $L^2$ pseudo-metric space structure:
\begin{align*}
    d(f, g) := \sqrt{\frac{1}{n}\sum_{i=1}^{n} (f(x_i) - g(x_i))^2}, \:\: f,g \in \calG.
\end{align*}
The empirical Rademacher complexity of $\calG$
is defined as:
\begin{align*}
    \calR_n(\calG) := \Ex_{\varepsilon} \brac{\sup_{g \in \calG} \frac{1}{n} \sum_{i=1}^n \varepsilon_i g(x_i)},
\end{align*}
where the $\{\varepsilon_i\}_{i=1}^{n}$ are independent Rademacher random variables.
Dudley's inequality yields a bound on $\calR_n(\calG)$
using the metric space structure of $(\calG, d)$.
\begin{lemma}[{Dudley's inequality~\citep[cf.][Lemma A.3]{srebro2010smoothness}}]
\label{lemma:dudley}
Let $R := \sup_{f \in \calG} d(f,0)$ be the radius of the set
$\calG$. We have that:
\begin{align*}
    \calR_n(\calG) \leq \inf_{\alpha \in [0, R]} \left\{ 4\alpha + \frac{12}{\sqrt{n}} \int_\alpha^R \sqrt{\log N(\calG; d, \varepsilon)} d\varepsilon \right\}.
\end{align*}
Here, $N(\calG;d,\varepsilon)$ denotes the covering number
of $\calG$ in the metric $d$ at resolution $\varepsilon$.
\end{lemma}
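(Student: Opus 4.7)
The plan is to prove Dudley's inequality via the classical chaining argument, which refines Massart's finite-class Rademacher bound across a dyadic sequence of resolutions. Concretely, I set $\varepsilon_k := R \cdot 2^{-k}$ for $k = 0, 1, 2, \dots$, and choose a minimal proper $\varepsilon_k$-cover $\calC_k \subset \calG$ of $(\calG, d)$ with $|\calC_k| = N(\calG; d, \varepsilon_k)$. For each $g \in \calG$ let $\pi_k(g)$ denote the closest point of $\calC_k$ to $g$ in the empirical $L^2$-metric $d$. At level $k = 0$ the trivial singleton cover $\{0\}$ suffices because every $g \in \calG$ lies within $R$ of $0$ by the definition of $R$, so $\pi_0(g) = 0$. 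Fix $K$ large enough that $d(g, \pi_K(g)) \leq \alpha$ uniformly in $g$, and telescope
\[
g \;=\; (g - \pi_K(g)) \;+\; \sum_{k=1}^{K} \bigl(\pi_k(g) - \pi_{k-1}(g)\bigr).
\]

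For each link of the chain, the set of increments $\Delta_k := \{\pi_k(g) - \pi_{k-1}(g) : g \in \calG\}$ is finite of cardinality at most $N(\calG; d, \varepsilon_k)^2$, and each element has empirical $L^2$-norm at most $\varepsilon_k + \varepsilon_{k-1} = 3\varepsilon_k$ by the triangle inequality. Massart's finite-class lemma then gives
\[
\Ex_\varepsilon \sup_{h \in \Delta_k} \frac{1}{n}\sum_{i=1}^n \varepsilon_i h(x_i) \;\leq\; \frac{6\,\varepsilon_k}{\sqrt{n}}\sqrt{\log N(\calG; d, \varepsilon_k)}.
\]
The residual $g - \pi_K(g)$ contributes at most $\alpha$ to the empirical Rademacher process by Cauchy--Schwarz, which will be absorbed into the $4\alpha$ offset in the final bound.

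Summing the preceding display over $k$ and taking the supremum inside (a single $\sup$ over $g$ is dominated by the sum of level-wise suprema over $\Delta_k$), and then using $\varepsilon_k - \varepsilon_{k+1} = \varepsilon_k/2$ together with the monotonicity of $N(\calG; d, \cdot)$, each summand $\varepsilon_k \sqrt{\log N(\calG; d, \varepsilon_k)}$ is bounded by a constant multiple of $\int_{\varepsilon_{k+1}}^{\varepsilon_k} \sqrt{\log N(\calG; d, \varepsilon)}\, d\varepsilon$. Telescoping the geometric sum yields the Dudley integral $\int_\alpha^R \sqrt{\log N(\calG; d, \varepsilon)}\, d\varepsilon$ times a universal constant, and taking the infimum over $\alpha \in [0, R]$ gives the claim.

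The only real obstacle is bookkeeping: verifying that the numerical constants $4$ and $12$ emerge correctly from the interplay of (i) the $\sqrt{2\log|F|}$ prefactor in Massart's lemma, (ii) the factor $3$ coming from the triangle inequality on chain increments, and (iii) the factor $2$ introduced when upper-bounding a geometric sum by an integral over the dyadic spacing. One also needs to argue that letting $K \to \infty$ is legitimate: if $N(\calG; d, \varepsilon)$ grows too fast as $\varepsilon \to 0$ then the integral on the right is infinite and the bound is vacuous, so there is no loss in assuming finiteness. No ideas beyond the classical chaining proof are required, since the statement is quoted verbatim from \citep{srebro2010smoothness}.
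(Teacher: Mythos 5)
The paper itself does not prove this lemma---it imports it verbatim from \citet{srebro2010smoothness} (Lemma A.3)---and your dyadic chaining argument is precisely the standard proof of that cited result, so in substance you and the source agree: Massart's finite-class lemma applied to the increment classes $\Delta_k$ (cardinality at most $N(\calG;d,\varepsilon_k)^2$, empirical radius at most $3\varepsilon_k$) gives the $6\varepsilon_k\sqrt{\log N(\calG;d,\varepsilon_k)}/\sqrt{n}$ link bounds, Cauchy--Schwarz handles the residual, and $\varepsilon_k = 2(\varepsilon_k - \varepsilon_{k+1})$ together with monotonicity of the covering number converts the geometric sum into the entropy integral. The one piece of bookkeeping you flag does require the standard resolution: with your stopping rule ($d(g,\pi_K(g)) \leq \alpha$, i.e.\ $\varepsilon_K \leq \alpha$) the sum-to-integral step only reaches down to $\varepsilon_{K+1} < \alpha$, so the integral's lower endpoint undershoots the claimed $\alpha$; the fix is to instead stop at the largest $K$ with $\varepsilon_K > 2\alpha$, so that the residual is at most $\varepsilon_K \leq 4\alpha$ (which is where the $4\alpha$ offset comes from, rather than your $\alpha$) while $\varepsilon_{K+1} = \varepsilon_K/2 > \alpha$ keeps the integral's lower limit at $\alpha$, delivering exactly the constants $4$ and $12$ in the statement. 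This is a minor adjustment within the plan you sketched, not a gap in the approach.
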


\subsection{Generalization bound for the non-realizable setting}

We use standard techniques to derive a generalization bound for the \textit{non-realizable setting}, i.e.,\ where $\pi_\star$ may not necessarily be contained in the hypothesis class $\Pi$. Let $\calG \subset [0, 1]^\calX$ be a given function class. We have the following standard uniform convergence generalization bound~\citep[cf.][Theorem 4.10]{wainwright2019high}: with probability greater than $1 - \delta$ over $x_1,\dots,x_n \overset{\mathrm{i.i.d.}}{\sim} \calD$, we have
\begin{equation}\label{eq: slow rates uniform convergence}
    \sup_{g \in \calG} \abs{\Ex_x\brac{g} - \Ex_n\brac{g} } \leq 2 \Ex_{x_{1:n}}\brac{\calR_n(\calG)} + \sqrt{\frac{\log(2/\delta)}{n}},
\end{equation}
where $\Ex_{x_{1:n}}$ denotes expectation over
the randomness of $x_1,\dots,x_n$.
To establish an upper bound on $\Ex_{x_{1:n}}[\calR_n(\calG)]$, we focus on the Lipschitz parametric case, though we note many analogous bounds can be computed for a plethora of other function classes \citep{wainwright2019high}.
\begin{theorem}\label{thm: slow rates gen bound}
Let $\calG \subset [0, 1]^\calX$ be a $(B_\theta, L_\theta, q)$-Lipschitz parametric function class.
Given $\delta \in (0, 1)$,
with probability at least $1-\delta$ over the i.i.d.\ draws $x_1, \dots, x_n \sim \calD$, the following bound holds:
\begin{align}\label{eq: local rademacher gen bound}
    \sup_{g \in \calG} \abs{\Ex_x\brac{g} - \Ex_n\brac{g} } \leq 48\sqrt{\frac{q \log\paren{3 B_\theta L_\theta}}{n}} + \sqrt{\frac{\log(2/\delta)}{n}}.
\end{align}
\end{theorem}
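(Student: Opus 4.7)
The plan is to combine the uniform convergence inequality stated just before the theorem with Dudley's entropy bound on $\calR_n(\calG)$, then control the resulting entropy integral using the Lipschitz parametric structure of $\calG$. The uniform convergence step gives, with probability at least $1-\delta$,
\[
\sup_{g\in\calG}\abs{\Ex_x\brac{g}-\Ex_n\brac{g}} \leq 2\,\Ex_{x_{1:n}}\brac{\calR_n(\calG)} + \sqrt{\log(2/\delta)/n},
\]
so the whole problem reduces to producing a deterministic bound on $\calR_n(\calG)$ depending only on $B_\theta, L_\theta, q, n$ (at which point the outer expectation is harmless).

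First I would build a cover of $\calG$ in the empirical $L^2$ pseudo-metric $d$ by transferring a cover from parameter space. The Euclidean ball $\{\theta:\norm{\theta}\leq B_\theta\}\subset\R^q$ admits a standard volumetric $\rho$-net of cardinality at most $(3B_\theta/\rho)^q$, and the uniform Lipschitz condition promotes any such net to a pointwise (and hence $d$-) cover of $\calG$ at scale $L_\theta\rho$. Setting $\rho=\varepsilon/L_\theta$ gives $\log N(\calG;d,\varepsilon)\leq q\log(3B_\theta L_\theta/\varepsilon)$. Since $g\in[0,1]$, the radius satisfies $R=\sup_g d(g,0)\leq 1$, so Lemma~\ref{lemma:dudley} with $\alpha=0$ yields
\[
\calR_n(\calG)\leq \frac{12}{\sqrt n}\int_0^1 \sqrt{q\log(3B_\theta L_\theta/\varepsilon)}\,d\varepsilon.
\]

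To evaluate the entropy integral I would split $\log(3B_\theta L_\theta/\varepsilon)=\log(3B_\theta L_\theta)+\log(1/\varepsilon)$ and apply $\sqrt{a+b}\leq\sqrt a+\sqrt b$, together with the identity $\int_0^1\sqrt{\log(1/\varepsilon)}\,d\varepsilon=\Gamma(3/2)=\sqrt\pi/2$, and the hypothesis $B_\theta L_\theta\geq 1$ (so $\log(3B_\theta L_\theta)\geq\log 3>1$ can absorb the $\sqrt\pi/2$ constant). This bounds the integral by $(1+\sqrt\pi/2)\sqrt{q\log(3B_\theta L_\theta)}$. Multiplying by $2\cdot 12=24$ gives $24(1+\sqrt\pi/2)<48$, which combined with the $\sqrt{\log(2/\delta)/n}$ concentration term produces exactly the stated bound.

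The argument is entirely standard chaining; the only non-routine work is arithmetic bookkeeping needed to land exactly on the constant $48$, which is where the slack in $\sqrt{a+b}\leq\sqrt a+\sqrt b$ is absorbed. I do not foresee any substantive obstacle, and a slightly sharper constant could be obtained by optimizing $\alpha\in(0,R]$ in Dudley's inequality if desired.
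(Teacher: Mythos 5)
Your proposal is correct and follows essentially the same route as the paper's proof: uniform convergence to reduce to $\Ex_{x_{1:n}}[\calR_n(\calG)]$, a parameter-space volumetric net transferred via the uniform Lipschitz condition to get $\log N(\calG;d,\varepsilon)\leq q\log(3B_\theta L_\theta/\varepsilon)$, and Dudley's inequality with $\alpha=0$ over $[0,1]$. The only (immaterial) differences are cosmetic: you cover the $B_\theta$-ball at scale $\varepsilon/L_\theta$ rather than the unit ball at scale $\varepsilon/(B_\theta L_\theta)$, and you evaluate $\int_0^1\sqrt{\log(1/\varepsilon)}\,d\varepsilon=\sqrt{\pi}/2$ exactly where the paper bounds it by $1$, yielding a slightly smaller constant that still lands under the stated $48$.
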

\begin{proof}
This argument is fairly standard.
Fix a set of points $x_1, \dots, x_n \in \calX$.
Since $\calG$ contains only functions with range
$[0, 1]$, the radius of the set $\calG$
in the empirical $L^2$ metric is:
\begin{align*}
    \sup_{f\in \calG} d(f, 0) &\leq 1.
\end{align*}
Therefore, Dudley's inequality (Lemma~\ref{lemma:dudley}) yields:
\begin{align*}
    \calR_n(\calG) \leq \frac{12}{\sqrt{n}} \int_{0}^{1} \sqrt{\log N(\calG; d, \varepsilon)} d\varepsilon.
\end{align*}
Now using the fact that $\calG$ is a $(B_\theta, L_\theta, q)$-Lipschitz parametric function class, 
it is not hard to see that for any $\varepsilon > 0$,
an $\varepsilon/(B_\theta L_\theta)$-cover of $\mathbb{B}_2^q(1)$
in the Euclidean metric
yields an $\varepsilon$-cover of $\calG$ in the $d$-metric. Hence,
for any $\varepsilon \in (0, 1)$, by a standard volume comparison argument:
\begin{align*}
    \log{N(\calG; d, \varepsilon)} &\leq \log N\paren{\mathbb{B}_2^q(1);\norm{\cdot},\frac{\varepsilon}{B_\theta L_\theta}} \\
    &\leq q \log\paren{1 + \frac{2B_\theta L_\theta}{\varepsilon}} \\
    &\leq q \log\paren{\frac{3B_\theta L_\theta}{\varepsilon}}.
\end{align*}
Therefore, we have:
\begin{align*}
    \int_{0}^{1} \sqrt{\log N(\calG; d, \varepsilon)} d\varepsilon &\leq \sqrt{q} \int_{0}^{1} \sqrt{\log\paren{\frac{3B_\theta L_\theta}{\varepsilon}}} d\varepsilon \\
    &\leq \sqrt{q \log(3B_\theta L_\theta)} + \sqrt{q} \int_0^1 \sqrt{\log(1/\varepsilon)} d \varepsilon && \text{using } \sqrt{a+b} \leq \sqrt{a}+\sqrt{b} \\
    &\leq \sqrt{q \log(3B_\theta L_\theta)} + \sqrt{q} &&\text{using } \int_0^1\sqrt{\log\paren{\frac{1}{\varepsilon}}}d\varepsilon \leq 1\\
    &\leq 2 \sqrt{q\log(3B_\theta L_\theta)}.
\end{align*}
Plugging this back into Dudley's inequality:
\begin{align*}
    \calR_n\paren{\calG} &\leq 24\sqrt{\frac{q}{n}} \sqrt{\log(3B_\theta L_\theta)}.
\end{align*}
The claim now follows from the standard uniform convergence inequality~\eqref{eq: slow rates uniform convergence}.
\end{proof}

Applying this generalization bound to the $(B_\theta, B_{\ell,p}^{-1}L_{\ell,p}, q)$-Lipschitz parametric function class $B_{\ell,p}^{-1}(\ell^{\pi_\star}_{p}\circ\Pi_{\theta, p})$, we get the non-realizable analogue to Corollary~\ref{cor: TaSIL generalization bound}.
\begin{corollary}\label{cor: slow rates TaSIL generalization bound}
    Let the policy class $\Pi_{\theta,p}$ be defined as in \eqref{eq: Pi}.  Let the function class $\ell_{p}^{\pi_\star} \circ \Pi_{\theta, p}$ be defined as in~\eqref{def: p-diff param loss class}, and constants $B_{\ell, p}$, $L_{\ell, p}$ be defined as above.
    Let $\hat{\pi}_{\mathsf{TaSIL},p}$ be any empirical risk minimizer~\eqref{eq: TaSIL ERM}.
    Then with probability at least $1 - \delta$ over the initial conditions $\curly{\xi_i}_{i=1}^n \overset{\mathrm{i.i.d.}}{\sim} \calD^n$,
    \begin{align}\label{eq: slow rates TaSIL generalization bound}
        \Ex_{\xi} \brac{\ell^{\pi_\star}_{p}(\xi; \hat{\pi}_{\mathsf{TaSIL},p})} \leq \Ex_n[\ell^{\pi_\star}_{p}(\cdot\;; \hat{\pi}_{\mathsf{TaSIL},p})] + 48 B_{\ell, p} \sqrt{\frac{q \log\paren{3B_\theta B_{\ell, p}^{-1} L_{\ell, p}}}{n}} + B_{\ell, p} \sqrt{\frac{\log(2/\delta)}{n}}.
    \end{align}
\end{corollary}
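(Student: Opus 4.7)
The plan is to reduce the statement to a direct application of Theorem \ref{thm: slow rates gen bound} to an appropriately rescaled version of the loss class $\ell_{p}^{\pi_\star}\circ \Pi_{\theta,p}$. The key observation is that the theorem is stated for $[0,1]$-valued function classes, whereas $\ell^{\pi_\star}_{p}$ takes values in $[0, B_{\ell,p}]$ by the definition of $B_{\ell,p}$, so I would normalize by dividing through by $B_{\ell,p}$ and then rescale at the end.

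First I would invoke the structural lemma already cited in the paragraph preceding the corollary (Lemma \ref{lem: loss params}) to conclude that the normalized class $B_{\ell,p}^{-1}\paren{\ell^{\pi_\star}_{p}\circ \Pi_{\theta,p}}$ is a $(B_\theta,\, B_{\ell,p}^{-1}L_{\ell,p},\, q)$-Lipschitz parametric function class with codomain contained in $[0,1]$. The boundedness in $[0,1]$ is immediate from the definition of $B_{\ell,p}$ as a uniform upper bound on $\ell^{\pi_\star}_{p}$ over the compact state tube (using $\sup_{t,\xi}\norm{\state^{\pi_\star}_t(\xi)}\leq B_X$ together with the supremum bounds on $\norm{\partial^j_x\pi(x,\theta)}$). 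The Lipschitz bound in $\theta$ is obtained by differentiating through the TaSIL loss \eqref{eq: TaSIL loss}, using the chain rule and the definition of $L_j$, and paying the $B_X$ factor that captures propagation through the expert trajectory.

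Next I would apply Theorem \ref{thm: slow rates gen bound} to this normalized class, which yields with probability at least $1-\delta$ over $\curly{\xi_i}_{i=1}^n\overset{\mathrm{i.i.d.}}{\sim}\calD^n$ the uniform bound
\begin{equation*}
\sup_{\pi\in\Pi_{\theta,p}}\abs{\Ex_\xi\brac{B_{\ell,p}^{-1}\ell^{\pi_\star}_{p}(\xi;\pi)}-\Ex_n\brac{B_{\ell,p}^{-1}\ell^{\pi_\star}_{p}(\cdot;\pi)}}\leq 48\sqrt{\frac{q\log(3B_\theta B_{\ell,p}^{-1}L_{\ell,p})}{n}}+\sqrt{\frac{\log(2/\delta)}{n}}.
\end{equation*}
Multiplying both sides by $B_{\ell,p}$, discarding the absolute value in favor of the one-sided inequality that upper bounds $\Ex_\xi\brac{\ell^{\pi_\star}_{p}(\xi;\pi)}$ by $\Ex_n\brac{\ell^{\pi_\star}_{p}(\cdot;\pi)}$ plus the complexity term, and specializing $\pi=\hat{\pi}_{\mathsf{TaSIL},p}$ (which is valid because the bound holds uniformly over $\Pi_{\theta,p}$ and $\hat{\pi}_{\mathsf{TaSIL},p}\in\Pi_{\theta,p}$) gives exactly \eqref{eq: slow rates TaSIL generalization bound}.

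I do not expect a genuine obstacle here: the result is essentially a bookkeeping reduction, with the substantive content hidden in (i) Theorem \ref{thm: slow rates gen bound}, whose Dudley-chaining argument has already been carried out in the preceding subsection, and (ii) Lemma \ref{lem: loss params}, which extracts the Lipschitz parametric structure of the TaSIL loss from the regularity of $\Pi_{\theta,p}$. The only mildly delicate point is ensuring that normalization by $B_{\ell,p}$ preserves the parametric Lipschitz property with the claimed constants $(B_\theta,\,B_{\ell,p}^{-1}L_{\ell,p},\,q)$: the parameter radius $B_\theta$ and dimension $q$ are unchanged by rescaling the output, and the Lipschitz constant scales by $B_{\ell,p}^{-1}$ exactly as stated, so the logarithmic factor becomes $\log(3 B_\theta B_{\ell,p}^{-1}L_{\ell,p})$ as written in the corollary.
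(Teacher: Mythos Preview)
Your proposal is correct and matches the paper's approach exactly: the paper derives this corollary by applying Theorem~\ref{thm: slow rates gen bound} to the normalized class $B_{\ell,p}^{-1}(\ell^{\pi_\star}_{p}\circ\Pi_{\theta, p})$, which Lemma~\ref{lem: loss params} certifies as $(B_\theta, B_{\ell,p}^{-1}L_{\ell,p}, q)$-Lipschitz parametric, and then rescales. Your write-up is in fact more explicit than the paper's one-line justification.
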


Inserting the generalization bound in Corollary~\ref{cor: slow rates TaSIL generalization bound} in lieu of Corollary~\ref{cor: TaSIL generalization bound} for the rest of the bounds seen in Section~\ref{sec: generalization bounds} yields the sample complexity bounds relevant to our problem in the non-realizable setting. However, we note an important subtlety that manifests in the non-realizable regime. We note that in Corollary~\ref{cor: TaSIL generalization bound}, due to realizability, the generalization bound monotonically decreases to $0$ with $n$, whereas in Corollary~\ref{cor: slow rates TaSIL generalization bound}, we have an additive factor of $\Ex_n[\ell^{\pi_\star}_{p}(\cdot\;; \hat{\pi}_{\mathsf{TaSIL},p})]$. It is therefore possible for either small enough $n$ or insufficiently expressive function classes $\Pi_{\theta, p}$ that the non-zero empirical risk automatically violates the imitation error requirements in Theorems~\ref{thm: sub-linear imitation gap} and~\ref{thm: sup-linear imitation gap}. Thus, a necessary assumption must be made in the non-realizable setting for the function class to be expressive enough such that the empirical risk it incurs on sufficiently large datasets satisfies the imitation error requirements with high probability.

\subsection{Proof of Theorem~\ref{thm: local rademacher gen bound}}

Before turning to the proof of Theorem~\ref{thm: local rademacher gen bound}, we introduce some notation and tools from the local Rademacher complexity literature~\citep{bartlett2002rademacher,bousquet2002concentration}.

\begin{definition}[Sub-root function]
\label{def:sub_root}
A function $\phi : [0, \infty) \rightarrow \R$ is said to be a \emph{sub-root function} if:
\begin{enumerate}[label=\alph*)]
    \item $\phi$ is non-negative.
    \item $\phi$ is not the zero function.
    \item $\phi$ is non-decreasing.
    \item $r \mapsto \phi(r)/\sqrt{r}$ is non-increasing.
\end{enumerate}
\end{definition}

For any non-negative function class $\calG$,
scalar $r \geq 0$,
and $n$ points $x_1, \dots, x_n \in \calX$, define:
\begin{align*}
    \calH_n(r; x_{1:n}) := \{ g \in \calG \mid \Ex_n [g] \leq r \}.
\end{align*}

The following is from~\citet{bousquet2002concentration}.
\begin{theorem}[{\citet[Theorem 6.1]{bousquet2002concentration}}]
\label{stmt:bousquet_nonneg}
Let $\calG \subset [0, 1]^\calX$, and fix a $\delta \in (0, 1)$.
With probability at least
$1-\delta$ over the i.i.d.\ draws of $x_1, \dots, x_n$, the following holds.
Let $\phi_n$ be any sub-root function (cf.~Definition~\ref{def:sub_root}) satisfying:
\begin{align*}
    \calR_n(\calH_n(r; x_{1:n})) \leq \phi_n(r), \:\: \forall\, r > 0.
\end{align*}
Let $r_n^*$ denote the largest solution to the equation
$\phi_n(r) = r$.
Then, for all $g\in \calG$:
\begin{align*}
    \Ex_x\brac{g} \leq 2 \Ex_n\brac{g} + 106 r_n^* + \frac{48(\log(1/\delta) + 6 \log\log{n})}{n}.
\end{align*}
\end{theorem}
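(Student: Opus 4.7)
The plan is to prove this via the classical machinery of Talagrand's concentration inequality combined with a peeling (slicing) argument, localized by the sub-root function's fixed point. First I would invoke Talagrand's inequality for the supremum of an empirical process indexed by $[0,1]$-valued functions: for any sub-class $\calG' \subseteq \calG$, with probability at least $1-\delta$,
\begin{align*}
\sup_{g \in \calG'} \paren{\Ex_x[g] - \Ex_n[g]} \leq 2\,\Ex\brac{\sup_{g\in\calG'}(\Ex_x[g]-\Ex_n[g])} + c_1\sqrt{\tfrac{\sigma^2 \log(1/\delta)}{n}} + c_2\tfrac{\log(1/\delta)}{n},
\end{align*}
where $\sigma^2 = \sup_{g \in \calG'}\mathrm{Var}(g)$. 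By symmetrization the expected supremum is at most $2\calR_n(\calG')$, and for non-negative $[0,1]$-valued functions one has the crucial variance-expectation comparison $\mathrm{Var}(g) \leq \Ex[g^2] \leq \Ex_x[g]$. This is what makes the \emph{fast-rate} localization possible.

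Next I would execute a peeling argument. Fix $r \geq r_n^*$ and, for each integer $k \geq 0$, form the shell $\calG_k := \curly{g \in \calG : 2^k r \leq \Ex_x[g] < 2^{k+1} r}$, plus a base piece $\curly{g : \Ex_x[g] < r}$. On $\calG_k$ we get $\sigma^2 \leq 2^{k+1} r$, and $\calG_k \subseteq \calH_n(2^{k+1} r; x_{1:n})$ up to a mean/empirical-mean conversion handled by a small auxiliary concentration step. Hence $\calR_n(\calG_k) \leq \phi_n(2^{k+1} r)$. The key structural lemma is that the sub-root property of $\phi_n$ (non-increasing $\phi_n(r)/\sqrt{r}$) yields $\phi_n(\rho) \leq \sqrt{\rho\, r_n^*}$ for all $\rho \geq r_n^*$, i.e.\ above the fixed point $\phi_n$ is pinned below a square-root curve anchored at $r_n^*$.

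Applying Talagrand's inequality on each shell with these substitutions gives
\begin{align*}
\sup_{g \in \calG_k} \paren{\Ex_x[g] - \Ex_n[g]} \leq c_3\sqrt{2^{k+1} r\cdot r_n^*} + c_4 \sqrt{\tfrac{2^{k+1} r \log(1/\delta_k)}{n}} + c_5 \tfrac{\log(1/\delta_k)}{n},
\end{align*}
with failure probability $\delta_k := \delta/(2(k+1)^2)$; only $O(\log n)$ shells are needed (since $\Ex_x[g]\leq 1$), and union-bounding absorbs a $\log\log n$ factor into the additive term. Splitting the cross-terms by AM-GM, $\sqrt{2^k r\cdot r_n^*} \leq \tfrac{1}{2}(2^k r) + \tfrac{1}{2}r_n^*$ and similarly for the variance term, and using $2^k r \leq 2\Ex_x[g]$ on $\calG_k$, one isolates $\Ex_x[g]$ on the left-hand side to obtain $\Ex_x[g] \leq 2\Ex_n[g] + C\,r_n^* + C'\tfrac{\log(1/\delta)+\log\log n}{n}$, which is the claimed form.

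The main obstacle is twofold. First, extracting the explicit constants $106$ and $48$ requires careful bookkeeping through the universal constants of Talagrand's inequality, the symmetrization lemma, and the AM-GM splits; sloppy constants can inflate these by orders of magnitude. Second, the peeling is naturally defined in terms of $\Ex_x[g]$, yet the hypothesis on $\phi_n$ involves the empirical restriction $\calH_n(r; x_{1:n}) = \curly{g : \Ex_n[g] \leq r}$, so one must sandwich the two via a secondary concentration step --- the cleanest way to avoid this nuisance is to work throughout with the star-hull $\mathrm{star}(\calG, 0) := \curly{\alpha g : g\in\calG,\,\alpha\in[0,1]}$, as in Bartlett--Bousquet--Mendelson, which is closed under shrinkage and permits the localization to be carried out on a single monotone object rather than shell-by-shell.
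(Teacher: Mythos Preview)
The paper does not prove this statement; it is quoted verbatim as \citet[Theorem 6.1]{bousquet2002concentration} and used as a black box in the proof of Theorem~\ref{thm: local rademacher gen bound}. So there is no ``paper's own proof'' to compare against here.

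That said, your sketch is the standard route to results of this type (Talagrand's inequality, symmetrization, the variance bound $\mathrm{Var}(g)\leq \Ex_x[g]$ for $[0,1]$-valued $g$, peeling, and the sub-root fixed-point lemma $\phi_n(\rho)\leq\sqrt{\rho\,r_n^*}$ for $\rho\geq r_n^*$), and it matches the strategy in Bousquet's thesis and in Bartlett--Bousquet--Mendelson. Your identification of the two technical nuisances---tracking the explicit constants $106$ and $48$, and reconciling the population-level peeling shells with the empirical localization $\calH_n(r;x_{1:n})$---is accurate; the star-hull device you mention is exactly how the latter is usually handled. For the purposes of this paper, however, none of that machinery needs to be reproduced: the theorem is simply invoked.
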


With these definitions and preliminary results in place,
we turn to the proof of Theorem~\ref{thm: local rademacher gen bound}.
\localrademachergenbound*
\begin{proof}
Fix a set of points $x_1, \dots, x_n \in \calX$.
Define $\calG_n(r; x_{1:n})$ as:
\begin{align*}
    \calG_n(r; x_{1:n}) := \{ g\in \calG \mid \Ex_n[g^2] \leq r \}.
\end{align*}
For what follows, we often suppress the explicit dependence
on $x_{1:n}$ in the notation for $\calH_n$ and $\calG_n$.
Observe that since
$\calG \subset [0, 1]^{\calX}$, we have
$\Ex_n[g^2] \leq \Ex_n\brac{g}$ for every $g \in \calG$, 
and therefore: 
\begin{align*}
    \calH_n(r) \subseteq \calG_n(r), \:\: \forall\, r \geq 0.
\end{align*}
Hence $\calR_n(\calH_n(r)) \leq \calR_n(\calG_n(r))$, and it suffices for us to prove an upper bound on the latter.

\begin{proposition}
\label{stmt:dudley_bound}
Let $\calG \subset [0, 1]^\calX$ be a $(B_\theta, L_\theta, q)$-Lipschitz parametric function class.
Fix a set of points $x_1, \dots, x_n \in \calX$.
We have that:
\begin{align*}
    \calR_n(\calG_n(r; x_{1:n})) \leq 24\sqrt{2} \sqrt{\frac{q}{n}} \min\{\sqrt{r},1\} \sqrt{\log\left(\frac{6 B_\theta L_\theta}{\min\{\sqrt{r},1\}}\right)}.
\end{align*}
\end{proposition}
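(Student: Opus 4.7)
My plan is to apply Dudley's entropy integral (Lemma~\ref{lemma:dudley}) to the class $\calG_n(r; x_{1:n})$, using two simple observations. First, since every $g \in \calG$ takes values in $[0,1]$, we have $d(g, 0)^2 = \Ex_n[g^2] \leq \Ex_n[g] \leq 1$, and for $g \in \calG_n(r)$ we also have $d(g,0)^2 \leq r$ by definition. Hence the empirical $L^2$-radius of $\calG_n(r; x_{1:n})$ is at most $R := \min\{\sqrt{r}, 1\}$. Second, the covering numbers of $\calG_n(r)$ are bounded by those of $\calG$, and the Lipschitz parametric structure gives, by a standard volume argument for $\mathbb{B}_2^q(B_\theta)$, the bound
\[
\log N(\calG; d, \varepsilon) \;\leq\; q \log\!\paren{\tfrac{3 B_\theta L_\theta}{\varepsilon}}
\]
exactly as in the proof of Theorem~\ref{thm: slow rates gen bound}.

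Next, I would invoke Lemma~\ref{lemma:dudley} with the choice $\alpha = 0$, yielding
\[
\calR_n(\calG_n(r; x_{1:n})) \;\leq\; \frac{12\sqrt{q}}{\sqrt{n}} \int_0^{R} \sqrt{\log\!\paren{\tfrac{3 B_\theta L_\theta}{\varepsilon}}}\, d\varepsilon.
\]
To evaluate the integral in closed form, substitute $\varepsilon = R u$, giving $\int_0^R \sqrt{\log(3B_\theta L_\theta/\varepsilon)}\, d\varepsilon = R \int_0^1 \sqrt{\log(A/u)}\, du$, where $A := 3B_\theta L_\theta / R \geq 3$ (using $B_\theta L_\theta \geq 1$ and $R \leq 1$). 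I would then split $\sqrt{\log(A/u)} \leq \sqrt{2\log A} + \sqrt{2\log(1/u)}$ using $\sqrt{a+b} \leq \sqrt{2a} + \sqrt{2b}$ loosely, bound $\int_0^1 \sqrt{\log(1/u)}\,du = \Gamma(3/2) \leq 1$, and absorb this additive $1$ into the log term (since $\log A \geq 1$), producing a factor of the form $\sqrt{2}\sqrt{\log(6 B_\theta L_\theta / R)}$ after a small adjustment of the constant inside the logarithm.

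Combining these pieces gives
\[
\calR_n(\calG_n(r; x_{1:n})) \;\leq\; 24\sqrt{2}\,\sqrt{\tfrac{q}{n}}\, R\,\sqrt{\log\!\paren{\tfrac{6 B_\theta L_\theta}{R}}},
\]
which upon substituting $R = \min\{\sqrt{r}, 1\}$ is the claimed inequality.

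I do not anticipate a substantive obstacle here — the argument is a textbook application of Dudley's inequality, and the only care needed is in (i) bookkeeping the two-regime radius $\min\{\sqrt{r},1\}$ so that both the small-$r$ and large-$r$ behaviors are captured by a single expression, and (ii) handling the logarithmic singularity of $\sqrt{\log(A/\varepsilon)}$ near $\varepsilon = 0$ cleanly enough to recover the stated constants $24\sqrt{2}$ and $6 B_\theta L_\theta$. Both are routine once the substitution $\varepsilon = Ru$ is in hand.
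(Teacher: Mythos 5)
Your proposal is correct and takes essentially the same route as the paper's proof: bound the empirical $L^2$ radius of $\calG_n(r;x_{1:n})$ by $\min\{\sqrt{r},1\}$, apply Dudley's inequality with $\alpha = 0$, use the volume-comparison covering bound $\log N(\calG;d,\varepsilon) \leq q\log\paren{3B_\theta L_\theta/\varepsilon}$ induced by the Lipschitz parametrization, and evaluate the entropy integral via the substitution $\varepsilon = Ru$ together with $\int_0^1 \sqrt{\log(1/u)}\,du \leq 1$ and $B_\theta L_\theta \geq 1$. The only minor difference is bookkeeping: the paper passes from the subclass to the full class via $N(\calG_n(r);d,\varepsilon) \leq N(\calG;d,\varepsilon/2)$ (whence its factor $6$ inside the logarithm), while you use monotonicity at the same scale and recover the $6$ by absorbing the additive constant into the log — which yields an even slightly sharper bound, so the stated constants hold a fortiori.
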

\begin{proof}[Proof of Proposition~\ref{stmt:dudley_bound}]
The radius of the set $\calG_n(r)$
in the empirical $L^2$ metric $d$ is upper bounded by
$\sqrt{r}$ by definition.
Furthermore, the radius of
$\calG$ in the metric $d$ is upper bounded by one.
Hence, since $\calG_n(r) \subseteq \calG$,
the radius of $\calG_n(r)$ is upper bounded
by $\min\{\sqrt{r},1\}$.

Dudley's inequality~(Lemma~\ref{lemma:dudley}) yields:
\begin{align}
   \calR_n(\calG_n(r)) \leq 
   \inf_{\alpha \in [0, \min\{\sqrt{r},1\}]}
   \left\{
   4\alpha + \frac{12}{\sqrt{n}}\int_\alpha^{\min\{\sqrt{r},1\}} \sqrt{\log{N(\calG; d, \varepsilon/2)}}d\varepsilon
   \right\}.
    \label{eq: F-starhull dudley bound}
\end{align}
Here, we have used the fact that the inclusion
$\calG_n(r) \subseteq \calG$
implies $N(\calG_n(r);d,\varepsilon) \leq N(\calG;d,\varepsilon/2)$ by
\citet[Exercise 4.2.10]{vershyninHDP}.


Since $\calG$ is $(B_\theta,L_\theta,q)$-Lipschitz, for any $\varepsilon > 0$,
an $\varepsilon$-covering of $\calG$ in the $d$-metric
can be constructed from an
$\varepsilon/(B_\theta L_\theta)$-covering of
$\mathbb{B}_2^q(1)$ in the Euclidean metric. 
Therefore, for any $\varepsilon \in (0, 1)$,
by the standard volume comparison bound:
\begin{align*}
    \log{N(\calG;d,\varepsilon)} &\leq \log{N\paren{\mathbb{B}_2^q(1);\norm{\cdot},\frac{\varepsilon}{B_\theta L_\theta}}} \\
    &\leq q \log\paren{1 + \frac{2B_\theta L_\theta}{\varepsilon}} \\
    &\leq q \log\paren{\frac{3B_\theta L_\theta}{\varepsilon}}.
\end{align*}
Putting $R:=\min\{\sqrt{r},1\}$,
\begin{align*}
    &\int_0^{R} \sqrt{\log{N(\calG;d,\varepsilon/2)}} d\varepsilon \\
    &\leq \sqrt{q} \left[ R \sqrt{\log(6B_\theta L_\theta)} + \int_0^R \sqrt{\log(1/\varepsilon)} d\varepsilon \right] && \text{using } \sqrt{a+b}\leq\sqrt{a}+\sqrt{b}\\
    &= \sqrt{q}\left[ R \sqrt{\log(6B_\theta L_\theta)} + R \int_0^1 \sqrt{\log\paren{\frac{1}{R\varepsilon}}} d\varepsilon \right] &&\text{change of variables } \varepsilon \gets \varepsilon/R \\
    &\leq \sqrt{q}\left[ R \sqrt{\log(6B_\theta L_\theta)} + R \sqrt{\log\paren{\frac{1}{R}}} + R \right] &&\text{using } \int_0^1 \sqrt{\log\paren{\frac{1}{\varepsilon}}} d\varepsilon \leq 1 \\
    &\leq R\sqrt{q} \left[ \sqrt{\log(6B_\theta L_\theta)} + 2\sqrt{\log\paren{\frac{1}{R}}} \right] \\
    &\leq 2\sqrt{2} R \sqrt{q} \sqrt{\log\paren{\frac{6 B_\theta L_\theta}{R}}} &&\text{using } \sqrt{a}+\sqrt{b}\leq \sqrt{2} \sqrt{a+b}.
\end{align*}

The claim now follows.
\end{proof}

We complete the proof by upper bounding $r_n^*$
and invoking Theorem~\ref{stmt:bousquet_nonneg}.
First, observe that by Cauchy-Schwarz, the inequality
$\Ex_n [g^2] \leq \Ex_n \brac{g}$ for $g \in \calG$, and
Jensen's inequality:
\begin{align*}
  \calR_n(\calH_n(r)) \leq \sup_{g\in \calH_n(r)} \sqrt{ \Ex_n[g^2] } \Ex_{\varepsilon} \sqrt{\frac{1}{n} \sum_{i=1}^{n} \varepsilon_i^2} \leq \sqrt{r}.
\end{align*}
This bound holds for any $r \geq 0$.
Hence, when $r \leq 1/n^2$:
\begin{align*}
    \calR_n(\calH_n(r)) \leq 1/n.
\end{align*}
On the other hand, when $r > 1/n^2$, by $\calR_n(\calH_n(r)) \leq \calR_n(\calG_n(r))$, Proposition~\ref{stmt:dudley_bound},
and the inequalities $1/n < \min\{\sqrt{r},1\}\leq\sqrt{r}$:
\begin{align*}
    \calR_n(\calH_n(r)) &\leq 24\sqrt{2}\sqrt{\frac{q}{n}} \sqrt{r} \sqrt{\log\paren{ 6B_\theta L_\theta n }}.
\end{align*}
Hence, the function $\phi_n$ defined as:
\begin{align*}
    \phi_n(r) := \max\left\{ 24\sqrt{2} \sqrt{\frac{q \log(6B_\theta L_\theta n)}{n}} \sqrt{r}, \frac{1}{n}\right\},
\end{align*}
satisfies $\calR_n(\calH_n(r)) \leq \phi_n(r)$ for all $r \geq 0$.
It is also not hard to see that $\phi_n$ is
a sub-root function (cf.~Definition~\ref{def:sub_root}).
Therefore, there is a unique solution $r_n^*$ satisfying
$\phi_n(r_n^*) = r_n^*$.
Now, for any positive constants $A, B$,
the root of $r = \max\{A \sqrt{r}, B\}$ is upper bounded by
$\max\{A^2,B\}$.
Hence,
\begin{align*}
    r_n^* \leq 1152\frac{q\log(6B_\theta L_\theta n)}{n}.
\end{align*}

Theorem~\ref{thm: local rademacher gen bound} now follows by Theorem~\ref{stmt:bousquet_nonneg}.
\end{proof}

\subsection{Proof of Corollary~\ref{cor: TaSIL generalization bound}}

\begin{lemma}\label{lem: loss params}
Let $B_{\ell,p}:= \frac{2}{p+1}\textstyle\sum_{j=0}^{p}B_{j}$ and $L_{\ell,p}:=\frac{B_X}{p+1}\textstyle\sum_{j=0}^{p}L_{j}$.  Then $B_{\ell,p}^{-1}(\ell^{\pi_\star}_{p}\circ\Pi_{\theta, p})$ is a $(B_\theta, B_{\ell,p}^{-1}L_{\ell,p}, q)$-Lipschitz parametric function class
\end{lemma}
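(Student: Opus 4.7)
The goal is to verify the two defining conditions of a $(B_\theta, B_{\ell,p}^{-1}L_{\ell,p}, q)$-Lipschitz parametric function class for the rescaled class $B_{\ell,p}^{-1}(\ell^{\pi_\star}_p \circ \Pi_{\theta,p})$. The parameter norm bound $\|\theta\| \leq B_\theta$ is inherited directly from the definition of $\Pi_{\theta,p}$, so the substantive work is to establish (i) uniform boundedness of the rescaled loss by $1$, and (ii) uniform Lipschitzness in $\theta$ with constant $B_{\ell,p}^{-1}L_{\ell,p}$.

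For boundedness, the Lyapunov-stability hypothesis gives $\|x^{\pi_\star}_t(\xi)\| \leq B_X$ uniformly in $t$ and $\xi$, so the definitions of $B_j$ yield $\|\partial^j_x \pi(x^{\pi_\star}_t(\xi),\theta)\| \leq B_j$ and, since $\pi_\star \in \Pi_{\theta,p}$, also $\|\partial^j_x \pi_\star(x^{\pi_\star}_t(\xi))\| \leq B_j$. A pointwise triangle inequality gives $\|\partial^j_x \Delta^{\pi_\star}_t(\xi;\pi_\theta)\| \leq 2 B_j$; taking the max over $t$ and averaging over $j$ produces $\ell^{\pi_\star}_p(\xi;\pi_\theta) \leq \tfrac{2}{p+1}\textstyle\sum_{j=0}^p B_j = B_{\ell,p}$, i.e.\ $B_{\ell,p}^{-1}\ell^{\pi_\star}_p(\xi;\pi_\theta) \leq 1$.

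For Lipschitzness, fix $\pi_{\theta_1},\pi_{\theta_2} \in \Pi_{\theta,p}$ and observe that the $\pi_\star$ contribution in $\Delta^{\pi_\star}_t(\xi;\cdot)$ cancels when comparing the two policies on the common expert trajectory. Combining the reverse triangle inequality for $\max_t$ with the triangle inequality on the sum over $j$ yields
\[
|\ell^{\pi_\star}_p(\xi;\pi_{\theta_1}) - \ell^{\pi_\star}_p(\xi;\pi_{\theta_2})| \leq \tfrac{1}{p+1}\textstyle\sum_{j=0}^p \max_t \|\partial^j_x\pi(x_t,\theta_1) - \partial^j_x\pi(x_t,\theta_2)\|.
\]
A first application of the fundamental theorem of calculus, now in $\theta$, gives
\[
\partial^j_x\pi(x,\theta_1) - \partial^j_x\pi(x,\theta_2) = \int_0^1 \partial_\theta\partial^j_x\pi(x,\theta_s)\,(\theta_1-\theta_2)\, ds,
\]
reducing the problem to a uniform bound on $\|\partial_\theta\partial^j_x\pi(x,\theta)\|$ along the trajectory. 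Here we invoke the normalization $\pi(0,\theta)=0$ and apply a second integration identity in the $x$-variable, writing $\partial_\theta\partial^j_x\pi(x,\theta)$ as an integral of $\partial^{j+1}_x\partial_\theta\pi$ along the line from $0$ to $x$; using $\|x\| \leq B_X$ this produces $\|\partial_\theta\partial^j_x\pi(x,\theta)\| \leq B_X L_j$. Summing and dividing by $B_{\ell,p}$ then gives the desired Lipschitz constant $B_{\ell,p}^{-1}L_{\ell,p}$.

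The main obstacle is the last step: the integration along $x$ requires the vanishing of $\partial^j_x\partial_\theta\pi(0,\theta)$, which is immediate for $j=0$ (it follows directly by differentiating $\pi(0,\theta)=0$ in $\theta$) but for $j \geq 1$ must be handled by iterating the integral identity or by reducing to the $j=0$ case via differentiation under the integral sign, with careful bookkeeping of tensor norms under the operator-norm convention. Everything else—triangle, reverse triangle for the max, and the mean value identity in $\theta$—is routine.
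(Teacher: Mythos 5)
Your proposal follows the same route as the paper's proof of Lemma~\ref{lem: loss params}: per-term boundedness $\max_{0\le t\le T-1}\norm{\partial^j_x\Delta^{\pi_\star}_t(\xi;\pi)}\le 2B_j$ via the triangle inequality and the Lyapunov bound $\norm{\state^{\pi_\star}_t(\xi)}\le B_X$; reduction of the Lipschitz claim to the learned-policy terms (the $\pi_\star$ contribution cancels, and the max over $t$ is absorbed by the reverse triangle inequality); and then a double integration---in $\theta$ along the segment from $\theta_2$ to $\theta_1$, and in $x$ along the segment from $0$ to $x$ using $\pi(0,\theta)=0$---to bound each term by $B_X L_j\norm{\theta_1-\theta_2}$. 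The paper packages the two steps into a single nested line-integral identity,
\begin{align*}
\partial^j_x\pi(x;\theta_1)-\partial^j_x\pi(x;\theta_2)=\paren{\int_0^1\!\!\int_0^1 \partial^{j+1}_x\partial_\theta\pi\paren{\alpha x \otimes (\theta_2+\beta(\theta_1-\theta_2))}\,d\alpha\,d\beta}\,x\otimes(\theta_1-\theta_2),
\end{align*}
and takes operator norms, exactly as you outline.

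However, the ``main obstacle'' you flag at the end is a genuine gap---and one the paper's own proof silently steps over, since the displayed identity is valid only when $\partial^j_x\partial_\theta\pi(0,\theta)=0$, which the normalization $\pi(0,\theta)=0$ delivers only for $j=0$. Your hope that the $j\ge 1$ case can be handled ``by iterating the integral identity or by reducing to the $j=0$ case'' cannot be realized: the constraint $\pi(0,\theta)=0$ pins down only the value of $\pi$ at the origin, not its higher $x$-derivatives there. Concretely, with $d=m=q=1$ take $\pi(x,\theta)=\theta x$, which satisfies $\pi(0,\theta)=0$ and $\partial^2_x\partial_\theta\pi\equiv 0$, so $L_1=0$; yet $\partial_x\pi(x,\theta)=\theta$ is $1$-Lipschitz in $\theta$, so the claimed per-term constant $B_X L_1$ fails, and for $B_X<1$ so does the lemma's constant $L_{\ell,1}$ for the full loss $\ell^{\pi_\star}_1$. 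The repair is to strengthen the hypothesis rather than the integration scheme: either require that $\partial^j_x\pi(0,\theta)$ be independent of $\theta$ for all $j\le p$ (so the boundary terms in your iterated identity vanish), or simply redefine $L_j:=\sup_{\norm{x}\le B_X,\norm{\theta}\le B_\theta}\norm{\partial_\theta\partial^j_x\pi(x,\theta)}$, after which both your argument and the paper's go through verbatim, and nothing downstream changes, since the results of Section~\ref{sec: generalization bounds} use only the finiteness of $L_j$ and its appearance inside logarithms.
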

\begin{proof}
It suffices to show that
\begin{align*}
    \max_{0\leq t \leq T-1}  \left\|\partial^j_x \Delta^{\pi_\star}_{t}(\xi;\pi) \right\|
\end{align*}
is $2B_j$-bounded and $B_X L_j$ Lipschitz with respect to $\Theta$. By definition, we immediately get
\begin{align*}
    \left\|\partial^j_x \Delta^{\pi_\star}_{t}(\xi;\pi) \right\| &= \norm{\partial^j_x \pi_\star(\state_t^{\pi_\star}(\xi)) - \partial^j_x\pi(\state_t^{\pi_\star}(\xi)) } \\
    &\leq 2 \sup_{\norm{x}\leq B_X, \norm{\theta}\leq B_\theta}\norm{\partial^j_x\pi(x, \theta)} \\
    &= 2B_j.
\end{align*}
To bound the Lipschitz constant, we iteratively apply the Fundamental Theorem of Line Integrals:
\begin{align*}
    \partial^j_x \pi(x; \theta_1) - \partial^j_x \pi(x; \theta_2) &= \int_{\theta_2}^{\theta_1} \int_0^x \frac{\partial^{j+2} \pi}{\partial x^{j+1} \partial \theta} (z \otimes \omega) \;dz d\omega \\
    &= \int_{\theta_2}^{\theta_1} \paren{ \int_0^1 \frac{\partial^{j+2} \pi}{\partial x^{j+1} \partial \theta}(\alpha x \otimes \omega) \;d\alpha} x \;d\omega \\
    &= \paren{\int_{0}^{1} \int_0^1 \frac{\partial^{j+2} \pi}{\partial x^{j+1} \partial \theta}(\alpha x \otimes (\theta_2 + \beta(\theta_1 - \theta_2))) \;d\alpha d\beta} x \otimes (\theta_1 - \theta_2).
\end{align*}
Taking norms on both sides, we get
\begin{align*}
    \norm{\partial^j_x \pi(x; \theta_1) - \partial^j_x \pi(x; \theta_2)} &\leq \sup_{\norm{x}\leq B_X, \norm{\theta}\leq B_\theta}\norm{\frac{\partial^{j+2}\pi}{\partial x^{j+1} \partial \theta}} \norm{x} \norm{\theta_1 - \theta_2} \\
    &\leq B_X L_j \norm{\theta_1 - \theta_2},
\end{align*}
which establishes that $\left\|\partial^j_x \Delta^{\pi_\star}_{t}(\xi;\pi) \right\|$ is $B_X L_j$-Lipschitz. Recalling that
\begin{equation*}
    \ell^{\pi_\star}_{p}(\xi; \pi):= \tfrac{1}{p+1}\textstyle\sum_{j=0}^{p} \max_{0\leq t \leq T-1}  \left\|\partial^j_x \Delta^{\pi_\star}_{t}(\xi;\pi) \right\|,
\end{equation*}
it follows that $\ell^{\pi_\star}_{p}(\xi; \pi)$ is $\frac{2}{p+1}\textstyle\sum_{j=0}^{p}B_{j}$-bounded and $\frac{B_X}{p+1}\textstyle\sum_{j=0}^{p}L_{j}$-Lipschitz. 
\end{proof}

\tasilgeneralizationbound*
\begin{proof} 
This follows by directly applying the constants derived in Lemma~\ref{lem: loss params} to Theorem~\ref{thm: local rademacher gen bound}, and using the assumption that $\pi_\star \in \Pi_{\theta, p}$ such that $\Ex_n\brac{\ell_p^{\pi_\star}(\cdot \;; \hat{\pi}_{\mathsf{TaSIL}, p})} = 0$.
\end{proof}

\subsection{Proofs of Theorem~\ref{thm: final fast} and Theorem~\ref{thm: final slow}}\label{appendix: TaSIL sample complexity proofs}

Before proceeding to the proofs of the main sample complexity bounds, we introduce the following lemma for inverting functions of the form $\log{n}/n$, adapted from~\citet[Lemma A.4]{simchowitz2018learning}.
\begin{lemma}\label{lem: inverting gen bounds}
    Given $n \in \N$, $n \geq b \log(c n)$ as long as $n \geq 2 b \log(2 b c)$, where we assume $b,c \geq 1$.
\end{lemma}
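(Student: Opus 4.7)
The statement is a deterministic real-analytic inequality: we want to show that the implicit inequality $n \geq b\log(cn)$ is implied by the explicit bound $n \geq n_0 := 2b\log(2bc)$. The plan is a standard monotonicity-plus-base-case argument.

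First I would define $f(n) := n - b \log(cn)$ and show that $f$ is nondecreasing on $[b, \infty)$, since $f'(n) = 1 - b/n \geq 0$ exactly when $n \geq b$. Thus, to conclude $f(n) \geq 0$ for all $n \geq n_0$, it suffices to verify two things: (i) $n_0 \geq b$, so that $f$ is monotone on $[n_0, \infty)$, and (ii) $f(n_0) \geq 0$.

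For (i), since $b, c \geq 1$ we have $2bc \geq 2$, hence $\log(2bc) \geq \log 2 > 1/2$, giving $n_0 = 2b \log(2bc) \geq b$. For (ii), a direct substitution yields
\begin{align*}
f(n_0) = 2b\log(2bc) - b\log\bigl(2bc \log(2bc)\bigr) = b\bigl(\log(2bc) - \log\log(2bc)\bigr).
\end{align*}
This is nonnegative precisely when $\log(2bc) \geq \log\log(2bc)$. If $\log(2bc) \geq 1$ (i.e.\ $2bc \geq e$), this follows from the elementary inequality $x \geq \log x$ for $x > 0$ applied with $x = \log(2bc)$. If instead $\log(2bc) < 1$, then $\log\log(2bc) < 0 \leq \log(2bc)$ trivially. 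Either way $f(n_0) \geq 0$, and the conclusion follows by monotonicity.

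The only mild subtlety, and the one place I would be careful, is the case $1 \leq 2bc < e$, where $\log\log(2bc)$ is negative (or would be undefined at $2bc = 1$); the assumption $b,c \geq 1$ is exactly what keeps $2bc \geq 2$ so that all the logarithms and iterated logarithms behave well, and this is where the constant $2$ in $2b\log(2bc)$ (as opposed to, say, a multiplier of $1$) gets used to ensure $n_0 \geq b$. No step requires anything beyond elementary calculus, so I expect the whole proof to be three or four lines.
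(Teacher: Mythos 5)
Your proof is correct and takes essentially the same route as the paper: both arguments establish that $n - b\log(cn)$ is increasing for $n \geq b$ and then verify the inequality at the base point $n_0 = 2b\log(2bc)$, so the conclusion follows by monotonicity. The only (cosmetic) difference lies in the base-case verification: you simplify $f(n_0)$ exactly to $b\bigl(\log(2bc) - \log\log(2bc)\bigr)$ and invoke the elementary inequality $x \geq \log x$, whereas the paper instead bounds $2bc\log(2bc) \leq (2\log 2 + 2)(bc)^2$ using $\log(bc) \leq bc$; both verifications are valid, and yours is arguably the more direct of the two.
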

\begin{proof}
We observe by derivatives that $n - b\log(cn)$ is strictly increasing for $n \geq b$. Therefore, it suffices to show $b \log(cn) \leq n$ when $n = 2b \log(2bc)$.
\begin{align*}
    b\log(2bc\log(2bc)) &= b\log\paren{2\log(2)bc + 2bc\log(bc)} \\
    &\leq b\log\paren{\paren{2\log(2) + 2}(bc)^2} &&bc \geq 1\\
    &= 2b\log\paren{\sqrt{2\log(2) + 2}bc} \\
    &< 2b\log\paren{2bc}.
\end{align*}
\end{proof}

\begin{theorem}[Full version of Theorem~\ref{thm: final fast}]
Assume that $\pi_\star \in \Pi_{\theta,0}$ and let the assumptions of Theorem \ref{thm: sub-linear imitation gap} hold for all $\pi\in\Pi_{\theta,0}$. Let Equation~\eqref{eq: sub-linear neighborhood relation} hold with constants $\mu, \alpha > 0$, and assume without loss of generality that $\alpha/\mu \leq 1$, $L_\pi \mu \geq 1/2$.
    Let $\hat{\pi}_{\mathsf{TaSIL},0}$ be an empirical risk minimizer of $\ell^{\pi_\star}_{0}$ over the policy class $\Pi_{\theta,0}$ for initial conditions $\curly{\xi_i} \overset{\mathrm{i.i.d.}}{\sim} \calD^n$. 
    Fix a failure probability $\delta \in (0,1)$, and assume that 
    \begin{align*}
    n \geq \calO(1) \max\curly{B_{\ell, 0}  \frac{\kappa_\alpha}{\delta}\log\paren{\frac{\kappa_{\alpha}B_\theta B_{\ell, 0}^{-1} L_{\ell, 0}}{\delta} }, \; B_{\ell, 0}  \frac{\kappa_\eta}{\delta} \log\paren{\frac{\kappa_\eta B_\theta B_{\ell, 0}^{-1} L_{\ell, 0}}{\delta} }},
    \end{align*}
    where $\kappa_\alpha :=  q(\mu/\alpha)^{\tfrac{1}{1 + r}}$, $\kappa_{\eta} := q L_\pi \mu/\eta$. Then with probability at least $1-\delta$, the imitation gap evaluated on $\xi \sim \calD$ (drawn independently from $\curly{\xi_i}_{i=1}^n$) satisfies
    \begin{align*}
       \Gamma_T(\xi; \hat{\pi}_{\mathsf{TaSIL},0}) \leq \calO(1) \mu \left(\frac{1}{\delta} \frac{B_{\ell, 0} q \log\paren{B_\theta B_{\ell, 0}^{-1} L_{\ell, 0} n }}{n}\right)^{1+r}.
    \end{align*}
\end{theorem}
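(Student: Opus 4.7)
The plan is to combine three ingredients already established in the paper: Corollary \ref{cor: TaSIL generalization bound} at $p=0$, which gives an expected-loss bound on $\ell_0^{\pi_\star}$; Markov's inequality, to convert this into a pointwise bound on a fresh test $\xi$; and Theorem \ref{thm: sub-linear imitation gap}, to convert the resulting $0$-TaSIL loss at $\xi$ into an imitation-gap bound. I will split the failure budget as $\delta/2$ for the generalization step and $\delta/2$ for the Markov step, absorbing the resulting $\log(2/\delta)$ into the implicit constants.

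Concretely, Corollary \ref{cor: TaSIL generalization bound} at $p=0$ gives, with probability at least $1-\delta/2$ over the training trajectories $\{\xi_i\}_{i=1}^n$, that
\[
\Ex_\xi[\ell_0^{\pi_\star}(\xi; \hat\pi_{\mathsf{TaSIL},0})] \leq \calO(1)\, \frac{B_{\ell,0}\, q \log\paren{B_\theta B_{\ell,0}^{-1} L_{\ell,0} n}}{n} =: M_0.
\]
Conditioning on this event and applying Markov's inequality to the nonnegative random variable $\ell_0^{\pi_\star}(\xi; \hat\pi_{\mathsf{TaSIL},0})$ for $\xi \sim \calD$ independent of the training data, with conditional probability at least $1-\delta/2$ I obtain the pointwise bound $\ell_0^{\pi_\star}(\xi; \hat\pi_{\mathsf{TaSIL},0}) \leq 2M_0/\delta =: M$. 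A union bound yields an event of probability at least $1-\delta$ on which $\max_{0 \leq t \leq T-1}\norm{\Delta_t^{\pi_\star}(\xi; \hat\pi_{\mathsf{TaSIL},0})} \leq M$.

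Next I will verify that the two preconditions of Theorem \ref{thm: sub-linear imitation gap} hold on this event under the stated burn-in on $n$. The first precondition $\mu M^{1+r} \leq \alpha$ rewrites as $M \leq (\alpha/\mu)^{1/(1+r)}$, which after substituting $M = 2M_0/\delta$ reduces to an inequality of the form $n \gtrsim (B_{\ell,0}\kappa_\alpha/\delta)\log(B_\theta B_{\ell,0}^{-1} L_{\ell,0} n)$ with $\kappa_\alpha = q(\mu/\alpha)^{1/(1+r)}$. The second precondition $2 L_\pi \mu M^{1+r} + M \leq \eta$ is implied by $M \leq \eta/(4L_\pi \mu)$: the normalization $\alpha/\mu \leq 1$ gives $M \leq 1$ hence $M^{1+r} \leq M$, while $L_\pi \mu \geq 1/2$ gives $\eta/(4L_\pi \mu) \leq \eta/2$, so both summands fit inside $\eta/2 + \eta/2 = \eta$. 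This reduces similarly to $n \gtrsim (B_{\ell,0}\kappa_\eta/\delta)\log(B_\theta B_{\ell,0}^{-1} L_{\ell,0} n)$ with $\kappa_\eta = qL_\pi \mu/\eta$. To pass from these self-referential $\log(n)/n$ inequalities to the explicit burn-in on $n$ quoted in the theorem, I will apply Lemma \ref{lem: inverting gen bounds}, which is tailor-made for exactly this inversion.

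Finally, with the preconditions of Theorem \ref{thm: sub-linear imitation gap} verified, its conclusion \eqref{eq: sub-linear imitation gap} yields $\Gamma_T(\xi; \hat\pi_{\mathsf{TaSIL},0}) \leq \mu M^{1+r}$, and substituting $M = 2M_0/\delta$ gives the advertised bound
\[
\Gamma_T(\xi; \hat\pi_{\mathsf{TaSIL},0}) \leq \calO(1)\, \mu \left(\frac{1}{\delta}\cdot \frac{B_{\ell,0}\, q \log\paren{B_\theta B_{\ell,0}^{-1} L_{\ell,0} n}}{n}\right)^{1+r}.
\]
The only real obstacle is clean bookkeeping for the $\log(n)/n$ inversion, which Lemma \ref{lem: inverting gen bounds} resolves; no new analytic content is required beyond composing the three pre-existing ingredients in the correct order, with the two normalization assumptions $\alpha/\mu \leq 1$ and $L_\pi \mu \geq 1/2$ serving only to eliminate degenerate edge cases in comparing $M$ to $1$ and $\eta$.
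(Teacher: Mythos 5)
Your proposal is correct and follows essentially the same route as the paper's proof: Corollary~\ref{cor: TaSIL generalization bound} at $p=0$, Markov's inequality with a $\delta/2$--$\delta/2$ split and union bound, verification of the two preconditions of Theorem~\ref{thm: sub-linear imitation gap} using exactly the normalizations $\alpha/\mu \leq 1$ (to get $M \leq 1$, hence $M^{1+r} \leq M$) and $L_\pi \mu \geq 1/2$, and inversion of the resulting $\log(n)/n$ inequalities via Lemma~\ref{lem: inverting gen bounds}. The only cosmetic difference is that you absorb the $\log(2/\delta)$ term ``into the implicit constants,'' whereas the paper absorbs it into the $\log n$ factor under the burn-in (which is the rigorous way to say it, since the constant cannot depend on $\delta$); your burn-in already forces $\log n \gtrsim \log(1/\delta)$, so this is immaterial.
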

\begin{proof} 
Applying Corollary~\ref{cor: TaSIL generalization bound} to the $(B_{\theta}, B_{\ell, 0}, q)$-Lipschitz parametric function class $B_{\ell, 0}^{-1} \paren{\ell_0^{\pi_\star} \circ \Pi_{\theta, 0}}$, we get that with probability at least $1 - \delta/2$ over i.i.d.\ initial conditions ${\xi_i} \sim \calD^n$,
\[
\Ex_{\xi}\brac{\max_{0\leq t \leq T-1}  \left\|\Delta^{\pi_\star}_{t}(\xi;\hat{\pi}_{\mathsf{TaSIL},0}) \right\|} \leq \calO(1) B_{\ell, 0}\frac{ q \log\paren{B_\theta B_{\ell,0}^{-1} L_{\ell, 0} n} + \log(1/\delta)}{n}.
\]
Applying Markov's inequality to $\max_t \norm{\Delta^{\pi_\star}_{t}(\xi;\hat{\pi}_{\mathsf{TaSIL},0})}$, for a new draw $\xi \sim \calD$, with probability greater than $1 - \delta / 2$,
\[
\max_{0\leq t \leq T-1}  \norm{\Delta^{\pi_\star}_{t}(\xi;\hat{\pi}_{\mathsf{TaSIL},0})} \leq \frac{2}{\delta}\Ex_{\xi}\brac{\max_{0\leq t \leq T-1}  \left\|\Delta^{\pi_\star}_{t}(\xi;\hat{\pi}_{\mathsf{TaSIL},0}) \right\|}.
\]
Thus applying a union bound over the two events, we have with probability greater than $1 - \delta$ that
\begin{equation}\label{eq: ell_0 gen bound}
\max_{0\leq t \leq T-1}  \norm{\Delta^{\pi_\star}_{t}(\xi;\hat{\pi}_{\mathsf{TaSIL},0})} \leq \calO(1) B_{\ell, 0}q \frac{1}{\delta}\frac{\log\paren{B_\theta B_{\ell,0}^{-1} L_{\ell, 0} n} + \log(1/\delta)}{n},
\end{equation}
where we absorb numerical constants into $\calO(1)$. We want $\max_t \norm{\Delta^{\pi_\star}_{t}(\xi;\hat{\pi}_{\mathsf{TaSIL},0})}$ to satisfy the conditions in~\eqref{eq: sub-linear closeness alpha}; that is,
\begin{align*}
    &\max_{0 \leq t \leq T-1} \mu\norm{\Delta_t^{\pi_\star}(\xi; \hat{\pi}_{\mathsf{TaSIL},0})}^{1+r} \leq \alpha, \\
    &\max_{0 \leq t \leq T-1} 2L_\pi \mu \norm{\Delta_t^{\pi_\star}(\xi; \hat{\pi}_{\mathsf{TaSIL},0})}^{1+r} + \norm{\Delta_t^{\pi_\star}(\xi; \hat{\pi}_{\mathsf{TaSIL},0})} \leq \eta.
\end{align*}
For notational convenience, we further require $\max_t \norm{\Delta_t^{\pi_\star}(\xi; \hat{\pi}_{\mathsf{TaSIL},0})} \leq 1$, so that $$\max_t \norm{\Delta_t^{\pi_\star}(\xi; \hat{\pi}_{\mathsf{TaSIL},0})}^{1+r} \leq \max_t \norm{\Delta_t^{\pi_\star}(\xi; \hat{\pi}_{\mathsf{TaSIL},0})}.$$ 
By assumption, since $\alpha/\mu \leq 1$, satisfying the first condition above implies $\max_t \norm{\Delta_t^{\pi_\star}(\xi; \hat{\pi}_{\mathsf{TaSIL},0})} \leq 1$. We observe that for $n \geq \delta^{-1}\log(1/\delta)$ we have $\log{n} \geq 2\log\paren{1/\delta}$, thus it suffices to absorb the $\log(1/\delta)$ term into $\log{n}$. Inserting the generalization bound~\eqref{eq: ell_0 gen bound} and shifting $n$ to the right-hand side of the above conditions, we have the following requirements on $n$:
\begin{align*}
    n \geq \calO(1) \max\bigg\{
    &\paren{\frac{\mu}{\alpha}}^{1/1+r} B_{\ell, 0}q \frac{1}{\delta} \log\paren{B_\theta B_{\ell,0}^{-1} L_{\ell, 0} n}, \\
    & \paren{\frac{L_\pi \mu}{\eta}} B_{\ell, 0}q \frac{1}{\delta} \log\paren{B_\theta B_{\ell,0}^{-1} L_{\ell, 0} n} \bigg\} \\
    =: \calO(1) \max\bigg\{
    &B_{\ell, 0}\kappa_{\alpha} \frac{1}{\delta} \log\paren{B_\theta B_{\ell,0}^{-1} L_{\ell, 0} n},\\
    &B_{\ell, 0}\kappa_{\eta} \frac{1}{\delta} \log\paren{B_\theta B_{\ell,0}^{-1} L_{\ell, 0} n} \bigg\},
\end{align*}
where we define $\kappa_\alpha = q(\mu/\alpha)^{\tfrac{1}{1 + r}}$ and $\kappa_\eta = qL_\pi\mu/\eta$.
Therefore, applying Lemma~\ref{lem: inverting gen bounds} on each of the arguments of the maximum, setting $b = B_{\ell, 0}\kappa_{\alpha}q/\delta$ (respectively $b = B_{\ell, 0}\kappa_{\eta}q/\delta$) and $c = B_{\theta}B_{\ell, 0}^{-1} L_{\ell, 0}$, we get the following sample complexity bounds. For $n$ satisfying
\begin{align*}
    n \geq \calO(1) \max\curly{
    B_{\ell, 0}  \frac{\kappa_\alpha}{\delta}\log\paren{\frac{\kappa_{\alpha}B_\theta  L_{\ell, 0}}{\delta} }, \; B_{\ell, 0}  \frac{\kappa_\eta}{\delta} \log\paren{\frac{\kappa_\eta B_\theta L_{\ell, 0}}{\delta} }},
\end{align*}
we have with probability greater than $1 - \delta$
\begin{equation*}
    \Gamma_T(\xi; \hat{\pi}_{\mathsf{TaSIL},0}) \leq \calO(1)\; \mu \left(B_{\ell, 0}q\frac{1}{\delta} \frac{ \log\paren{B_\theta B_{\ell, 0}^{-1} L_{\ell, 0} n }}{n}\right)^{1+r}.
\end{equation*}
This completes the proof.
\end{proof}

\begin{theorem}[Full version of Theorem~\ref{thm: final slow}]
    Assume that $\pi_\star \in \Pi_{\theta,p}$, and let the assumptions of Theorem~\ref{thm: sup-linear imitation gap} hold for all $\pi \in \Pi_{\theta,p}$. Let Equation~\eqref{eq: sup-linear neighborhood relation} hold with constants $\mu, \alpha > 0$, and without loss of generality let $\big(\frac{\alpha}{\mu})^r p! \leq 2$. Let $\hat{\pi}_{\mathsf{TaSIL},p}$ be an empirical risk minimizer of $\ell^{\pi_\star}_{p}$ over the policy class $\Pi_{\theta,p}$ for initial conditions $\curly{\xi_i} \overset{\mathrm{i.i.d.}}{\sim} \calD^n$.
    Fix a failure probability $\delta \in (0,1)$, and assume
    \begin{align*}
        n \geq \calO(1) \max_{j \leq p} \max\curly{ B_j\frac{\kappa_{\alpha, j}}{\delta} \log\paren{\frac{\kappa_{\alpha, j} B_\theta B_j^{-1} B_X L_j }{\delta} }, \;
         B_j\frac{\kappa_{\eta, j}}{\delta } \log\paren{\frac{\kappa_{\eta, j} B_\theta B_j^{-1} B_X L_j}{\delta} }},
    \end{align*}
    where $\kappa_{\alpha, j} :=  \big(\frac{\mu}{\alpha}\big)^r\frac{p q }{j!}$ and $\kappa_{\eta, j} := \big(\frac{L_{\partial^p \pi}}{(p+1)!}\frac{\mu^{p + 1}}{\paren{j!}^{\frac{p+1}{r}}} + \frac{1}{j!}\big) \frac{pq}{\eta \delta}$. Then with probability at least $1 - \delta$, the imitation gap evaluated on $\xi \sim \calD$ (drawn independently from $\curly{\xi_i}_{i=1}^n$) satisfies
    \begin{align*}
        \Gamma_T(\xi; \hat{\pi}_{\mathsf{TaSIL},p}) \leq \calO(1)\mu \max_{j \leq p} \left(\frac{p}{j! \delta} \frac{B_{j} q \log\paren{B_\theta B_j^{-1} B_X L_j n }}{n}\right)^{1/r}.
    \end{align*}
\end{theorem}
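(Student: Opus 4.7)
The plan is to mirror the strategy used for the rapidly decaying case (Theorem~\ref{thm: final fast}), but with the additional bookkeeping required by having $p+1$ derivative terms in the TaSIL loss and the more intricate two-part condition in~\eqref{eq: sup-linear closeness alpha}--\eqref{eq: sup-linear closeness eta}. First I would invoke Corollary~\ref{cor: TaSIL generalization bound} on the function class $B_{\ell, p}^{-1}(\ell_p^{\pi_\star}\circ \Pi_{\theta, p})$, which by Lemma~\ref{lem: loss params} is $(B_\theta, B_{\ell, p}^{-1} L_{\ell, p}, q)$-Lipschitz. This gives, with probability at least $1-\delta/2$ over the training initial conditions,
\begin{equation*}
\Ex_{\xi}\!\left[\tfrac{1}{p+1}\textstyle\sum_{j=0}^{p} \max_{t} \norm{\partial^j_x \Delta_t^{\pi_\star}(\xi; \hat{\pi}_{\mathsf{TaSIL},p})}\right] \leq \calO(1)\, B_{\ell, p}\, \frac{q \log(B_\theta B_{\ell, p}^{-1} L_{\ell, p} n) + \log(1/\delta)}{n}.
\end{equation*}
Since each term in the sum is non-negative, for each fixed $j \leq p$ the individual expectation $\Ex_\xi[\max_t \norm{\partial^j_x \Delta_t^{\pi_\star}(\xi; \hat{\pi}_{\mathsf{TaSIL},p})}]$ is upper bounded by $(p+1)$ times the right-hand side. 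Crucially, we can sharpen this by noticing that the relevant boundedness constant on the $j$th coordinate is only $B_j$, which will carry through after replacing $B_{\ell, p}$ by $B_j$ (absorbing $p+1$ into $\calO(1)$ up to the factor of $p$ that appears in the statement).

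Next, I would apply Markov's inequality to each of the $p+1$ coordinates with failure probability $\delta/(2(p+1))$, and take a union bound with the generalization event. For a fresh draw $\xi \sim \calD$, this yields with probability at least $1-\delta$,
\begin{equation*}
\max_{0 \leq t \leq T-1} \norm{\partial^j_x \Delta_t^{\pi_\star}(\xi; \hat{\pi}_{\mathsf{TaSIL},p})} \leq \calO(1)\, \frac{p}{\delta}\, B_j\, \frac{q \log(B_\theta B_j^{-1} B_X L_j n)}{n} \quad \text{for all } j = 0, \dots, p,
\end{equation*}
where I absorb the $\log(1/\delta)$ into $\log n$ using the same observation as in Theorem~\ref{thm: final fast} that $\log n \geq 2\log(1/\delta)$ whenever $n \gtrsim \delta^{-1}\log(1/\delta)$.

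Given this coordinate-wise bound, I would next verify the two hypotheses of Theorem~\ref{thm: sup-linear imitation gap}. Plugging the above bound into~\eqref{eq: sup-linear closeness alpha} requires
\begin{equation*}
\mu\!\left(\tfrac{2}{j!}\cdot \calO(1)\tfrac{p}{\delta}B_j \tfrac{q \log(\cdots)}{n}\right)^{1/r} \leq \alpha,
\end{equation*}
which, after solving for $n$ and collecting constants, gives the $\kappa_{\alpha, j}$ threshold stated in the theorem. The second condition~\eqref{eq: sup-linear closeness eta} is slightly more delicate because it combines both a $\norm{\partial^j_x \Delta}^{(p+1)/r}$ term and a linear $\norm{\partial^j_x \Delta}$ term; using $\max_j \norm{\partial^j_x \Delta}$ is bounded via the same inequality, the dominant contribution after solving for $n$ yields the $\kappa_{\eta, j}$ definition of the theorem. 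Invoking Lemma~\ref{lem: inverting gen bounds} with $b \propto B_j \kappa_{\alpha, j}/\delta$ (resp.\ $B_j \kappa_{\eta, j}/\delta$) and $c \propto B_\theta B_j^{-1} B_X L_j$ removes the implicit $\log n$ dependence and produces the stated sample-complexity threshold.

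Once both conditions of Theorem~\ref{thm: sup-linear imitation gap} are verified on the event described above, its conclusion~\eqref{eq: super linear imitation gap} directly yields
\begin{equation*}
\Gamma_T(\xi; \hat{\pi}_{\mathsf{TaSIL},p}) \leq \max_{k, j} \mu \!\left(\tfrac{2}{j!} \norm{\partial^j_x \Delta_k^{\pi_\star}(\xi; \hat{\pi}_{\mathsf{TaSIL},p})}\right)^{1/r},
\end{equation*}
and substituting the coordinate-wise high-probability bound produces the claimed rate. The main obstacle I anticipate is the careful handling of the union-bound / Markov step so that the $(p+1)$-fold sum does not inflate the constants beyond the clean $p/(j!)$ factor stated in the theorem; in particular, choosing whether to apply Markov once to the aggregate loss or separately to each of the $p+1$ derivative terms dictates whether the $p$ in $\kappa_{\alpha, j}$ and $\kappa_{\eta, j}$ appears linearly or logarithmically, and one must also track that the Lipschitz constant in the coordinate-wise generalization bound can be taken as $B_X L_j$ rather than the aggregate $L_{\ell, p}$, which is what allows the $\log(B_\theta B_j^{-1} B_X L_j n)$ factor to appear inside the $\max_j$ rather than outside.
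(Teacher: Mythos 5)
Your overall route is the paper's: obtain per-derivative generalization bounds, apply Markov's inequality per coordinate at level $\delta/(2(p+1))$, union bound over $j = 0,\dots,p$, verify the two conditions~\eqref{eq: sup-linear closeness alpha}--\eqref{eq: sup-linear closeness eta} of Theorem~\ref{thm: sup-linear imitation gap}, and invert the $\log n / n$ thresholds via Lemma~\ref{lem: inverting gen bounds}. However, your first step contains a move that fails as written: you derive per-$j$ bounds from the aggregate Corollary~\ref{cor: TaSIL generalization bound} and then claim the constants can be ``sharpened'' by replacing $B_{\ell,p}$ with $B_j$. Constants coming from the complexity of the aggregate class do not localize to a single coordinate: splitting the aggregate bound gives $\Ex_\xi[\max_t \norm{\partial^j_x \Delta_t^{\pi_\star}}] \leq (p+1)\,\calO(1) B_{\ell,p}\, q\log(B_\theta B_{\ell,p}^{-1}L_{\ell,p} n)/n$, and since $(p+1)B_{\ell,p} = 2\sum_{j'} B_{j'}$, the $j$th coordinate inherits $\sum_{j'}B_{j'}$ (which may dwarf $B_j$) and the log carries the aggregate $L_{\ell,p}$ rather than $B_X L_j$. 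The paper instead bypasses the aggregate bound entirely: it defines $h_j^{\pi_\star}(\xi;\pi) := \max_t \norm{\partial^j_x \Delta_t^{\pi_\star}(\xi;\pi)}$, shows each normalized class $0.5 B_j^{-1}(h_j^{\pi_\star}\circ\Pi_{\theta,p})$ is $(B_\theta, 0.5 B_j^{-1} B_X L_j, q)$-Lipschitz, and applies Theorem~\ref{thm: local rademacher gen bound} separately to each at level $\delta/(2(p+1))$. The subtlety making this legitimate is realizability: $\hat{\pi}_{\mathsf{TaSIL},p}$ minimizes only the \emph{aggregate} loss, but since $\pi_\star \in \Pi_{\theta,p}$ forces the aggregate empirical risk to zero and each $h_j$ is non-negative, $\Ex_n[h_j^{\pi_\star}(\cdot\,;\hat{\pi}_{\mathsf{TaSIL},p})] = 0$ for every $j$, so the $2\Ex_n[g]$ term in Theorem~\ref{thm: local rademacher gen bound} vanishes coordinate-wise. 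Your closing paragraph gestures at the coordinate-wise Lipschitz constant $B_X L_j$, so you see the needed ingredient, but it must replace your first step rather than refine it.

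A second, smaller omission: when verifying~\eqref{eq: sup-linear closeness eta} you invoke a ``dominant contribution'' heuristic, which does not by itself produce the stated $\kappa_{\eta,j}$. The paper's argument is that the hypothesis $\paren{\tfrac{\alpha}{\mu}}^r p! \leq 2$ (which your writeup never uses) ensures that satisfying the $\alpha$-condition forces $\max_t \norm{\partial^j_x\Delta_t^{\pi_\star}} \leq 1$ for all $j \leq p$; since $(p+1)/r \geq 1$, this gives $\norm{\partial^j_x\Delta_t^{\pi_\star}}^{(p+1)/r} \leq \norm{\partial^j_x\Delta_t^{\pi_\star}}$, linearizing~\eqref{eq: sup-linear closeness eta} so that both terms combine into the single summed coefficient $\bigl(\tfrac{L_{\partial^p\pi}}{(p+1)!}\tfrac{\mu^{p+1}}{(j!)^{(p+1)/r}} + \tfrac{1}{j!}\bigr)$ appearing in $\kappa_{\eta,j}$. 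Without this normalization step the condition is a genuine polynomial inequality in the discrepancy and your solve-for-$n$ step does not go through in the stated form.
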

\begin{proof} 
Let us first define the following losses on a specific partial:
\begin{align*}
    h_j^{\pi_\star}(\xi; \pi) := \max_{0\leq t \leq T-1}  \left\|\partial^j_x \Delta^{\pi_\star}_{t}(\xi;\pi) \right\|.
\end{align*}
We observe that by definition, $h_j^{\pi_\star} \circ \Pi_{\theta, p}$ is $2B_j$ bounded, and $h_j^{\pi_\star}$ is $B_X L_j$-Lipschitz with respect to $\Theta$ for $j \leq p$, such that $0.5 B_j^{-1}\paren{h_j^{\pi_\star} \circ \Pi_{\theta, p}}$ is a $(B_\theta, 0.5B_j^{-1}B_X L_j, q)$-Lipschitz loss class. We note that since $\pi_\star \in \Pi_{\theta, p}$, we have for any dataset $\curly{\xi_i} \subset \calX$
\begin{align*}
    \Ex_{n} \brac{\ell^{\pi_\star}_{p}(\cdot \;; \hat{\pi}_{\mathsf{TaSIL},p})} =: \frac{1}{p+1}\sum_{j=0}^{p} \max_{0\leq t \leq T-1}  \left\|\partial^j_x \Delta^{\pi_\star}_{t}(\xi;\pi) \right\| = 0,
\end{align*}
which therefore implies $\Ex_n \brac{h_j^{\pi_\star}(\cdot \;; \hat{\pi}_{\mathsf{TaSIL},p}) } = 0$ for $j \leq p$. We now apply the same proof structure in Theorem~\ref{thm: final fast} to each $0.5 B_j^{-1}\paren{h_j^{\pi_\star} \circ \Pi_{\theta, p}}$, where we have with probability greater than $1 - \tfrac{\delta}{2(p+1)}$ that
\begin{align*}
    \Ex_{\xi}\brac{\max_{0\leq t \leq T-1}  \left\|\partial^j_x \Delta^{\pi_\star}_{t}(\xi;\hat{\pi}_{\mathsf{TaSIL},p}) \right\|} \leq \calO(1) B_{j}\frac{ q \log\paren{B_\theta B_{j}^{-1} B_X L_{j} n} + \log\paren{\frac{2(p+1)}{\delta}}}{n}.
\end{align*}
Applying Markov's inequality at level $\tfrac{\delta}{2(p+1)}$, we get with total probability greater than $1 - \tfrac{\delta}{p+1}$ over a new initial condition $\xi \sim \calD$ that $\hat{\pi}_{\mathsf{TaSIL},p}$ satisfies the generalization bound
\begin{align}\label{eq: partial j gen bound}
    \max_{0\leq t \leq T-1}  \left\|\partial^j_x \Delta^{\pi_\star}_{t}(\xi;\hat{\pi}_{\mathsf{TaSIL},p}) \right\| \leq \calO(1) B_{j}\frac{p+1}{\delta}\frac{ q \log\paren{B_\theta B_{j}^{-1} B_X L_{j} n} + \log\paren{\frac{p+1}{\delta}}}{n}.
\end{align}
For each partial, we want to satisfy the constraints outlined in~\eqref{eq: sup-linear closeness alpha}:
\begin{align}
    \label{eq: partial j constraints}
    \begin{split}
        &\max_{0 \leq t \leq T-1} \max_{0 \leq j \leq p} \mu\paren{\frac{2}{j!}  \norm{\partial^j_x \Delta_t^{\pi_\star}(\xi; \pi)}}^{1/r} \leq \alpha, \\
        &\max_{0 \leq t \leq T-1} \max_{0 \leq j \leq p} \frac{2 L_{\partial^p \pi}\mu^{p + 1}}{(p+1)!}\paren{\frac{2}{j!} \norm{\partial^j_x\Delta_t^{\pi_\star}(\xi; \pi)}}^{\frac{p+1}{r}} + \frac{2}{j!}  \norm{\partial^j_x \Delta_t^{\pi_\star}(\xi; \pi)} \leq \eta,
    \end{split}
\end{align}
By assumption, we have $\big(\frac{\alpha}{\mu}\big)^r p! \leq 2$, and thus the first condition implies $\max_t \norm{\partial^j_x \Delta_t^{\pi_\star}(\xi; \pi)} \leq 1$ for all $j \leq p$; in particular, this conveniently ensures $\norm{\partial^j_x\Delta_t^{\pi_\star}(\xi; \pi)}^{\frac{p+1}{r}} \leq \norm{\partial^j_x\Delta_t^{\pi_\star}(\xi; \pi)}$. Plugging the earlier generalization bound~\eqref{eq: partial j gen bound} into the above constraints and shifting $n$ to the RHS, and observing like earlier we may absorb the $\log(1/\delta)$ term into the $\log{n}$ term, we get:
\begin{align*}
    n \geq \calO(1)\max\bigg\{& \paren{\frac{\mu}{\alpha}}^r \frac{1}{j!} B_{j}\frac{p}{\delta}q \log\paren{B_\theta B_{j}^{-1} L_{j} n}, \\
    &\paren{\frac{L_{\partial^p \pi}}{(p+1)!}\frac{\mu^{p + 1}}{\paren{j!}^{\frac{p+1}{r}}} + \frac{1}{j!}}B_{j}\frac{p}{\delta}q \log\paren{B_\theta B_{j}^{-1} L_{j} n} \bigg\} \\
    =: \calO(1)\max\bigg\{& B_{j}\frac{\kappa_{\alpha, j}}{\delta} \log\paren{B_\theta B_{j}^{-1} L_{j} n}, B_{j}\frac{\kappa_{\eta, j}}{\delta}q \log\paren{B_\theta B_{j}^{-1} L_{j} n} \bigg\},
\end{align*}
where we define $\kappa_{\alpha, j} = \paren{\frac{\mu}{\alpha}}^r \frac{pq}{j!}$, $\kappa_{\eta, j} = \paren{\frac{L_{\partial^p \pi}}{(p+1)!}\frac{\mu^{p + 1}}{\paren{j!}^{\frac{p+1}{r}}} + \frac{1}{j!}} \frac{pq}{\eta \delta}$. Therefore applying Lemma~\ref{lem: inverting gen bounds}, setting $b = B_j \frac{\kappa_{\alpha, j}}{\delta}$ (respectively $b = B_j \frac{\kappa_{\eta, j}}{\delta}$) and $c = B_\theta B_{j}^{-1} L_{j}$, for $n$ satisfying:
\begin{align*}
    n \geq \calO(1)\max\bigg\{& B_{j}\frac{\kappa_{\alpha, j}}{\delta} \log\paren{\frac{\kappa_{\alpha, j} B_\theta L_{j}}{\delta} }, B_{j}\frac{\kappa_{\eta, j}}{\delta} \log\paren{\frac{\kappa_{\eta, j} B_\theta L_{j}}{\delta}} \bigg\},
\end{align*}
we have with probability greater than $1 - \frac{\delta}{p+1}$ that the conditions~\eqref{eq: partial j constraints} are satisfied. To finish the proof, since we have with probability $1 - \frac{\delta}{p+1}$ that each $j$th partial difference satisfies the necessary conditions, we union bound over $0 \leq j \leq p$, such that we take a maximum over $j$ for the sample complexity and the resulting imitation gap. This gets us with probability greater than $1 - \delta$, for $n$ satisfying
\begin{align*}
    n \geq \calO(1) \max_{j \leq p} \max\bigg\{& B_{j}\frac{\kappa_{\alpha, j}}{\delta} \log\paren{\frac{\kappa_{\alpha, j} B_\theta L_{j}}{\delta} }, B_{j}\frac{\kappa_{\eta, j}}{\delta} \log\paren{\frac{\kappa_{\eta, j} B_\theta L_{j}}{\delta}} \bigg\},
\end{align*}
that the following bound on the imitation gap holds
\begin{align*}
    \Gamma_T(\xi; \hat{\pi}_{\mathsf{TaSIL},p}) \leq \calO(1)\max_{j \leq p} \mu \left(\frac{p}{j! \delta} \frac{B_{j} q \log\paren{B_\theta B_j^{-1} B_X L_j n }}{n}\right)^{1/r}.
\end{align*}
This completes the proof.
\end{proof}

\section{Using finite-differencing to approximate derivatives}
\label{appendix: finite differencing}

\subsection{Satisfying Conditions~\eqref{eq: sup-linear closeness alpha} and~\eqref{eq: sup-linear closeness eta} with approximate derivatives}

We recall the closeness conditions on the partials along expert trajectories that guarantee bounds on the imitation gap:
\begin{align*}
    &\max_{0 \leq t \leq T-1} \max_{0 \leq j \leq p} \mu \paren{\frac{2}{j!}  \norm{\partial^j_x \Delta_t^{\pi_\star}(\xi; \pi)}}^{1/r} \leq \alpha  \tag{\ref{eq: sup-linear closeness alpha}}, \\
    &\max_{0 \leq t \leq T-1} \max_{0 \leq j \leq p} \frac{2 L_{\partial^p \pi}\mu^{p + 1}}{(p+1)!}\paren{\frac{2}{j!} \norm{\partial^j_x\Delta_t^{\pi_\star}(\xi; \pi)}}^{\frac{p+1}{r}} + \frac{2}{j!}  \norm{\partial^j_x \Delta_t^{\pi_\star}(\xi; \pi)} \leq \eta. \tag{\ref{eq: sup-linear closeness eta}}
\end{align*}
If we have access to approximate derivatives of the expert $\widehat{\partial^j_x \pi_\star}(x)$ such that
\[
\norm{\widehat{\partial^j_x \pi_\star}(x) - \partial^j_x \pi_\star(x)} \leq b < 1
\]
for all $x \in \R^d$, then it suffices to tighten the constraints by some function of $b$ such that minimizing with respect to the approximate partial derivatives will still result in the deviation from the true derivatives satisfying the requisite bounds. Let us define
\begin{align*}
    \widehat{\partial^j_x \Delta_t^{\pi_\star}}(\xi; \pi) :=  \partial^j_x \pi(\state_t^{\pi_\star}(\xi)) - \widehat{\partial^j_x \pi_\star}(\state_t^{\pi_\star}(\xi)),
\end{align*}
such that
\begin{align*}
    \norm{\partial^j_x \Delta_t^{\pi_\star}(\xi; \pi)} &\leq \norm{ \widehat{\partial^j_x \Delta_t^{\pi_\star}}(\xi; \pi)} + \norm{\widehat{\partial^j_x \pi_\star}(\state_t^{\pi_\star}(\xi)) - \partial^j_x \pi_\star(\state_t^{\pi_\star}(\xi))} \\
    &\leq \norm{ \widehat{\partial^j_x \Delta_t^{\pi_\star}}(\xi; \pi)} + b.
\end{align*}
Therefore, it suffices to match the approximate partial derivatives such that
\begin{align*}
    &\max_{0 \leq t \leq T-1} \max_{0 \leq j \leq p} \mu\paren{\frac{2}{j!}  \norm{ \widehat{\partial^j_x \Delta_t^{\pi_\star}}(\xi; \pi)}}^{1/r} \leq \hat{\alpha}, \\
    &\max_{0 \leq t \leq T-1} \max_{0 \leq j \leq p} \frac{2 L_{\partial^p \pi}\mu^{p + 1}}{(p+1)!}\paren{\frac{2}{j!} \norm{ \widehat{\partial^j_x \Delta_t^{\pi_\star}}(\xi; \pi)}}^{\frac{p+1}{r}} + \frac{2}{j!}  \norm{ \widehat{\partial^j_x \Delta_t^{\pi_\star}}(\xi; \pi)} \leq \hat{\eta},
\end{align*}
where, provided $\norm{ \widehat{\partial^j_x \Delta_t^{\pi_\star}}(\xi; \pi)} < 1$:
\begin{align*}
    \hat{\alpha} := \paren{\alpha^r - \frac{2\mu^r}{j!}b}^{1/r}, \:\:
    \hat{\eta} := \eta - \paren{\frac{2 L_{\partial^p \pi}\mu^{p + 1}}{(p+1)!}\paren{\frac{2}{j!}}^{\frac{p + 1}{r}} + \frac{2}{j!} } b.
\end{align*}

A similar bound holds if we also do not have access to the exact derivatives of the learned policy. In practice, these bounds tell us qualitatively that if a sufficiently precise estimate of the derivatives is used, such as through finite differencing, then the imitation gap bounds in Theorem~\ref{thm: sup-linear imitation gap} still hold.

\subsection{Practical approaches for approximating derivatives}

Minimizing $\sum_{j=1}^k \|\partial_x^j \Delta_t^{\pi_\star}(\xi; \pi)\|$ can be approximated provided $\pi_\star$ can be evaluated at points $\{\state_t(\xi) + \delta_i\}_{i=1}^N$ by minimizing the finite difference loss:
$$\ell_{p, \mathsf{FD}}(\xi; \pi, \{\delta_i\}_{i=1}^{N}) := \max_{1 \leq i \leq N} \|\pi_\star(\state_t(\xi) + \delta_i) - \pi_\star(\state_t(\xi)) - \paren{\pi(\state_t(\xi) + \delta_i) - \pi(\state_t(\xi)}\|,$$
where the $\{\delta_i\}$ are chosen such that the Taylor expansion 
\begin{align*}
     \sum_{j=1}^p \frac{1}{p!}\partial^j_x \Delta_t^{\pi_\star}(\xi; \pi) \cdot \delta_i^{\otimes j} &= \pi_\star(\state_t(\xi) + \delta_i) - \pi_\star(\state_t(\xi)) \\
    &\qquad- \left(\pi(\state_t(\xi) + \delta_i) - \pi(\state_t(\xi))\right) - \frac{R_{p + 1}(\delta_i)}{(p + 1)!}, \: \forall \: 1 \leq i \leq N,
\end{align*}
forms a linearly independent system of equations in the derivative parameters.
Here, $R_{p+1}(\delta_i)$ denotes the Taylor remainder,
which satisfies the inequality
$\|R_{p + 1}(\delta_i)\| \leq 2L_{\partial^p\pi}\|\delta_i\|^{p + 1}$ by Assumption~\ref{assumption: k-differentiable Pi}.

For the case $p = 1$, we can stack the $\{\delta_i\}$ into a matrix $S$, the finite differences into a matrix $M$ and the remainders into a matrix $R$ to write
\begin{align*}
    \partial^j_x \Delta_t^{\pi_\star}(\xi; \pi) S = M - R.
\end{align*}
Provided the $\{\delta_i\}$ are chosen such that $S$ is invertible, the operator norm of $\partial^j_x \Delta_t^{\pi_\star}(\xi; \pi)$ can be upper bounded
\begin{align*}
    \|\partial^j_x \Delta_t^{\pi_\star}(\xi; \pi)\| &= \|M S^{-1} - RS^{-1}\| \\
    &\leq \|S^{-1}\|(\|M\| + \|R\|) \\
    &\leq \|S^{-1}\|(\|M\| + L_{\partial^p\pi}\|S\|^{2}).
\end{align*}
For instance, using a standard basis $S = \varepsilon I$ as the finite difference perturbations yields the following bound on the operator norm:
\begin{align*}
    \|\partial^j_x \Delta_t^{\pi_\star}(\xi; \pi)\| \leq \frac{1}{\varepsilon}\|M\| + \varepsilon L_{\partial^2\pi},
\end{align*}
where $M$ is the stacked error matrix at the finite differences. Therefore by ensuring sufficiently small $\varepsilon$ and finite difference loss, the bound on the Jacobian error can be made arbitrarily small.

Alternatively, if the finite differences $\delta_i$ are sampled from a uniform distribution on a sphere of radius $\varepsilon$ for each evaluation of $\ell_{1,\mathsf{FD}}$ (i.e,\ the expert can be cheaply queried during training),  \citet[Theorem 3.15]{woolfe2008fast} shows that
\begin{align*}
    \|\partial_x^i \Delta_t^{\pi_\star}(\xi; \pi)\| \leq \frac{0.8\sqrt{d}}{\zeta^{1/N}}\left( \frac{1}{\varepsilon} \ell_{p,\mathsf{FD}}(\xi; \pi, \{\delta_i\}_{i=1}^{N}) + \varepsilon L_{\partial^2\pi}\right),
\end{align*}
with probability $1 - \zeta$, where $d$ is the dimensionality of the state space. This suggests that provided the expert can be requeried each iteration, $N \ll d$ finite differencing terms can be used.

\subsection{Experimental results}
\begin{figure}[H]
    \centering
    \includegraphics[width=1.0\linewidth]{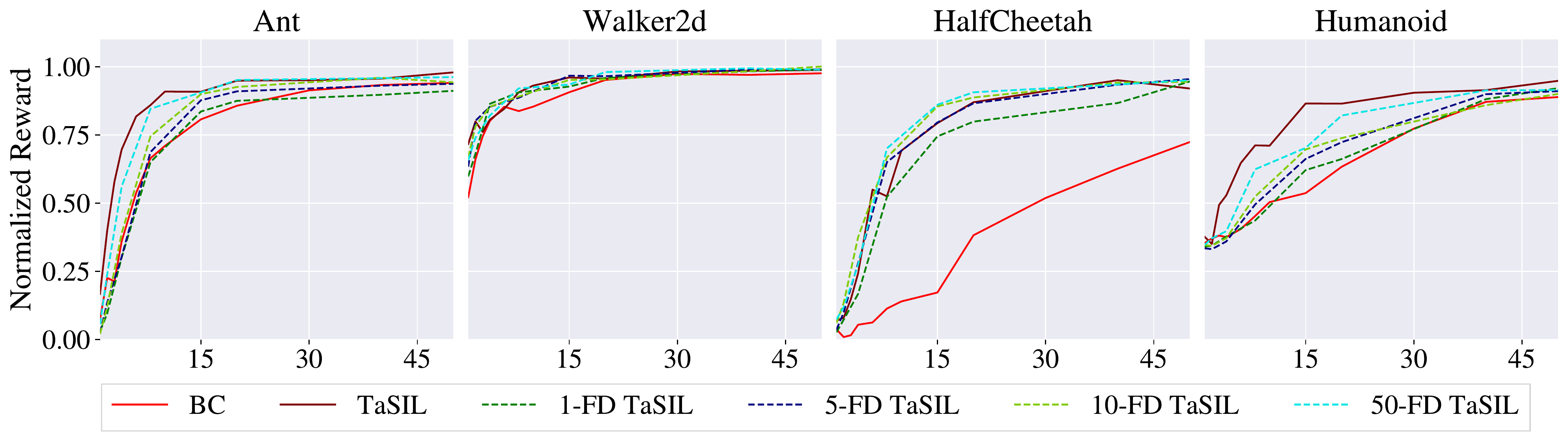}
    \caption{Mean normalized reward for vanilla Behavior Cloning, Behavior Cloning with TaSIL loss, and Behavior Cloning with finite-differencing based TaSIL. The average across 5 random seeds is shown.}
    \label{fig: finite diff}
\end{figure}

We perform several experiments using finite differencing to approximate minimizing the higher order derivatives. Figure \ref{fig: finite diff} shows configurations with $1$, $5$, $10$, and $50$ difference vectors across different MuJoCo environments. The different vectors drawn from a uniform distribution over a sphere of radius $0.01$. The difference vectors were drawn once for each state-action pair and did not change during training. This was done to simulate the effect of getting progressively closer to using the full standard basis with additional finite differencing terms.

For Walker2d and HalfCheetah with a state dimension of $17$, finite difference with a single random vector is sufficient to achieve performance on par with TaSIL using the explicit Jacobians. Humanoid and Ant with higher dimensional observation  spaces (376 and 111 dimensions respectively) also show significant improvements the more finite differences are used.

\section{Additional information for stability experiments}\label{appendix: stability experiment details}

\begin{theorem}\label{thm: diss stability}
For $\eta \in [0, 1)$, the system
\begin{align}\label{eqn: dummy system}
x_{t +1} = \eta x_t + (1 - \eta) \cdot \frac{\gamma(\|h(x_t) + u_t\|)}{\|h(x_t) + u_t\|} (h(x_t) + u_t),
\end{align}
is $\delta$-ISS around $\pi_\star(x) = - h(x)$ with class $\calK$ function $\gamma$.
\end{theorem}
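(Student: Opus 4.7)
The plan is to exploit the key structural observation that under the expert policy $\pi_\star(x) = -h(x)$, the term $h(x_t) + u_t$ collapses to exactly the perturbation $\Delta_t$, which makes the closed-loop dynamics \emph{affine} in the state. Specifically, defining $g(\Delta) := \frac{\gamma(\|\Delta\|)}{\|\Delta\|}\Delta$ for $\Delta \neq 0$ and $g(0) := 0$, substituting $u_t = -h(x_t) + \Delta_t$ into~\eqref{eqn: dummy system} yields the affine recursion
\begin{equation*}
    x_{t+1} = \eta x_t + (1-\eta) g(\Delta_t),
\end{equation*}
where crucially $\|g(\Delta_t)\| = \gamma(\|\Delta_t\|)$ since $\gamma$ is class $\calK$ (hence well-defined and continuous at $0$).

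First, I would unroll this affine recursion explicitly: starting from $x_0 = \xi$, iterating gives
\begin{equation*}
    x_t^{\pi_\star}(\xi; \{\Delta_s\}_{s=0}^{t-1}) = \eta^t \xi + (1-\eta) \sum_{s=0}^{t-1} \eta^{t-1-s} g(\Delta_s).
\end{equation*}
Next, I would apply this representation to two trajectories, one starting at $\xi_1$ with perturbations $\{\Delta_s\}$ and one starting at $\xi_2$ with zero perturbations, and subtract. Because the dependence on both $\xi$ and $\{\Delta_s\}$ is affine, the difference is simply
\begin{equation*}
    x_t^{\pi_\star}(\xi_1; \{\Delta_s\}) - x_t^{\pi_\star}(\xi_2; \{0\}) = \eta^t(\xi_1 - \xi_2) + (1-\eta)\sum_{s=0}^{t-1} \eta^{t-1-s} g(\Delta_s).
\end{equation*}

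Finally, I would apply the triangle inequality, bound $\|g(\Delta_s)\| = \gamma(\|\Delta_s\|) \leq \gamma(\max_{0 \leq k \leq t-1}\|\Delta_k\|)$ (using that $\gamma$ is strictly increasing), and evaluate the resulting geometric sum:
\begin{equation*}
    \|x_t^{\pi_\star}(\xi_1; \{\Delta_s\}) - x_t^{\pi_\star}(\xi_2; \{0\})\| \leq \eta^t \|\xi_1 - \xi_2\| + (1-\eta^t)\, \gamma\!\left(\max_{0 \leq k \leq t-1}\|\Delta_k\|\right).
\end{equation*}
This matches Definition~\ref{def: delta ISS} with the class $\calKL$ function $\beta(r,t) = \eta^t r$ (continuous, class $\calK$ in $r$, decreasing in $t$ since $\eta \in [0,1)$) and the given class $\calK$ function $\gamma$.

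There is no substantive obstacle here; the only subtlety is verifying that $g$ is well-defined and continuous at $\Delta = 0$ (which follows from $\gamma(0) = 0$ and continuity of $\gamma$), and noting that the construction works \emph{globally} (not just $\eta$-locally) because the affine structure means the perturbation signal enters without any state-dependent amplification. The argument is essentially that $\pi_\star$ exactly cancels the nonlinear $h$, reducing the closed loop to a scalar contraction plus an $\ell^\infty$-bounded input whose magnitude is precisely shaped by $\gamma$.
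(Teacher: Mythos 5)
Your proof is correct and follows essentially the same route as the paper's: both reduce the closed loop under $\pi_\star = -h$ to a contraction with rate $\eta$ driven by the $\gamma$-shaped perturbation term, unroll the recursion, and bound the geometric sum by $\gamma\paren{\max_k \norm{\Delta_k}}$ using monotonicity of $\gamma$. The only (cosmetic) difference is that you make the cancellation $h(x_t) + u_t = \Delta_t$ explicit and solve the resulting affine recursion exactly, yielding the marginally tighter factor $(1-\eta^t)$ where the paper instead composes a one-step norm inequality and keeps $\norm{h(x_k)+u_k}$ in its final bound.
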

\begin{proof}
We use the shorthand
$\state_{t}(\xi_1) := \state_{t}(\xi_1, \{u_k\}_{k=0}^{t-1})$
and $\state_{t}(\xi_2) := \state_{t}(\xi_2, \{0\}_{k=0}^{t-1})$.
We can prove this directly using
\begin{align*}
    &\|\state_{t+1}(\xi_1) - \state_{t+1}(\xi_2)\| \\
    &= \left\|\eta(\state_{t}(\xi_1) - \state_{t}(\xi_2)) + (1 - \eta) \cdot \frac{\gamma(\|h(x_t) + u_t\|)}{\|h(x_t) + u_t\|} (h(x_t) + u_t) \right\| \\
    &\leq \eta\|\state_{t}(\xi_1) - \state_{t}(\xi_2)\| + (1 - \eta) \gamma(\|h(x_t) + u_t\|).
\end{align*}
Since $\state_0(\xi_1) = \xi_1$ and $\state_1(\xi_2) = \xi_2$, repeated composition of this upper bound yields
\begin{align*}
    \|\state_{t}(\xi_1) - \state_{t}(\xi_2)\| &\leq \eta^t\|\xi_1 - \xi_2\| + \sum_{k=0}^{t-1}\eta^{t-1-k}(1 - \eta) \gamma(\|h(x_k) + u_k\|) \\
    &\leq \eta^t\|\xi_1 - \xi_2\| + \max_{0\leq k \leq t - 1} \gamma(\|h(x_k) + u_k\|) \\
    &= \eta^t\|\xi_1 - \xi_2\| + \gamma\paren{ \max_{0\leq k \leq t-1} \|h(x_k) + u_k\|}.
\end{align*}
\end{proof}

\paragraph{Experiment details} 
The expert MLP has two hidden layers of 32 units each with GELU activations while the learned policy has three hidden layers of 64 units and GELU activations. A tanh nonlinearity was applied to obtain the final policy output. Expert weights were initialized using Lecun Normal initialization \citet{lecun2012efficient} for the kernels and drawn form a normal distribution with $\Sigma = 0.1I$ for the biases. The learned policy weights are initialized using orthogonal intialization for the kernels and zeros for the bias.

For all stability experiments we train on 20 trajectories of length $T = 100$. Initial states were sampled from a standard normal distribution. The state-action pairs are shuffled independently into batches of size $100$ and weight updates were performed using the Adam optimizer with $\beta_1 = 0.9, \beta_2 = 0.999$, and $\varepsilon = 1 \times 10^{-4}$. The training rate was decayed with a cosine learning rate decay using an initial rate of $\alpha = 1 \times 10^{-3}$. We additionally employed $\ell^2$ weight regularization with $\lambda = 0.01$. All training is run for $4500$ iterations on our internal cluster.

To weight the various derivative terms for the different TaSIL losses we use $\lambda_0 = 1, \lambda_1 = 1$, and $\lambda_2 = 10$.

\section{Additional information for MuJoCo experiments}\label{appendix: mujoco experiment details}

We use a $\beta$-decay-rate of $p = 0.5$ for DAgger and $\alpha = T \mathrm{Tr}[\Sigma_k]$ for DART, the same parameters used by \citet{laskey2017dart} for their Mujoco experiments. For DART, we use an independent sample of $5$ trajectories to update the noise statistics. The same optimization setup from the stability experiments was used, with a batch size of $100$, Adam optimizer with $\beta_1 = 0.9$, $\beta_2 = 0.999$, $\varepsilon = 1 \times 10^{-4}$, cosine learning rate scheduling with an initial learning rate of $1 \times 10^{-3}$ decaying over the entire training duration of 4500 epochs, and $\ell^2$ weight regularization with $\lambda = 0.01$.

We train over 4500 epochs for all experiments with a training and test trajectory length of $T = 300$. All TaSIL losses use $\lambda_0 = 1$. $\lambda_1 = 0.01$ is used for the jacobian term in the $1$-TaSIL loss.

Similar to \citet{laskey2017dart}, DAgger rollout policies and DART noise statistics were updated sparsely rather than after every trajectory. We performed updates after $1, 5, 20, $ and $30$ trajectories.

\end{document}